\documentclass{article}

 \usepackage[preprint]{neurips_2026}

\usepackage[utf8]{inputenc} 
\usepackage[T1]{fontenc}    
\usepackage{hyperref}       
\usepackage{url}            
\usepackage{booktabs}       
\usepackage{amsfonts}       
\usepackage{nicefrac}       
\usepackage{microtype}      
\usepackage{xcolor}         
\usepackage{mathtools}
\usepackage{amsthm}
\usepackage{dsfont}
\usepackage{thmtools}
\usepackage{thm-restate}
\usepackage{amssymb}
\usepackage{amsfonts} 
\usepackage{subcaption}
\usepackage{enumitem}

\usepackage{todonotes}

\def\x#1{\hat x_{#1}}
\def\y#1{\hat y_{#1}}
\def\Dh#1{\Delta^h_{#1}}
\def\Dg#1{\Delta^g_{#1}}
\def\tildex#1{\tilde x_{#1}}
\def\tildey#1{\y{#1}}

\def\Rx{\mathbb R^{d_x}}
\def\Ry{\mathbb R^{d_y}}
\def\Reals{ \mathbb{R}}

\def\domX{\mathcal{X}}
\def\domY{\mathcal{Y}}

\def\F#1{f(x_{#1},y_{#1},\xi_{#1 +1} )}
\def\G#1{g(x_{#1},y_{#1},\xi_{#1 +1} )}
\def\H#1{h(y_{#1})}

\def\vf#1{v_f(x_{#1},y_{#1},\xi_{{#1}+1})}
\def\poissf#1#2{v_f(x_{#1},y_{#1},\xi_{{#2}})}

\def\vg#1{v_g(x_{#1},y_{#1},\xi_{{#1}+1})}
\def\poissg#1#2{v_g(x_{#1},y_{#1},\xi_{{#2}})}

\def\expvf#1#2#3{\mathbb{E}_{\xi' \sim K_{#1}(\xi_{{#2}}, \cdot)}[v_f(x_{#3},y_{#3},\xi')]}
\def\expvg#1#2#3{\mathbb{E}_{\xi' \sim K_{#1}(\xi_{{#2}}, \cdot)}[v_g(x_{#3},y_{#3},\xi')]}

\def\barF#1{\overline f(x_{#1},y_{#1})}
\def\barG#1{\overline g(x_{#1},y_{#1})}
\def\GH#1{\overline g(\H{#1},y_{#1})}

\def\nablafx{\nabla_{x} \overline f}

\def\nablagy{\nabla_{y} \overline g}

\def\boundF{\mathrm{M}_f}
\def\boundG{\mathrm{M}_g}
\def\boundH{\mathrm{M}_h}
\def\boundXi{\mathrm{M}_\xi}
\def\boundXY{\mathrm{M}_{\mathcal S}}

\def\muf{\mu_f}
\def\mug{\mu_g}
\def\Lf {L_f}
\def\Lg {L_g}
\def\Lh {L_h}

\def\Lvf {L_{v_f}}
\def\Lvg {L_{v_g}}
\def\Lk {L_{K}}

\def\Kernel#1{K_{(x_{#1},y_{#1})}}
\def\Stationary#1{\pi_{(x_{#1},y_{#1})}}

\newcommand\proba[1]{\mathbb{P}[#1]}
\def\Exp{\mathbb{E}}
\def\opnorm#1{\|#1\|_{\text{op}}}
\def\indicator#1{\mathds{1} _{\{#1\}}}
\def\Expect#1{\Exp \left [ #1 \right ]} 
\def\brackets#1{\left [ #1 \right ]}

\def\parentheses#1{\left ( #1\right ) }
\def\abs#1{\left | #1\right | }
\def\vector#1#2{\brackets{\begin{array}{c} 
         #1 \\
         #2
    \end{array}}}
\def\blockMatrix#1#2#3#4{\brackets{\begin{array}{cc}
         #1 & #2 \\
         #3 & #4
    \end{array}}}

\def\inner#1#2{\left \langle #1  , #2 \right\rangle}

\def\norm#1{\left \| #1 \right\|}
\def\opnorm#1{\norm{#1}_{\text{op}}}
\def\inducednorm#1#2{\left \| #1 \right\|_{#2}}
\def\LyapF{\mathrm L}
\def\LyapG{\mathrm G}
\def\LyapFNorm#1{\inducednorm{#1}{\LyapF}}
\def\LyapGNorm#1{\inducednorm{#1}{\LyapG}}

\def\jointbias#1{\brackets{\begin{array}{c}
         \Expect{\tildex{#1}}  \\
         \Expect{\tildey{#1}} 
    \end{array}}}
\def\bias#1{\brackets{\begin{array}{c}
         b^x_{#1}  \\
         b^y_{#1}
    \end{array}}}

\def\ricattibias#1{\brackets{\begin{array}{c}
        r^x_{#1}  \\
        r^y_{#1}
\end{array}}}

\def\sylvesterbias#1{\brackets{\begin{array}{c}
        s^x_{#1}  \\
        s^y_{#1}
\end{array}}}
\def\dynamicsmatrix{\brackets{\begin{array}{cc}
        I-\alpha J_{xx} &  -\alpha J_{xy} \\
        -\beta J_{yx} & I-\beta J_{yy}
    \end{array}}}

\def\jxxab{J_{xx}^{\alpha, \beta}}
\def\jyyab{J_{yy}^{\alpha, \beta}}

\def\scondL{\sqrt{\kappa(\LyapF)}}
\def\scondG{\sqrt{\kappa(\LyapG)}}

\def\epsStar#1{\varepsilon_{#1}^*}
\def\Rbound{\radiusU + \opnorm{U_0}}

\def\J#1{J_{#1}}
\def\eps{\varepsilon}

\def\Lsim#1{\LyapF^{-1/2} #1 \LyapF^{-1/2}}
\def\LsimSym#1{\LyapF^{-1/2} #1 \LyapF^{-1/2}}

\def\radiusX{R_{syl}}
\def\radiusU{R_{ric}}
\def\radiusUfixed{{1\over 4 \opnorm{G} \scondG M_J }}

\def\ballU{B_{\radiusU}^{U_0}}
\def\ballX{B_{\varepsilon \radiusX}}

\def\MSEx{\mathrm{MSE}^{x}_\infty(\alpha, \beta)}
\def\MSEy{\mathrm{MSE}^{y}_\infty(\alpha, \beta)}

\def\boundMSEFastPoisson{
\abs{\alpha  \Expect{ \sum_{j=0}^{k} (1-\alpha\muf)^{k-j} \inner{\x{j}}{\psi_j}}} \leq &\alpha \parentheses{  \boundXi \parentheses{{3 - \alpha \muf \over 1 - \alpha \muf} + {\newcontent{\boundF(1 + |\Xi| \boundXY \Lk )} \over \muf}} + {\Lvf\boundF (\boundXY +  \boundH)\over \muf} }   \\
& + \beta \parentheses{{\boundXi \boundH + \Lvf \boundG (\boundXY +  \boundH) + \newcontent{|\Xi| \boundXY \Lk \boundXi \boundG} \over \muf} }
}

\def\boundMSESlowPoisson{        \abs{\beta  \Expect{ \sum_{j=0}^{k} (1-\beta\mug)^{k-j} \inner{\y{j}}{\phi_j}}} \leq&  \beta \parentheses{ {(\Lvg + \newcontent{\boundXi  \Lk  |\Xi|}) \boundXY \boundG  \over \mug} +   \boundXi \parentheses{{3 - \beta \mug \over 1 - \beta \mug} + {\boundG \over \mug} }} \\
        & + \alpha {(\Lvg + \newcontent{\boundXi  \Lk  |\Xi|}) \boundXY \boundF\over \mug}}

\newtheorem{assumption}{A.}
\newtheorem{assumptionB}{B.}
\newtheorem{lemma}{Lemma}
\newtheorem{theorem}{Theorem}

\newtheorem{proof*}{Proof}

\newcommand\newcontent[1]{#1}

\title{The Bias of Nonlinear Two-Time-scale Stochastic Approximation under Constant Step-Sizes}

\author{%
  Djamel Rassem Lamouri \qquad Dorian Baudry \qquad Nicolas Gast\\~\\
  Univ. Grenoble Alpes, CNRS, Inria\\
  Grenoble INP, LIG, 38000 Grenoble\\
  France
}

\begin{document}

\maketitle

\begin{abstract}
Two-timescale stochastic approximation (TTSA) is a fundamental tool for analyzing coupled iterative algorithms in reinforcement learning, optimization, and stochastic control. However, finite-time guarantees for nonlinear two-timescale schemes remain difficult to obtain, especially under constant step-sizes. In this paper, we study nonlinear TTSA with step-sizes $\alpha\gg\beta$. Under standard stability, regularity, and Markovian noise assumptions, we upper bound the mean-squared error and the bias of both iterates around their limiting equilibria. Our bounds scale as $O(\alpha+\beta^2/\alpha^2)$, which we prove to be tight when $\beta\le\alpha^{3/2}$. The analysis separates the contributions of initial conditions, fast-timescale tracking error, Markovian dependence, and timescale coupling, thereby clarifying the origin of the $\beta^2/\alpha^2$ term. Our results reveal qualitative differences from the linear TTSA setting previously studied, showing that nonlinear dynamics introduce additional finite-time effects that are absent in the linear case. 

\end{abstract}

\section{Introduction}

A broad family of machine learning and optimization algorithms can be formulated within the framework of Stochastic Approximation (SA) \cite{Borkar2008StochasticAA, Kushner1997StochasticAA, Benveniste1990AdaptiveAA}. In the two-timescale setting \cite{Konda_2004, Mokkadem_2006}, two sequences of parameter vectors, $\{x_k\}_{k\geq 0}$ and $\{y_k\}_{k\geq 0}$, are generated from the following recursive update rule: 
\begin{equation}\label{eq: TTSA update}
\begin{aligned}
    x_{k+1} &= x_k - \alpha_k \F{k} \\
    y_{k+1} &= y_k - \beta_k \G{k},
\end{aligned}
\end{equation}
where, for a measurable space $(\Xi,\mathcal{S})$, the $\mathcal{S}$-measurable functions $f : \Rx \times \Ry \times \Xi \to \Rx$ and $g :  \Rx \times \Ry \times \Xi \to \Ry$ are the noisy dynamics of the system, $\{\xi_{k}\}_{k\geq 1}$ is the source of noise modeled as a stochastic process defined on $\Xi$ and $\{\alpha_k\}_{k\geq 0}$, $\{\beta_k\}_{k \geq 0}$ are positive sequences representing the step-sizes.
Some notable examples of algorithms that naturally exhibit this scheme of recursive updates include AdaGrad \cite{JMLR:v12:duchi11a}, Generative adversarial networks \cite{goodfellow2014generativeadversarialnetworks}, Bi-level Optimization \cite{ji2021bilevel} and Reinforcement Learning \cite{NIPS2008_e0c64119, NIPS1999_6449f44a}.

To study the dynamics of SA, a classical tool is to use the ODE method \cite{Borkar2000TheOM}. It consists in viewing \eqref{eq: TTSA update} as a noisy Euler discretization of the singularly perturbed ODE
\begin{equation}\label{eq: ODE TTSA}
    \begin{aligned}
        \dot x(t) &= - \frac{1}{\epsilon}\overline f (x(t),y(t)) \\
        \dot y(t) &= -  \overline g(x(t), y(t)) \;,
    \end{aligned}
\end{equation}
with $\epsilon = \beta_k/\alpha_k \approx 0$ \cite[Chapter~6]{Borkar2008StochasticAA}. The small parameter \(\epsilon\) induces a separation between the two timescales: The variable \(x\) evolves on the fast timescale, whereas \(y\) evolves slowly and is therefore seen by the \(x\)-dynamics as quasi-static. 
This motivates the standard decoupling of the two ODEs: for each fixed \(y\), the fast dynamics are described by $
    \dot x(t) = -\overline f(x(t),y)$.
Under suitable stability assumptions, the trajectory solutions of this fast ODE converge to the asymptotically stable equilibrium \(h(y)\) satisfying
    $\overline f(h(y),y)=0$.
Once this tracking property is established, the slow ODE is $\dot y(t) = -\overline g(h(y(t)),y(t))$. Under suitable assumptions, the solution of the ODE converges to a point $y^*$. The ODE method consists in assuming that $x_k\approx h(y_k)$ and $y_k\approx y^*$.

To guarantee the accuracy of the ODE method, it is therefore important to characterize the error that is made when approximating \eqref{eq: TTSA update} by the solutions of the ODE \eqref{eq: ODE TTSA}.  The classical convergence results for SA and TTSA typically rely on diminishing step-sizes. 
For two time-scale schemes, one usually assumes the step-sizes to have infinite  $\ell^1$-norm, finite $\ell^2$-norm and that $\lim_{k\to \infty} {\frac{\beta_k}{\alpha_k}}=0$ which ensures the time-scale separation of the associated limiting ODEs \cite{Borkar2008StochasticAA}. 
In many applications, however, constant step-sizes \(\alpha_k\equiv \alpha\) and \(\beta_k\equiv \beta\) are preferred, notably because they allow an exponential rate of forgetting initial conditions \cite{needell2015stochasticgradientdescentweighted}. 
In this regime, one should not expect exact convergence to the equilibrium; instead, the iterates typically approach a neighborhood whose size depends primarily on the step-sizes and the noise.

This motivates a finite-time analysis of the constant-step-size TTSA under Markovian noise. 
Previous works have quantified the bias induced by constant step-sizes for nonlinear single time-scale SA \cite{allmeier2024computingbiasconstantstepstochastic} and for linear TTSA \cite{huo2023biasextrapolationmarkovianlinear}. 
The nonlinear TTSA setting remains more delicate, because the fast tracking error and the slow dynamics interact through the nonlinear manifold \(x=h(y)\). 
Our goal is to characterize this interaction quantitatively. We will measure the performance of the algorithm through two quantities: the \emph{mean-squared error} (MSE), which is defined as 
\begin{equation*}
    \mathrm{MSE}^x_k \triangleq \Expect{\norm{x_k-h(y_k)}^2}\qquad \text{ and } \qquad\mathrm{MSE}^y_k \triangleq \Expect{\norm{y_k-y^*}^2},
\end{equation*}
and the \emph{bias} of the iterates around their limiting equilibrium, which we formally define as
\begin{equation*}
    \mathrm{Bias}^x_k \triangleq \Expect{x_k} - h(y^*) \qquad\text{ and }\qquad\mathrm{Bias}^y_k\triangleq\Expect{y_k}-y^*.
\end{equation*}

Our goal in this paper is to provide a finite-time analysis of nonlinear two time-scale stochastic approximation with constant step-sizes under a quite general Markovian noise. One original feature of our model is that we allow the kernel of the noise process $\xi_k$ to depend on the iterates $(x_k, y_k)$. This is particularly relevant for applications to reinforcement learning where the exploration policy might depend on the current values of the parameters. 


\textbf{Contributions.}
Our first main result, Theorem~\ref{thm : mse}, proves that the mean-squared errors of both time-scale components are bounded, up to exponentially decaying initialization terms, by $O\!\left(\alpha+\beta+{\beta^2}/{\alpha^2}\right)$, which reduces to \(O(\alpha+\beta^2/\alpha^2)\) when \(\beta\leq\alpha\).  The proof of Theorem~\ref{thm : mse} combines a contraction argument for both iterates $x_k$ and $y_k$ with a decomposition of the Markovian noise based on the problem-specific Poisson equation which allows us to handle the dependencies of the transition kernel on the model's parameters. 
The bound we prove separates the exponentially vanishing contribution of the initialization from the persistent error induced by constant step-sizes. 
The resulting recursion makes explicit the different sources of error: the intrinsic fluctuations of the fast iterate, the slow motion of the quasi-stationary target \(h(y_k)\), and the bias induced by the noise. 
This extends the approaches developed for nonlinear single time-scale SA \cite{allmeier2024computingbiasconstantstepstochastic} and nonlinear TTSA with diminishing step-sizes \cite{doan2021finitetimeconvergenceratesnonlinear} to the constant-step-size non-linear TTSA regime. 
When $\beta\le\alpha^{3/2}$, our MSE bounds are of order $O(\alpha)$ for both the fast and the slow iterate. In Theorem~\ref{thm : lower bound MSE}, we show that this bound cannot be improved by exhibiting an example whose MSE is of the order $\Omega(\alpha)$. This shows that the non-linear TTSA is significantly different from the linear TTSA for which the MSE of the slow iterate was shown to be $O(\beta+\alpha^2)$ in \cite{kwon2024twotimescalelinearstochasticapproximation}.


Our second main result, Theorem~\ref{thm : bias}, establishes matching-order bounds on the bias of the fast and slow iterates. 
The bias analysis is substantially more delicate than the MSE analysis because the analysis of the first-order dynamics of the two bias components remains coupled. 
We handle this difficulty by linearizing the nonlinear drift around the limiting equilibrium and then applying a sequence of coordinate transformations that separates the dominant fast and slow modes. 
In these transformed coordinates, the bias satisfies a recursion formula, which leads to upper bounds involving the Markovian-noise residuals and the MSE bounds from Theorem~\ref{thm : mse}. 
Transforming back to the original variables yields bias bounds of order $O\!\left(\alpha+{\beta^2}/{\alpha^2}\right)$. When $\beta\le\alpha^{3/2}$, this again reduces to \(O(\alpha)\) which we prove to be tight in Theorem~\ref{thm : lower bound}. 

\textbf{Outline.}
The remainder of the paper is organized as follows. In  
Section~\ref{sec::related_work}, we detail related work on single and two time-scale stochastic approximation, focusing on finite-time guarantees obtained with constant step-sizes under Markovian noise. 
Section~\ref{sec::assumptions} introduces the nonlinear TTSA model and states the assumptions used throughout the paper. 
Section~\ref{sec::MSE} presents Theorem~\ref{thm : mse}, a finite-time MSE bound, together with a proof overview and a discussion of tightness. 
Section~\ref{sec::bias} presents Theorem~\ref{thm : bias}, a finite-time bias bound, and outlines the main ideas behind its proof. 
Finally, Section~\ref{sec::exp} illustrates the theoretical bounds by creating an example of nonlinear TTSA showing how the term $\beta^2 /\alpha^2$ persists in the bounds of our MSE.



\section{Related Work}\label{sec::related_work}

The study of stochastic approximation (SA) algorithms traces back to the seminal work of \cite{Robbins1951ASA}, which introduced a framework for finding a root $x$ that satisfies
$M(x) = b$ for a given $b \in \Reals$. Here, when referring to SA, we mean \emph{one} time-scale stochastic approximation, where only the variable $x$ is present (as opposed to to TTSA where two variables are present and evolve at different time-scales $\beta\ll\alpha$).  In this classical setting, the monotonic function $M : \Reals \rightarrow \Reals$ is assumed to be unknown, requiring the algorithm to rely strictly on noisy observations. A pivotal development in the theoretical understanding of SA is through the ordinary differential equation (ODE) method \cite{Borkar2000TheOM}. This technique characterizes the asymptotic convergence properties of discrete SA updates by analyzing their continuous-time limit, represented by the analogous ODE $\dot{x} = -\bar{f}(x)$. The ODE method is the method of choice for analyzing SA dynamics, with its underlying theory and applications extensively detailed across numerous foundational texts \cite{Borkar2008StochasticAA, Kushner1997StochasticAA, Benveniste1990AdaptiveAA}.

\textbf{Constant step-size SA.} A line of work has been developed around the constant step-size assumption in SA due to practical advantages in algorithm design. Starting with Linear SA (LSA), \cite{lakshminarayanan2017linearstochasticapproximationconstant} have shown that the MSE of the Polyak-Ruppert (PR) averages \cite{polyak_judistky_92} decreases with order $O(\frac{1}{k})$ under i.i.d noise. Previously, in \cite{bach2013nonstronglyconvexsmoothstochasticapproximation} averaging was used on non-strongly convex SA to achieve the same rate. Under the same settings \cite{durmus2021tighthighprobabilitybounds} 
proved that with probability at least $1-\delta$, $|x_k -x^*|$ is bounded by $O(\sqrt{\alpha \log(1/\delta)})$, where $x^*$ denotes the equilibrium of $x$. Later, the work of  \cite{durmus2023finitetimehighprobabilityboundspolyakruppert} generalized the high probability bounds of the latter result to PR averages under Markovian noise. In \cite{huo2023biasextrapolationmarkovianlinear}, it is shown that LSA under constant step size and Markovian noise is characterized by a bias of order $O(\alpha)$ that can be improved to higher orders by applying Richardson-Romberg extrapolation. Switching to non-linear SA, \cite{chen2022finitesampleanalysisnonlinearstochastic} showed an MSE of order $O(\alpha \log(1 / \alpha))$. This bound is improved in \cite{allmeier2024computingbiasconstantstepstochastic} to $O(\alpha)$, with a characterization of the bias vector by $\alpha V + O(\alpha ^2)$, where $V$ is a solution vector of a problem-specific Lyapunov equation. 

\textbf{Linear TTSA.} In the linear regime, the foundational work \cite{Konda_2004} established the asymptotic variance and normality of the iterates under i.i.d. noise and decreasing step-sizes. Subsequent research shifted toward finite-time analyses. In \cite{kaledin2020finite}, the MSE of the fast iterate was shown to be of order $O(\alpha_k)$ and that of the slow iterate $O(\beta_k)$ for step-sizes satisfying $\alpha_k =O (k^{-v})$ for some $v<1$ and $\beta_k = O(k^{-1})$. A convergence rate of $O(1 / k^{2/3})$ under Markovian noise was derived in \cite{doan2020finitetimeanalysisrestartingscheme}, which was later tightened to $O(1/k)$ in \cite{haque2025tightfinitetimebounds}. The constant step-sizes setting was studied in \cite{kwon2024twotimescalelinearstochasticapproximation},  which established an upper bound of order $O(\alpha)$ and $O(\beta + \alpha^2)$ for the fast and slow iterates, respectively. They further proved upper bounds of order $O(\alpha + \beta)$ for the bias of both iterates. Our results indicate that these bounds are specific to this setting, since they degrade after removing the linearity assumption.

\textbf{Nonlinear TTSA.} Parallel advancements have been made in non-linear TTSA. Extending asymptotic normality to this setting, \cite{Mokkadem_2006} proved a Central Limit Theorem (CLT) for the iterates, a property recently expanded upon by \cite{hu2024centrallimittheoremtwotimescale} under decreasing step-sizes. Regarding finite-time guarantees, \cite{doan2021finitetimeconvergenceratesnonlinear, doan2020finitetimeanalysisrestartingscheme} achieved an MSE rate of $O(1/k^{2/3})$ under both i.i.d. and Markovian noise. More recently, \cite{chandak2025finitetimeboundstwotimescalestochastic} analyzed contractive non-linear TTSA under Markovian noise, demonstrating that the MSE for both iterates is bounded by $O(\alpha_k + \beta_k^2 / \alpha_k^2)$. All of those works assume decreasing step-sizes.

The remaining gap, which we address in this work, is to obtain finite-time bounds on both the MSE and the bias of nonlinear TTSA with constant step-sizes, under Markovian noise.



\section{Assumptions}\label{sec::assumptions}
First, to guarantee algorithmic stability, we assume that the algorithm's iterates remain within a compact set. This is often enforced in practice via explicit projections \cite[Chapter 5]{Kushner1997StochasticAA}, \cite{NIPS2009_3a15c7d0}. 
\begin{assumption}\label{assumption: Boundedness}
    There exist two compact sets $\domX \subset  \Rx$ and $\domY \subset \Ry$ such that for all $k\geq 0$, $x_k\in\domX$ and $y_k\in\domY$ a.s. 
\end{assumption}
Our second assumption concerns the stochastic process $(\xi_k)_{k\geq 1}$, which we assume to be a \emph{positive recurrent Markov chain}. This allows us to account for temporally correlated data, in contrast with the classical i.i.d.\ or martingale-difference noise models, and is particularly relevant for applications such as reinforcement learning. The price of this added flexibility is that the analysis must handle the bias and dependence induced by the Markovian dynamics.  
\begin{assumption}\label{assumption: All Kernel Properties}
    $(\xi_k)_{k\geq 1}$ evolves as a Markov chain on a finite state space $\Xi$ with a transition kernel \newcontent{$\Kernel{k} $} that is allowed to depend on $x_k,y_k$, that is: for every $\xi',\xi\in \Xi$ and $x\in \domX, y\in \domY$ we have: 
    \newcontent{
    \begin{align*}
    \mathbb{P}(\xi_{k+1} = \xi' | \xi_k = \xi, \mathcal{F}_k) &=\mathbb{P}(\xi_{k+1} = \xi' | \xi_k = \xi, x_k=x,y_k=y)\triangleq \Kernel{}(\xi,\xi'),
    \end{align*}
    }
    where $\{\mathcal F_k\}_{k\geq 1}$ is the natural filtration of the process $(x_k,y_k,\xi_k)_{k\geq1}$.  We further assume that the kernel \newcontent{$\Kernel{}$} corresponds to a unichain Markov chain for all $(x,y)\in\Rx\times\Ry$. Also, we assume that for all $\xi, \xi' \in \Xi$ the quantity \newcontent{$K_{(.,.)}(\xi,\xi')$} is $\Lk$-Lipschitz continuous:
    \begin{align*}
        \newcontent{\abs{\Kernel{1}(\xi,\xi') - \Kernel{2}(\xi,\xi')}} &\leq \Lk (\|x_1 - x_2\|+\|y_1 - y_2\|).
    \end{align*}
\end{assumption}
The assumption of having a Markovian noise is standard in stochastic approximation and covers a broad class of algorithms
\cite{kwon2024twotimescalelinearstochasticapproximation,hu2024centrallimittheoremtwotimescale,doan2021finitetimeconvergenceratesnonlinear,chen2022finitesampleanalysisnonlinearstochastic}. The fact that we allow the kernel at time $k$ to depend on $(x_k,y_k)$ is more original and is relevant in reinforcement learning algorithms where the exploration policy might depend on the current parameters \cite{allmeier2024computingbiasconstantstepstochastic}. Assumption A.\ref{assumption: All Kernel Properties} guarantees the existence of a unique stationary distribution \newcontent{$\Stationary{}$} for each $(x,y)$, thereby allowing us to define the averaged functions $\bar f$ and $\bar g$ by integrating $f$ and $g$ with respect to $\pi$:
\begin{align}
    \label{eq:def barF barG}
    \barF{} &\triangleq \underset{\xi\sim\Stationary{} \hspace{0.7cm}}{\Exp[f(x,y,\xi) ]} = \sum_{\xi\in\Xi} \Stationary{}(\xi) f(x,y,\xi)\quad\text{and}\quad
    \barG{} \triangleq \underset{\xi\sim\Stationary{} \hspace{0.7cm}}{\Exp[g(x,y,\xi) ]}.
\end{align}
Furthermore, the assumption on Lipschitzness of $K$ implies that the stationary distribution \newcontent{$\Stationary{}$} is also Lipschitz-continuous in $(x,y)$, see \emph{e.g.} \cite{allmeier2024computingbiasconstantstepstochastic}.
Third, to ensure the existence and uniqueness of the limiting trajectories of the equivalent ODE, we need to assume
Lipschitzness of $f,g$, which, combined with Lipschitzness of $K$ and the definitions of $\bar{f}$ and $\bar{g}$ of \eqref{eq:def barF barG} implies that the functions $\bar{f}$ and $\bar{g}$ are also Lipschitz continuous which is a fundamental prerequisite of the ODE method, ensuring the existence and uniqueness of solutions of the ODE. We start with our assumptions on $f$:
\begin{assumption}\label{assumption: Lip and Mono of f}
    For all $\xi \in \Xi$, the function $f(., ., \xi)$ is $L_f$-Lipschitz continuous:
    \begin{align*}
        \norm{f(x_1, y_1, \xi) - f(x_2, y_2,\xi) } &\leq \Lf (\|x_1 - x_2\|+\|y_1 - y_2\|)
    \end{align*}
 In addition, there exists $\muf > 0$ such that $\overline f$ is $\muf$-strongly monotonic in its first argument, \emph{i.e.}, for every $x_1,x_2 \in \domX$ and $y \in \domY$ we have
 \begin{align*}
    \langle \overline f(x_1,y)-\overline f(x_2,y),  x_1-x_2 \rangle  \geq \muf \|x_1-x_2\|^2.
    \end{align*}    
\end{assumption}
This strong monotonicity assumption guarantees that for a fixed $y$, the ODE $\dot{x}(t)= - \bar{f}(x(t), y)$ possesses a unique, exponentially stable equilibrium, which we denote by $h(y)$. The equilibrium map $h$ is $\frac{\Lf}{\muf}$-Lipschitz from Lipschitzness and strong monotonicity of $f$: 
\begin{align*}
    \muf \norm{h(y_1) - h(y_2)} ^2 &\leq \inner{\bar f (h(y_1), y_1)- \bar f (h(y_2), y_1)}{h(y_1)- h(y_2)}\\
    & = \inner{\bar f (h(y_2), y_2)- \bar f (h(y_2), y_1)}{h(y_1)- h(y_2)} \\
    &\leq \norm{\bar f (h(y_2), y_2)- \bar f (h(y_2), y_1)}\norm{h(y_1)- h(y_2)}\\
    & \leq \Lf \norm{y_2 - y_1}\norm{h(y_1)- h(y_2)}
\end{align*}
The result follows when $h(y_1)\neq h(y_2)$, we denote $\Lh \triangleq \frac{\Lf}{\muf}$. Moving to the slow iterate, we need Lipschitzness of $g$: 
\begin{assumption}\label{assumption: Lip and Mono of g}
    For all $\xi \in \Xi$, the function $g(., ., \xi)$ is $\Lg$-Lipschitz continuous:
    \begin{align*}
        \|g(x_1, y_1, \xi) - g(x_2, y_2,\xi) \| &\leq \Lg (\|x_1 - x_2\|+\|y_1 - y_2\|)
    \end{align*}
In addition, for $\mug > 0$, $\overline g$ is 1-point $\mug$-strongly monotonic with respect to a point $y^* \in \domY$ \newcontent{that satisfies $\bar{g}(h(y^*),y^*)=0$}: for all $y \in \domY$:
    \begin{align*}
        \langle \GH{} ,y-y^*\rangle \geq \mug \|y-y^*\|^2.
    \end{align*}    
\end{assumption}
This assumption implies that the function $\bar{g}(h(y),y)$ is also Lipschitz continuous and in particular implies that the ODE $\dot{y}=-\bar{g}(h(y),y)$ has a unique solution. To guarantee that this ODE has a unique stationary point $y^*$, we added the 1-point strong monotonicity at $y^*$, for instance, as in \cite{doan2021nonlineartwotimescalestochasticapproximation, chen2025convergence}. It can be viewed as a generalization of the non-linear case of having Hurwitz matrices in the linear setting \cite{kwon2024twotimescalelinearstochasticapproximation, huo2023biasextrapolationmarkovianlinear, durmus2021tighthighprobabilitybounds}, which are the conditions required for the ODE trajectories to exhibit global convergence.
\paragraph{Notation} Let $x\in \mathbb R ^d$, $\|x\|$ is the Euclidean norm $\|x\| = (\sum_{i=1}^d x_i^2)^{1/2}$. For $y \in \mathbb R ^d$ we denote the inner product between $x$ and $y$ by $\langle x, y \rangle $. For a square matrix $M \in \mathbb R^{n\times n }$ we define the operator norm $\opnorm{M} = \sup_{\|x\| =1}\|Mx\|$. For a positive definite matrix $L$ we define the condition number of $L$ as $\kappa(L)=\tfrac{\lambda_{\max}(L)}{\lambda_{\min}(L)}$ where $\lambda_{\max}(L)$ and $\lambda_{\min}(L)$ are the largest and smallest eigenvalues of $L$. For a multivariate function $\bar f : \Rx \times \Ry \rightarrow \Rx$, $\nablafx(x,y)$ denotes the partial Jacobian with respect to the first argument $(\nablafx(x,y))_{i,j} = {\partial \overline f_i(x,y)\over \partial x_j}$ and $\nabla_y \overline f(x,y)$ denotes the partial Jacobian with respect to the second argument.
\section{The MSE Analysis of TTSA}\label{sec::MSE}
Our first main result is a bound on the Mean Squared Error (MSE) for both the fast and the slow iterates.
\subsection{Main Result}
The following theorem provides non-asymptotic bounds on $\mathrm{MSE}^x_k$ and $\mathrm{MSE}^y_k$, explicitly separating the transient initial condition from the steady-state residual error caused by the Markovian noise and the constant step-sizes.

\begin{restatable}{theorem}{mseThm}\label{thm : mse}
Assuming that A.\ref{assumption: Boundedness}-\ref{assumption: Lip and Mono of f} hold, and that $\alpha \muf<1$ and $\beta\leq\alpha$, then for any $k\geq 1$ we obtain the following upper bound on the MSE of the fast iterate, 
  \begin{align}
        \Exp[\|x_k-h(y_k)\|^2] \leq  & \underbrace{(1-\alpha\muf)^{k}\norm{x_{0}-h(y_0)}^2}_{\text{Decaying initial condition}} + \underbrace{\alpha \;\left(\frac{\mathrm{B}_1}{1-\alpha \mu_f} + \mathrm{B}_2\right)  + \beta \; \mathrm{B}_3 + {\beta^2 \over \alpha^2 }\mathrm{B}_4}_{\MSEx}   \;,
        \label{eq:MSE_x}
    \end{align}
    where the constants $(B_k)_{k\in [4]}$ are fully defined in appendix (Eq.~\eqref{eq::B_1}-\eqref{eq::B_4}), and are independent of $\alpha, \beta$ and $k$: they only depend on the properties of the Markovian noise, the range, monotonicity and Lipschitzness parameters of the functions $f, g, h$. 
Further, assuming A.\ref{assumption: Lip and Mono of g}, and $\beta \mu_g <1$, for any $k \geq 1$ the MSE of the slow iterate is upper bounded by 
\begin{align*}
        \Exp[\|y_k - y^*\|^2] \leq  
        &\underbrace{(1- \beta \mug )^{k}\norm{y_{0}-y^*}^2  + D_1 \beta \; k(1-(\alpha \muf \wedge \beta \mug))^{k} \norm{x_0-h(y_0)}^2}_{\text{Decaying initial conditions}} \\
        & + \underbrace{D_2 \; \MSEx + \beta \; \left(\frac{D_3}{1-\beta \mug}+ D_4\right) + \alpha  D_5 }_{\MSEy}  \;. 
\end{align*}
where the constants $(D_k)_{k\in [5]}$ are also formally defined in appendix (Eq.~\eqref{eq::D_1}-\eqref{eq::D_5}) and depend again on problem parameters but not on $\alpha, \beta, k$.
\end{restatable}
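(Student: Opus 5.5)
We prove the fast-iterate bound first, then feed it into the slow-iterate recursion. Throughout we write $\hat x_k \triangleq x_k - h(y_k)$ and $\hat y_k \triangleq y_k - y^*$.

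\textbf{Fast iterate: one-step expansion.} We expand
\begin{align*}
\hat x_{k+1} = \hat x_k - \alpha f(x_k,y_k,\xi_{k+1}) - \bigl(h(y_{k+1})-h(y_k)\bigr).
\end{align*}
Squaring gives four groups of terms. The drift term is $-2\alpha\langle \hat x_k, \bar f(x_k,y_k)\rangle$. Since $\bar f(h(y_k),y_k)=0$, A.\ref{assumption: Lip and Mono of f} bounds it by $-2\alpha\muf\|\hat x_k\|^2$. The noise term is $-2\alpha\langle \hat x_k, f(x_k,y_k,\xi_{k+1})-\bar f(x_k,y_k)\rangle$. The tracking term is $-2\langle \hat x_k, h(y_{k+1})-h(y_k)\rangle$. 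Finally there are the quadratic terms, of order $\alpha^2\boundF^2 + \Lh^2\beta^2\boundG^2$; these are finite by A.\ref{assumption: Boundedness} and continuity.

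\textbf{Fast iterate: tracking term.} We have $\|h(y_{k+1})-h(y_k)\|\le \Lh\beta\boundG$. Young's inequality then gives
\begin{align*}
2\Lh\beta\boundG\|\hat x_k\| \le \tfrac{\alpha\muf}{2}\|\hat x_k\|^2 + \tfrac{2\Lh^2\boundG^2\beta^2}{\alpha\muf}.
\end{align*}
After unrolling the recursion with factor $(1-\alpha\muf)$, summing $\beta^2/\alpha$ over the geometric series yields the $\beta^2/\alpha^2$ term. This is where $\mathrm B_4$ comes from, and it explains why the term is intrinsic. To keep the clean rate $(1-\alpha\muf)^k$, we absorb the constants slightly differently (for instance using $\alpha\muf$ versus $\alpha\muf/2$). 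Alternatively, we use the exact strong monotonicity together with a bound on $\|\hat x_k\|$ by $\boundXY+\boundH$, so that the cross term is linear in $\beta$; this gives the $\beta\mathrm B_3$ contribution. I would try the latter first.

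\textbf{Fast iterate: Markov noise (main obstacle).} The $\alpha$-weighted Markov noise is the core difficulty. We introduce the Poisson solution $v_f(x,y,\cdot)$ of
\begin{align*}
v_f - K_{(x,y)} v_f = f - \bar f.
\end{align*}
It exists and is bounded by $\boundXi$ (unichain, finite $\Xi$). It is Lipschitz in $(x,y)$ with constant $\Lvf$, which uses the Lipschitzness of $K$ and of $\pi$. We then write
\begin{align*}
f(x_k,y_k,\xi_{k+1})-\bar f(x_k,y_k) &= \bigl[v_f(x_k,y_k,\xi_{k+1}) - \mathbb E_{\xi'\sim K_{(x_k,y_k)}(\xi_k,\cdot)}\, v_f(x_k,y_k,\xi')\bigr] \\
&\quad + \bigl[\mathbb E_{\xi'\sim K_{(x_k,y_k)}(\xi_k,\cdot)}\, v_f(x_k,y_k,\xi') - \mathbb E_{\xi'\sim K_{(x_k,y_k)}(\xi_{k+1},\cdot)}\, v_f(x_k,y_k,\xi')\bigr].
\end{align*}
The first bracket is a martingale difference and contributes zero. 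The second telescopes across $j$ after an Abel summation in the weighted sum
\begin{align*}
\alpha\sum_{j\le k}(1-\alpha\muf)^{k-j}\langle \hat x_j,\psi_j\rangle.
\end{align*}
The residuals are of three kinds:
\begin{itemize}
\item differences $\hat x_{j+1}-\hat x_j$, of size $O(\alpha+\beta)$;
\item changes of $(x_j,y_j)$ in $v_f$, controlled by $\Lvf$;
\item changes of the kernel $K_{(x_j,y_j)}$, controlled by $|\Xi|\boundXY\Lk$.
\end{itemize}
Each carries an extra factor $\alpha$ or $\beta$. Summing against the geometric weights (total mass $1/(\alpha\muf)$) gives a bound of order $\alpha+\beta$, which yields $\mathrm B_1,\mathrm B_2,\mathrm B_3$. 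The boundary terms give the $1/(1-\alpha\muf)$ factor.

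\textbf{Slow iterate.} We repeat the scheme for $\hat y_k$:
\begin{align*}
\hat y_{k+1}=\hat y_k-\beta g(x_k,y_k,\xi_{k+1}).
\end{align*}
We split
\begin{align*}
\bar g(x_k,y_k) = \bar g(h(y_k),y_k) + \bigl[\bar g(x_k,y_k)-\bar g(h(y_k),y_k)\bigr].
\end{align*}
The first part gives the contraction $-2\beta\mug\|\hat y_k\|^2$ by A.\ref{assumption: Lip and Mono of g}. The second is at most $\Lg\|\hat x_k\|$ in norm, and Young's inequality bounds it by
\begin{align*}
\beta\mug\|\hat y_k\|^2/2 + (2\Lg^2/\mug)\,\beta\|\hat x_k\|^2.
\end{align*}
Unrolling, the term $\beta\sum_j(1-\beta\mug)^{k-j}\,\mathbb E\|\hat x_j\|^2$ is handled by plugging in the fast bound. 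The stationary part gives $D_2\,\MSEx$. The transient part is
\begin{align*}
\beta\sum_j(1-\beta\mug)^{k-j}(1-\alpha\muf)^j \le \beta k\,(1-(\alpha\muf\wedge\beta\mug))^k,
\end{align*}
which is the $D_1$ term. The Markov noise for $g$ is treated with a Poisson solution $v_g$ exactly as before. The residuals now include $x_{j+1}-x_j=O(\alpha)$, since $v_g$ depends on $x$; this produces the $\alpha D_5$ term. The quadratic terms and the remaining residuals give $\beta(D_3/(1-\beta\mug)+D_4)$.
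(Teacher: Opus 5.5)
Your overall plan is the paper's. It uses the one-step expansion of $\|\hat x_{k+1}\|^2$ with strong monotonicity, then a Poisson-equation decomposition and an Abel summation against the weights $(1-\alpha\muf)^{k-j}$. The residuals come from $\hat x_{j+1}-\hat x_j$, from $\Lvf$, and from the kernel drift via $|\Xi|\boundXY\Lk$, and the boundary terms give the $1/(1-\alpha\muf)$ factor. Finally the fast bound is fed into the slow recursion after Young's inequality on the $\bar g(x_k,y_k)-\bar g(h(y_k),y_k)$ term.

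Your martingale split differs slightly from the paper's. You centre $v_f(x_k,y_k,\xi_{k+1})$ by $\sum_{\xi'}K_{(x_k,y_k)}(\xi_k,\xi')v_f(x_k,y_k,\xi')$, which is a martingale difference for $\mathcal F_k$. The paper instead shifts to $\xi_{j+2}$ and centres under $K_{(x_{j+1},y_{j+1})}(\xi_{j+1},\cdot)$. Both produce the same residual types, so this is fine; the $j=0$ term only needs a convention for $\xi_0$ or a direct $O(\alpha)$ bound.

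The step that fails is the option you say you would try first for the tracking term. Suppose you keep the full $-2\alpha\muf\|\hat x_k\|^2$ and bound $-2\langle\hat x_k,\Dh{k}\rangle\le 2(\boundXY+\boundH)\Lh\boundG\,\beta$. The per-step residual is then $O(\beta)$. Unrolling it against geometric weights whose total mass is of order $1/(\alpha\muf)$ gives a term of order $\beta/\alpha$, not $\beta\mathrm B_3$. You apply this same mass count correctly to the noise residuals, but not here. Since $\beta/\alpha\ge\beta^2/\alpha^2$, the result is strictly weaker than the theorem: for $\beta=\alpha^{3/2}$ it gives $\sqrt{\alpha}$ rather than $\alpha$.

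You need the Young route, $2\|\hat x_k\|\|\Dh{k}\|\le\alpha\muf\|\hat x_k\|^2+\beta^2\boundH^2/(\alpha\muf)$. This is exactly what the paper does: it keeps the contraction $1-\alpha\muf$ and produces $\mathrm B_4=\boundH^2/\muf^2$. In the paper, $\beta\mathrm B_3$ has three sources:
\begin{itemize}
\item the $O(\alpha\beta)$ cross term $2\alpha\langle f(x_k,y_k,\xi_{k+1}),\Dh{k}\rangle$;
\item the $\|\Dh{k}\|^2$ leftover, which after division by $\alpha\muf$ is of order $\beta^2/\alpha\le\beta$ (this is where $\beta\le\alpha$ enters);
\item the $\beta$-parts of the Poisson residuals.
\end{itemize}
Commit to the Young route, and use $\epsilon=\mug$ rather than $\mug/2$ in the slow-iterate Young step if you want the paper's exact $D_1,D_2$. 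With those choices your plan goes through.
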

\begin{proof}[Proof Overview.] 
    To prove the MSE bounds for the fast iterate $x_k$, we first use the strong monotonicity assumption A.\ref{assumption: Lip and Mono of f} to extract the following recursion on the tracking errors between steps $k$ and $k+1$,
    \begin{align}
        \label{eq:fast recurrence}
        \norm{x_{k+1}-h(y_{k+1})}^2 \leq (1-\alpha \mu_f) \norm{x_k-h(y_{k})}^2 + r^x_k,
    \end{align}
    where $r^x_k$ is a residual term containing the deviation of the noisy dynamics from their stationary means in addition to higher order terms of $\alpha$ and $\beta$ that we detail in the next paragraph. The goal of this recursion is to dissociate the impact of the initial conditions from the cumulative impact of the noise.  Repeatedly applying the above inequality, we get
    \begin{align*}
        \norm{x_{k+1}-h(y_{k+1})}^2 \leq (1-\alpha \mu_f)^{k+1} \norm{x_{0}-h(y_{0})}^2 + \sum_{j=0}^k (1-\mu_f\alpha)^{k-j}r^x_j.
    \end{align*}
    This first step is similar to the approach used for the decreasing step sizes in \cite{doan2021finitetimeconvergenceratesnonlinear}, except in the latter case the sum of the residuals under expectation vanishes as $k\to \infty$ which is not the case here due to the assumption of fixed step-sizes.
    
    To bound this sum of residuals, we then use A.\ref{assumption: Lip and Mono of f} to show that each $r^x_j$ is bounded by $2\alpha\inner{x_{j}-h(y_j)}{f(x_j,y_j,\xi_j)-\barF{j}}$, plus a term $M_{\alpha,\beta}$ that is of order $O(\alpha^2+\beta^2+\alpha\beta+\beta^2/\alpha)$. The sum $\sum_{j=0}^k (1-\mu_f\alpha)^{k-j}M_{\alpha,\beta}$ leads to a $O(\alpha+\beta+\beta^2/\alpha^2)$ error term.  The main technical difficulty is therefore to control the sum induced by the first term.  To do so, we follow the Poisson-equation technique used in
    \cite[Lemma 8]{allmeier2024computingbiasconstantstepstochastic} to rewrite the noise as a Martingale difference term, a bounded term that scales linearly with the step-sizes, and a weighted telescoping term. Because of the geometric weights \((1-\alpha\muf)^{k-j}\), the telescoping term is not exact and leaves additional terms. Additionally, since the kernel is depending on the current iterates, the Martingale term creates a drift in the kernel that we control due to Lipschitzness of $K$ from A.\ref{assumption: All Kernel Properties}. Lemma~\ref{lemma : poisson mse} shows that all these contributions are of the required order.

    
    To prove the MSE bound for the slow iterate, we apply a recipe similar to the one followed for the fast iterate. From assumption A.\ref{assumption: Lip and Mono of g} we obtain an equation for $y$ similar to \eqref{eq:fast recurrence} with a contraction factor $1-\beta \mug$, along with an additional term capturing the influence of the fast iterate:
    \begin{align*}
        \norm{y_{k+1}-y^*}^2 \leq (1-\beta \mug) \norm{y_{k}-y^*}^2 +\beta \norm{x_{k}-h(y_k)}^2 +r^y_k.
    \end{align*}
    We then use the results of the fast iterate to bound $\norm{x_{k}-h(y_k)}^2$. The treatment of $r^y_k$ is then similar to the analysis done for the fast iterate. The full proof can be found in Appendix~\ref{section : mse detailed proofs}.
\end{proof}

\subsection{Discussion and tightness}

Under the constant step-size regime and the strong monotonicity assumptions 
A.\ref{assumption: Lip and Mono of f} and A.\ref{assumption: Lip and Mono of g}, the initial conditions are forgotten at an exponential rate, respectively 
\((1-\alpha\muf)^k\) for the fast iterate and \((1-\beta\mug)^k\) for the slow iterate. 
This illustrates one of the main practical advantages of constant step-sizes: the iterates rapidly enter a neighborhood of the equilibrium, although the size of this neighborhood is controlled by the persistent stochastic error. 
The two time-scale structure also creates a transient coupling between the initial fast error and the slow iterate, which appears through the term
\[
    k\bigl(1-(\alpha\muf \wedge \beta\mug)\bigr)^k\norm{\x{0}}^2,
\]
which may initially increase because of the prefactor \(k\), but still vanishes exponentially as \(k\to\infty\).

Once the transient terms have vanished, the iterates remain, in expectation, within a neighborhood of the equilibrium whose squared radius scales as
\(  O\!\left(\alpha+\beta^2/\alpha^2\right).
\)
The presence of the term \(\beta^2/\alpha^2\) in both bounds is one of the main features of Theorem~\ref{thm : mse}. 
It is consistent with finite-time results for nonlinear TTSA with decreasing step-sizes, where bounds of order 
\(O(\alpha_k+\beta_k^2/\alpha_k^2)\) are obtained for both iterates \cite{chandak2025finitetimeboundstwotimescalestochastic}. 
This term quantifies the cost of coupling the fast and slow recursions: without sufficient time-scale separation, it may dominate the steady-state error.

However, when the slow step-size is sufficiently small relative to the fast one, the \(\alpha\)-term becomes dominant. 
In particular, if \(\beta\leq \alpha^{3/2}\), then \(\beta^2/\alpha^2\leq \alpha\), and Theorem~\ref{thm : mse} immediately yields that, for any TTSA satisfying its assumptions, there exist constants \(C>0\) and \(\alpha_0>0\) such that, for all \(\alpha<\alpha_0\) and all \(\beta\leq \alpha^{3/2}\),
\begin{align}
    \label{eq:corollary MSE}
    \limsup_{k\to \infty} \Exp[\|x_k-h(y_k)\|^2] 
    \leq C\alpha,
    \qquad\text{and}\qquad
    \limsup_{k\to \infty} \Exp[\|y_{k}-y^*\|^2] 
    \leq C\alpha.
\end{align}



We emphasize that the steady-state MSE bound for the slow iterate is substantially larger than what is obtained in the linear TTSA setting, where the corresponding rate is \(O(\beta+\alpha^2)\) \cite{kwon2024twotimescalelinearstochasticapproximation}. 
This raises a natural question: is the \(O(\alpha)\) term in the bound for the non-linear case unavoidable, or is it merely a proof artifact? 
The next result shows that it is not. It constructs a nonlinear TTSA instance satisfying the assumptions of Theorem~\ref{thm : mse} for which both the fast tracking error and the slow error have asymptotic MSE at least of order \(\alpha\). 
Thus, in the nonlinear setting, the rate in \eqref{eq:corollary MSE} cannot be improved in general: there is a qualitative separation from the linear TTSA setting, where sharper rates are possible for the slow iterate.


\begin{theorem}\label{thm : lower bound MSE}
    There exist constants \(C>0\), \(\alpha_0>0\), and a TTSA instance satisfying all the assumptions of Theorem~\ref{thm : mse} such that, for all step-sizes \(\beta\leq\alpha<\alpha_0\),
    \begin{align*}
        \liminf_{k\to\infty} \Expect{\|x_k-h(y_k)\|^2} 
        \ge C\alpha,
        \qquad \text{and} \qquad 
        \liminf_{k\to\infty} \Expect{\|y_{k}-y^*\|^2} 
        \ge C\alpha .
    \end{align*}
\end{theorem}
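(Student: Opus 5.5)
We prove the statement with an explicit one-dimensional instance in which the fast iterate carries an $\Omega(\alpha)$ fluctuation and the slow dynamics depend \emph{nonlinearly} on the fast iterate. A nonlinear slow drift turns this fluctuation into an $\Omega(\alpha)$ shift of the slow equilibrium. This is precisely the mechanism that is absent in linear TTSA.

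\textbf{The instance.} Take $d_x=d_y=1$ and let $\Xi=\{-1,+1\}$ with $\xi_k$ i.i.d.\ uniform. This is a unichain kernel that is constant in $(x,y)$, so $\Lk=0$. Set
\[
f(x,y,\xi)=x-\xi \quad\text{and}\quad g(x,y,\xi)=y-\phi(x),
\]
where $\phi$ is a smooth, bounded, Lipschitz function with $\phi(x)=x^2$ on a neighbourhood of the region visited by the fast iterate. Then $\bar f(x,y)=x$, so $h\equiv 0$ and $\mu_f=1$, and $\bar g(h(y),y)=y-\phi(0)$, so $y^*=\phi(0)=0$ and $\mu_g=1$. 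Boundedness (A.\ref{assumption: Boundedness}) holds automatically: for $\alpha<1$ the fast iterate $x_{k+1}=(1-\alpha)x_k+\alpha\xi_{k+1}$ stays in $[-1,1]$ when $x_0\in[-1,1]$. The slow iterate $y_{k+1}=(1-\beta)y_k+\beta\phi(x_k)$ is a convex combination of $y_k$ and a bounded value, so it stays in a compact interval. All the Lipschitz and monotonicity requirements are then immediate.

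\textbf{The fast iterate.} Since $h\equiv 0$, we have $x_k-h(y_k)=x_k$, and $x_k$ is a linear AR(1) recursion. It gives
\[
\Exp[x_{k+1}^2]=(1-\alpha)^2\Exp[x_k^2]+\alpha^2,
\]
because the cross term vanishes by independence and $\Exp[\xi_{k+1}]=0$. Hence $\Exp[x_k^2]\to \alpha^2/(1-(1-\alpha)^2)=\alpha/(2-\alpha)\ge \alpha/2$. This proves the first inequality with $C=1/2$.

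\textbf{The slow iterate.} Take expectations in the $y$-recursion:
\[
\Exp[y_{k+1}]=(1-\beta)\Exp[y_k]+\beta\,\Exp[\phi(x_k)].
\]
Because $\phi(x)=x^2$ on $[-1,1]$, we have $\Exp[\phi(x_k)]=\Exp[x_k^2]\to\alpha/(2-\alpha)$. The affine recursion contracts with factor $1-\beta$, so $\Exp[y_k]\to\alpha/(2-\alpha)$. By Jensen, $\Exp[(y_k-y^*)^2]\ge(\Exp[y_k])^2$. This gives only $\Omega(\alpha^2)$, so the main obstacle is to strengthen the construction so that the slow MSE is of order $\alpha$ rather than $\alpha^2$.

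\textbf{Fixing the slow bound.} Since the bias only yields $\alpha^2$, we need $\alpha$-order fluctuations in $y$ itself. I would modify $g$ to $g(x,y,\xi)=y-x$, so that $y$ tracks $x$ directly. Then $\bar g(h(y),y)=y$, so $y^*=0$ with $\mu_g=1$. To compute the stationary second moment of $y$, solve the stationary covariance system of the joint linear recursion
\[
x_{k+1}=(1-\alpha)x_k+\alpha\xi_{k+1},\qquad y_{k+1}=(1-\beta)y_k+\beta x_k.
\]
The stationary moments satisfy
\[
c_{xy}=\frac{\beta(1-\alpha)\,\Exp[x^2]}{\alpha+\beta-\alpha\beta},\qquad \Exp[y^2]=\frac{2\beta(1-\beta)c_{xy}+\beta^2\Exp[x^2]}{2\beta-\beta^2},
\]
where $c_{xy}$ is the stationary $\Exp[x_ky_k]$. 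With $\beta\le\alpha$ this gives $\Exp[y^2]\asymp\beta\,\Exp[x^2]/\alpha\asymp\beta$, not $\alpha$. Hence a linear slow drift cannot reach order $\alpha$ when $\beta\ll\alpha$.

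\textbf{What remains.} The nonlinearity must therefore be exploited, and the quadratic-in-$x$ construction above, together with Jensen, still gives only $\alpha^2$. I would reinspect what is required. If the intended claim concerns $\beta$ comparable to $\alpha$ (e.g.\ $\beta=\alpha$, which is allowed by $\beta\le\alpha$), the linear instance with $\beta=\alpha$ gives $\Exp[y^2]\asymp\alpha$. For general $\beta\le\alpha$, I expect the construction to require a kernel that depends on $x$, so that the Markov chain's stationary mean of $g$ shifts by an amount of order $\sqrt{\alpha}$. This is the step I am least sure of, and it is where I would focus the effort.
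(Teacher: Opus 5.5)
Your fast-iterate argument is correct (the AR(1) instance gives $\mathbb{E}[x_k^2]\uparrow\alpha/(2-\alpha)\ge\alpha/2$), but the slow-iterate half of the statement is not proved. You end with three things, none of which closes it:
\begin{itemize}
\item a quadratic drift, which gives only $\Omega(\alpha^2)$;
\item a linear drift, which gives $\Theta(\beta)$ (taking $\beta=\alpha$ does not help, since the claim must hold for all $\beta\le\alpha$);
\item a conjecture about an iterate-dependent kernel, which is not needed.
\end{itemize}
The missing idea is that the slow drift should be Lipschitz but \emph{not differentiable} at $h(y^*)$. For any $\phi$ that is $C^2$ near $0$, $\mathbb{E}[\phi(x_k)]-\phi(0)=\phi'(0)\mathbb{E}[x_k]+O(\mathbb{E}[x_k^2])=O(\alpha)$. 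So the bias of $y$ is $O(\alpha)$ and Jensen returns only $\alpha^2$, exactly as you found.

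The paper instead takes $g(x,y,\xi)=y-|x|$. This $g$ is $1$-Lipschitz, and $\bar g(h(y),y)=y$, so $y^*=0$ and $\mu_g=1$. Also $y_k\in[0,1]$ for $y_0\in[0,1]$, since $|x_k|\le 1$. Hence all assumptions of Theorem~\ref{thm : mse} still hold; differentiability is only required for Theorem~\ref{thm : bias}. Now $\mathbb{E}[y_{k+1}]=(1-\beta)\mathbb{E}[y_k]+\beta\,\mathbb{E}|x_k|$. Therefore $\liminf_k\mathbb{E}[y_k]\ge\liminf_k\mathbb{E}|x_k|$ for every $\beta\in(0,1)$, and Jensen gives $\liminf_k\mathbb{E}[y_k^2]\ge(\liminf_k\mathbb{E}|x_k|)^2$. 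The kink converts the $\sqrt{\alpha}$-size fluctuation of $x_k$ into a first-order, $\sqrt{\alpha}$-size shift of $y$.

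What remains is an anti-concentration bound $\mathbb{E}|x_k|\gtrsim\sqrt{\alpha}$. The paper obtains it from a Gaussian approximation of its nonlinear fast iterate $f(x)=x/(1+x)$. With your linear AR(1) instance it is elementary:
\begin{itemize}
\item With $x_0=0$, $x_k=\alpha\sum_{i=0}^{k-1}(1-\alpha)^i\xi_{k-i}$ is a weighted Rademacher sum, so $\mathbb{E}[x_k^4]\le 3(\mathbb{E}[x_k^2])^2$.
\item H\"older's inequality $\mathbb{E}[x_k^2]\le(\mathbb{E}|x_k|)^{2/3}(\mathbb{E}[x_k^4])^{1/3}$ then yields $\mathbb{E}|x_k|\ge(\mathbb{E}[x_k^2])^{1/2}/\sqrt{3}$.
\item The limit of this lower bound is $\sqrt{\alpha/(3(2-\alpha))}\ge\sqrt{\alpha/6}$.
\end{itemize}
Hence both inequalities hold with $C=1/6$ and any $\alpha_0<1$. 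Your overall strategy (fast fluctuation, nonlinear slow drift, Jensen on the slow bias) is the right one. Replacing $x^2$ by $|x|$ and adding this moment bound completes it, and gives a more self-contained proof than the paper's appeal to a diffusion approximation.
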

The proof is given in Appendix~\ref{apx:lower bound} and relies on a simple one-dimensional construction for both variables \(x_k\) and \(y_k\). 




\section{The Bias Analysis of TTSA}\label{sec::bias}
Analyzing the bias requires a finer characterization of the dynamics. We therefore assume additional differentiability, which allows us to linearize the averaged update around the equilibrium. The first-order terms drive the bias recursion, while the second-order Taylor remainders are controlled by the MSE bounds from Theorem~\ref{thm : mse}.
We use the following notation: $J_{xx} \triangleq  \nablafx(h(y^*), y^*), J_{xy} \triangleq  \nabla_y \overline f(h(y^*), y^*), J_{yx} \triangleq  \nabla_x \overline g(h(y^*), y^*)$  and $J_{yy} \triangleq  \nablagy (h(y^*), y^*)$. 
\begin{assumptionB}\label{assump : derivability}
    $\overline f$ and $\overline g$ are at least two times differentiable and their second derivatives are bounded operators. In addition, the matrices $-J_{xx}$ and $-\Delta  \triangleq -J_{yy} + J_{yx} J_{xx}^{-1} J_{xy}$ are Hurwitz.
\end{assumptionB}
The differentiability part of B.\ref{assump : derivability} is standard in analyses that rely on a local expansion of the stochastic-approximation dynamics. It is used, for instance, to characterize the constant-step-size bias in nonlinear SA \cite{allmeier2024computingbiasconstantstepstochastic}, and it is also a natural assumption in CLT analyses for nonlinear TTSA \cite{hu2024centrallimittheoremtwotimescale}. In our proof, this assumption allows us to linearize the averaged dynamics around the equilibrium: the first-order terms determine the leading bias recursion, while the second-order remainders are controlled through the MSE bounds.

The Hurwitz condition on $-J_{xx}$ and $-\Delta$ is essential when studying TTSA settings \cite{kwon2024twotimescalelinearstochasticapproximation, haque2025tightfinitetimebounds, hu2024centrallimittheoremtwotimescale, kaledin2020finite, Konda_2004}.  In particular, for the linear settings, it is equivalent (up to a linear transformation) to monotonicity in A.\ref{assumption: Lip and Mono of f} and A.\ref{assumption: Lip and Mono of g}.  It is needed to ensure that the trajectories under the approximate linear model converge exponentially fast to the equilibrium.


\subsection{Main result}


We now introduce the second main result of this paper, which provides an upper bound on the bias of TTSA. The result shows that, once the initial conditions vanish, the bias is of the same order as the bound on the MSE obtained in Theorem~\ref{thm : mse}.

\begin{restatable}{theorem}{biasThm}\label{thm : bias}
    Assuming A.\ref{assumption: Boundedness}-\ref{assumption: Lip and Mono of g} and B.\ref{assump : derivability}, \newcontent{there exists $\alpha^*$ defined in \eqref{eq: alpha star condition for bias thm}, $\beta^*$ defined in \eqref{eq: beta star condition for bias thm}, and $ \omega^*$ defined in \eqref{eq: omega star condition for bias thm} such that for $\alpha < \alpha ^*$, $\beta < \beta^*$ and $\frac{\beta}{\alpha} < \omega^* $} we have the following bound on the bias of both iterates:
    \begin{align*}
        \max\{ \norm{\Expect{ x_k - h(y^*)}}, \norm{\Expect{y_k -y^*}} \} \leq& \opnorm{T^{-1} S^{-1}} \parentheses{\norm{s_k^x } +\norm{s_k^y}}
    \end{align*}
    for some upper triangular matrix $T$ defined in \eqref{eq:riccati-transformation} and a lower triangular matrix $S$ defined in \eqref{eq:sylvester-transformation}, and where vectors $s^x_k, s^y_k$ are some transformed bias vectors defined in \eqref{eq:sylvester-fast-bias}, \eqref{eq:sylvester-slow-bias} respectively. Furthermore,
    \begin{align}
            \norm{s_{k}^x} +  \norm{s^y_{k}}  \leq & \underbrace{e^{-k c\alpha}\scondL\norm{s^x_0} + e^{-k d\beta } \scondG \norm {s^y_0} + a e^{-\alpha \hat c (k-1) }\norm{\x{0}}^2 + b  e^{-\beta \hat d (k-1) } \norm{\y{0}}^2 }_{\text{Decaying initial conditions}} \nonumber \\
            & + \underbrace{ F_1(\alpha +\beta) + F_2 \parentheses{ \mathrm{MSE}_{x}^\infty(\alpha, \beta)+ \mathrm{MSE}_{y}^\infty(\alpha, \beta)}   + R_{\alpha, \beta} }_{\text{Asymptotic bias}} \;,     
        \label{eq:bias bound}
    \end{align}
    where $F_1,F_2$ are positive constants independent of 
    $\alpha,\beta$ and $k$, whose values may depend on the problem parameters and on 
    $\omega^*$, $R_{\alpha, \beta}$ contains terms of higher order $(\alpha + \beta )^2$. 
    Additionally, $\LyapF, \LyapG$ are symmetric positive definite matrices solutions of the Lyapunov equations \eqref{eq:lyapunov-equation-fast}, \eqref{eq:Lyapunov-equation-slow}, $c = (8 \opnorm{\LyapF})^{-1}$ and $d = (8 \opnorm{\LyapG})^{-1}$, constants $a,b,\hat{c},\hat{d}$ are related to decaying initial errors from the MSE and can be found in \eqref{eq:sequence-sum-mse}, and the MSE terms are defined in Theorem~\ref{thm : mse}.
\end{restatable}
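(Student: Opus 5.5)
The plan is to derive a linear recursion for the joint bias vector $(\Expect{\x{k}},\Expect{\y{k}})$, where $\x{k}=x_k-h(y^*)$ and $\y{k}=y_k-y^*$. We then decouple it through two block-triangular changes of coordinates, the Riccati transformation $T$ and the Sylvester transformation $S$, and close the argument with Lyapunov contractions in each transformed block.

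\textbf{Step 1: linearization.} Start from \eqref{eq: TTSA update} and write $f(x_k,y_k,\xi_{k+1})=\barF{k}+\big(f(x_k,y_k,\xi_{k+1})-\barF{k}\big)$, and similarly for $g$. By B.\ref{assump : derivability}, a second-order Taylor expansion of $\bar f,\bar g$ around $(h(y^*),y^*)$ gives
\begin{align*}
\vector{\Expect{\x{k+1}}}{\Expect{\y{k+1}}}=\dynamicsmatrix\vector{\Expect{\x{k}}}{\Expect{\y{k}}}-\vector{\alpha(\Expect{\psi_k}+\Expect{\rho^x_k})}{\beta(\Expect{\phi_k}+\Expect{\rho^y_k})}.
\end{align*}
Here $\psi_k,\phi_k$ are the Markovian noise residuals and $\rho^x_k,\rho^y_k$ are Taylor remainders. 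Since $\|\rho_k\|\lesssim \norm{\x{k}}^2+\norm{\y{k}}^2$ and $\norm{x_k-h(y^*)}\le\norm{x_k-h(y_k)}+\Lh\norm{\y{k}}$, the remainders are bounded by the MSE terms of Theorem~\ref{thm : mse}, including their decaying transients. Summing those transients against the geometric weights produces the terms $a e^{-\alpha\hat c(k-1)}\norm{\x{0}}^2$ and $b e^{-\beta\hat d(k-1)}\norm{\y{0}}^2$. For the noise residuals, we reuse the Poisson-equation decomposition behind Lemma~\ref{lemma : poisson mse}. The martingale part has zero mean, and the telescoping part together with the kernel-drift part contributes $O(\alpha+\beta)$ after weighting, which gives the $F_1(\alpha+\beta)$ term.

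\textbf{Step 2: decoupling.} This is where we expect the main difficulty. We look for $T=\blockMatrix{I}{\alpha U}{0}{I}$ (or a similar upper-triangular form) that removes the $x\to$ fast coupling. Requiring the transformed matrix to be block lower-triangular yields a quadratic matrix (Riccati) equation in $U$, of the form $U=U_0+O(\alpha,\beta/\alpha)$ with $U_0=J_{xx}^{-1}J_{xy}$ up to scaling. We solve it by a Banach fixed point in the ball $\ballU$, which requires $\alpha<\alpha^*$ and $\beta/\alpha<\omega^*$. The resulting slow diagonal block becomes $I-\beta\jyyab$ with $\jyyab=\Delta+O(\alpha+\beta/\alpha)$.

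Next, a lower-triangular $S=\blockMatrix{I}{0}{\varepsilon X}{I}$ kills the remaining off-diagonal block. This leads to a Sylvester equation, solved by contraction in $\ballX$ for small $\beta/\alpha$. The delicate part is tracking the dependence on $\beta/\alpha$ in the fixed-point radii, so that the perturbed blocks $\jxxab$ and $\jyyab$ remain close enough to $J_{xx}$ and $\Delta$ to stay Hurwitz with uniform margins. This is what fixes $\alpha^*,\beta^*,\omega^*$.

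\textbf{Step 3: block contraction.} With $\LyapF$ and $\LyapG$ solving the Lyapunov equations for $J_{xx}$ and $\Delta$, a standard computation gives
\[
\LyapFNorm{(I-\alpha\jxxab)v}\le(1-c\alpha)\LyapFNorm{v}
\]
with $c=(8\opnorm{\LyapF})^{-1}$, and the analogous bound for the slow block with $d$. Unrolling the recursions for $s^x_k$ and $s^y_k$ (the slow one is forced by the fast one only through bounded, summable terms) and converting between the induced norms and Euclidean norms introduces the factors $\scondL$ and $\scondG$. This yields \eqref{eq:bias bound}. Finally, $(\Expect{\x{k}},\Expect{\y{k}})=T^{-1}S^{-1}(s^x_k,s^y_k)$ gives the stated bound on the max of the two bias norms.
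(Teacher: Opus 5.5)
Your plan follows the paper's proof essentially step for step: linearized concatenated bias recursion, a Riccati transform $T$ followed by a Sylvester transform $S$ to block-diagonalize, Lyapunov-norm contractions of the two diagonal blocks, the Poisson decomposition for the noise sums and weighted MSE sums for the Taylor remainders, and finally $b_k=T^{-1}S^{-1}s_k$. The one thing to fix is the scaling. The off-diagonal block of $T$ is $U=J_{xx}^{-1}J_{xy}+O(\beta/\alpha)$ itself, not $\alpha U$; the Riccati equation involves only $\beta/\alpha$, so no condition on $\alpha$ is needed there. Likewise, the block $X$ of $S$ is already $O(\beta/\alpha)$ with no extra $\varepsilon$ factor, and this is precisely what gives $\alpha\opnorm{X}\le\beta\radiusX$ and absorbs the $1/\beta$ from the slow geometric sums.
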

We detail the proof of the theorem in Appendix~\ref{section: bias detailed proofs}, and present its main arguments in the following proof overview.

\emph{Proof Overview.}
Recall that $\mathrm{Bias_k^x} \triangleq \Expect{x_k - h(y^*)}$ and $\mathrm{bias^y_k} \triangleq \Expect{y_k -y^*}$ are the bias vectors. As used in \cite{Mokkadem_2006, doan2021finitetimeconvergenceratesnonlinear}, we define the concatenation of the two bias vectors as $b_k \triangleq(\mathrm{Bias_k^x} , \mathrm{bias_k^y})^T$. 
This manipulation allows us to analyze the coupled dynamics and to control cross impacts of both iterates on each other. Using assumption B.\ref{assump : derivability} we use a Taylor expansion of the dynamics $\bar{f}$ and $\bar{g}$ around $(h(y^*),y^*)$ to obtain 
\begin{align*}
    b_{k+1} = H b_{k} + R_k,
\end{align*}
where $H$ represents the linear dynamics around the equilibrium and the remainder term $R_k$ contains the higher order terms of the Taylor expansion plus the expected impact of the Markovian noise.

As we will see later, the remainder term $R_k$ can be controlled somehow similarly to what we did for the MSE. The intricate part is to find a way to analyze the dynamics induced by $H$ where the off diagonal term represents the cross influence between the biases of both iterates.  To treat this term, we use a linear transformations of coordinates to decouple the fast and slow part, following techniques from control theory, see \emph{e.g.} \cite[Chapter 2]{kokotovic1999singular}. First, we construct an upper triangular matrix $T$ with identity on the diagonal and an unknown upper right block denoted by $U$, such that $\Pi \triangleq T H T^{-1}$ is lower triangular with a zero in the upper right block. This leads to solving a Riccati equation for $U$ \cite[Eq. 2.13]{kokotovic1999singular}. 
Second, we search for a transform of the matrix $\Pi$ that would make it block-diagonal, by removing the lower left block. To do that, we build a lower triangular matrix $S$ that has an unknown block $X$, and consider $\Lambda = S \Pi S^{-1}$ with the goal to make it diagonal matrix. We do that by solving a Sylvester equation in $X$ \cite[Lemma 8.3]{POZNYAK2008133}. Using both transformations, we get
\begin{align*}
    s_{k+1} = \Lambda s_{k} + ST R_k,
\end{align*}
where $s_k \triangleq ST b_k $. By having the matrix $\Lambda$ being block diagonal, with two blocks $\Lambda _1, \Lambda_{2}$, we can unroll the recursion
to give us access to individual updates of each component:
\begin{align*}
    s_{k+1}^x = \Lambda_1^k s_0^x + \sum_{j=0}^k \Lambda_1^{k-j} R^x_j ,&& s_{k+1}^y = \Lambda_2^k s_0^y + \sum_{j=0}^k \Lambda_2^{k-j} R^y_j.
\end{align*}
The last challenging part is to find the right norm so to ensure that the leading matrices behave as contractions for small enough step sizes.  This is done by solving a set of Lyapunov equations to get symmetric positive definite matrices that contain information about the linear dynamics of the system and such that the leading matrices are contractions under the Lyapunov induced norms. We can then propagate back the bounds to the original bias vectors by applying the inverse operation $b_k = (ST)^{-1} s_k$. \qed 

\subsection{Discussion}

The asymptotic bias provided in Theorem~\ref{thm : bias} is of the order of $O(\alpha+\beta^2/\alpha^2)$. This is not a direct consequence of the MSE bound of Theorem~\ref{thm : mse} which, by Jensen's inequality, would only provide a $O(\sqrt{\alpha+\beta^2/\alpha^2})$ on the bias. To prove Theorem~\ref{thm : bias}, we follow a different path from the one we used to prove the MSE: the approach we used was to treat the fast and slow iterates separately. We first obtained a bound on the MSE of $x$ and then plugged this term in the analysis of $y$. For the bias, we believe that this approach does not work and that the two iterates must be treated at the same time. Hence, what we do in the proof of Theorem~\ref{thm : bias} is to couple the analysis of the fast and slow iterates by using a change of variables linked to linearization of $\bar{f}$ and $\bar{g}$ around $(h(y^*),y^*)$. 

The first term in the bias bound \eqref{eq:bias bound} is an exponentially fast vanishing term. This term is similar to the one that we obtained for the MSE but is there because of a different reason. In the MSE analysis, this term came from the strong monotonicity assumption. Here, this term is linked to the Hurwitz stability of first order dynamics of the TTSA (Assumption B.\ref{assump : derivability}). The proof overview above explains how a transformation of our bias vectors into new coordinates allowed us to get a contraction. Obtaining this contraction allows us to derive the two remaining terms that form the asymptotic bound on the bias. These terms emerge from two different sources: the first source is the Markovian noise and induces the term $F_1(\alpha + \beta)$. The other terms $\MSEx +\MSEy$ are inherited errors from the MSE due to the high order terms from the Taylor approximation.

Taken together, the bias of both iterates has a leading dependence of order $O(\alpha+\beta)$, which matches the results of linear TTSA under Markovian noise  \cite{kwon2024twotimescalelinearstochasticapproximation}. In addition to that, we get an extra term $\beta^2 \over \alpha ^2$ due to non linearity of the dynamics. This term is injected from the MSE results and can not be improved unless we improve the bound on the MSE. 

When assuming a clear separation of timescales such that the slow step-size $\beta$ satisfies $\beta\leq \alpha^{3/2}$ the bound on the bias reduces to $O(\alpha)$ for both iterates.  Similarly to the MSE bound, we can show that this bound is tight: We prove in Theorem~\ref{thm : lower bound} that there exists a TTSA that has a lower bound on the bias of order $\Omega(\alpha)$.  The exact statement of the theorem along with its proof can be found in Appendix~\ref{apx:lower bound}.  Note that, contrary to the MSE case where linear and non-linear TTSA behave very differently, the bias in non-linear TTSA is essentially of the same order as the bias in the linear setting of \cite{kwon2024twotimescalelinearstochasticapproximation} where the order $O(\alpha+\beta)$ is proved to be tight.

\section{Illustrations}\label{sec:illustrations}
\begin{figure}[t]
    \centering  
    \begin{tabular}{@{}c@{}c@{}c@{}}
        \includegraphics[width=0.32\linewidth]{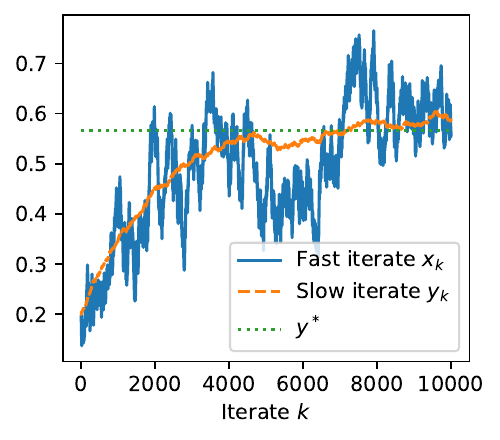}
        &\includegraphics[width=0.32\linewidth]{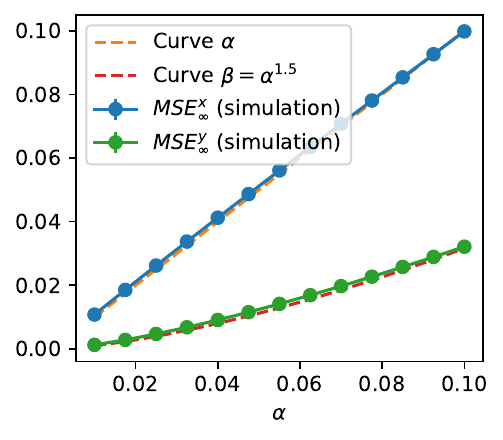}
        &\includegraphics[width=0.32\linewidth]{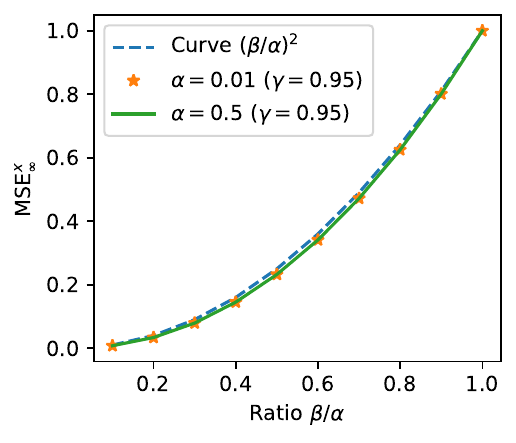}\\
        (a) Transient regime of \eqref{eq:example_smooth_apx} & (b) $\mathrm{MSE}^{\cdot}_\infty$ of \eqref{eq:example_smooth_apx} & (c) $\mathrm{MSE}^x_\infty$ of Example~\eqref{eq: beta^2/alpha^2}. 
    \end{tabular}
    \caption{Illustration of the convergence results.}
    \label{fig:various illustrations}
\end{figure}

In this section, we illustrate some implications of the main theorems of this paper. More numerical results are provided in Appendix~\ref{apx : Additional illustrations}, which also contains the detailed definitions of the examples. 

In a first example, we consider the TTSA defined in Eq.~\eqref{eq:example_smooth_apx}  of Appendix~\ref{apx : Additional illustrations}. In Figure~\ref{fig:various illustrations}(a), we plot a trajectory of the iterates $(x_k,y_k)$ as a function of $k$ for $\alpha=0.005$ and $\beta=\alpha^{1.5}$. In this example, $h(y)=y$ and $y^*\approx 0.57$. As observed on this figure, the fast iterate $x_k$ tracks $h(y_k)=y_k$ but with a relatively large error. When $k$ goes to infinity, both iterates get close to $y^
*\approx 0.57$. This illustrates the vanishing term present in Theorems~\ref{thm : mse} and \ref{thm : bias}.  In Figure~\ref{fig:various illustrations}(b), we show the MSE of $x$ and $y$ as a function of $\alpha$, for the case $\beta=\alpha^{1.5}$. We observe that the MSE of $x$ is close to $\alpha$ and that the MSE of $y$ is close to $\beta=\alpha^{1.5}$. Hence, the MSE of $y$ is smaller than what is suggested by Theorem~\ref{thm : mse}. We believe that this is due to the fact that the dynamics of $\bar{f}$ and $\bar{g}$ are differentiable in $y^*$ (and not just Lipschitz) and that for this particular case stronger guarantees for the MSE of $y$ could be obtained.  

In our second example, we consider the TTSA where the fast variable $x_k$ tries to track $y_k$ by applying $x_{k+1} = x_k + \alpha (y_k - x_k)$, while $y_k$ tries to escape from $x_k$ by applying $y_{k+1} = y_k + \beta \frac{(y_k - x_k)}{|y_k-x_k|^\gamma}$ for some $\gamma<1$.  As shown in Figure~\ref{fig:various illustrations}(c), for $\gamma=0.95$, the MSE of $x$ in this example is essentially equal to $\beta^2/\alpha^2$, independently of $\alpha$ (remark that the figure contains two curves for each of the values $\alpha=0.01$ and $\alpha=0.5$, but they are superposed). This shows that the dependence on $\beta^2/\alpha^2$ cannot be avoided.  \newcontent{To be more precise, this example does not satisfy A.\ref{assumption: Lip and Mono of g} (because $g$ is not Lipschitz nor strong monotonic), but our method to obtain the bound on $\mathrm{MSE^x}$ (in \eqref{eq:MSE_x}) does not use A.\ref{assumption: Lip and Mono of g}. This means that the term $\beta^2/\alpha^2$ that is present in \eqref{eq:MSE_x} cannot be improved without using Asssumption~A.\ref{assumption: Lip and Mono of g}. The bounds of all of our Theorems contain terms in $\beta^2/\alpha^2$ that all come from $\mathrm{MSE^x}$. As we were unable to find an example that has this $\beta^2/\alpha^2$ term and that satisfies A.\ref{assumption: Lip and Mono of g}, it is possible that the dependence on $\beta^2/\alpha^2$ could be removed under A.\ref{assumption: Lip and Mono of g} but this would require a radically different analysis.  We refer to Appendix~\ref{apx : Additional illustrations} for a detailed discussion.}


\label{sec::exp}
\section{Conclusion}
In this paper, we carry out a finite time analysis on the bias and MSE of nonlinear two time-scale stochastic approximation (TTSA) under constant step-sizes. We consider a general model with Markovian noise where the transition kernel can depend on the iterates $(x_k,y_k)$, this allows for broad applications, for instance in reinforcement learning. We show that the bias and MSE are upper bounded by vanishing initialization terms plus a residual term that scales as $O(\alpha + {\beta^2 / \alpha^2})$ and we show that our analysis is tight. 
While our current framework establishes robust theoretical limits for TTSA, there are several directions for future work. The first is to improve the bounds by (slightly) strenghening the assumptions: for instance, could the MSE bounds be improved if we add B.\ref{assump : derivability} to Theorem~\ref{thm : mse}? 
Also, one may want to relax the finite state-space restriction on the Markovian noise. This will require careful technical adjustments, particularly in validating the Poisson equation solutions required to bound the noise impact in continuous or unbounded spaces. Furthermore, replacing the strict compactness assumption on the iterate trajectory with non-uniform bounds would capture a larger class of practical, unconstrained algorithms. 

\bibliographystyle{plain}
\bibliography{reference.bib}


\appendix


\clearpage 


\section{Main notations}


In this section, we recall some notation and main variables that are used in all appendix. 

\paragraph{Spaces and norms}
Let $x\in \mathbb R ^d$. The quantity $\|x\|$ is the Euclidean norm $\|x\| = (\sum_{i=1}^d x_i^2)^{1/2}$. For $y \in \mathbb R ^d$ we denote the inner product between $x$ and $y$ by $\langle x, y \rangle $. For a square matrix $M \in \mathbb R^{n\times n }$ we define the operator norm $\opnorm{M} = \sup_{\|x\| =1}\|Mx\|$. For a positive definite matrix $L$ we define the norm induced by $L$ as $\|x\|_L = \sqrt{x^T L x}$. We define the operator norm induced by $L$ by $ \|X\|_L = \sup_{\|u\|_L =1}\|Xu\|_L$. The quantity $\sigma_{\max}(L)$ denotes the maximal singular value of $L$ and $\lambda_{\max}(L)$ is its maximal eigenvalue, and the condition number of $L$ is $\kappa(L)=\tfrac{\lambda_{\max}(L)}{\lambda_{\min}(L)}$. 

\paragraph{Probabilities}
For a discrete random variable $\xi$ in a countable space $\Xi$, we have for a probability measure $\pi$ on $\Xi$ the notation $\Exp_{\xi \sim \pi}[f(x,\xi)] = \sum_{\xi\in \Xi} f(x,\xi) \pi(\xi)$ and for a kernel $P : \Xi \times \Xi \rightarrow [0,1]$ we have for $\epsilon \in \Xi$, $\Exp_{\xi \sim P(.|\epsilon)}[f(x,\xi)] = \sum_{\xi\in \Xi} f(x,\xi) P(\xi|\epsilon)$. For an event $E$ the quantity $\indicator{E}$ equals $1$ if $E$ is true and $0$ otherwise.

\paragraph{Derivatives} For a multivariate function $\bar f : \Rx \times \Ry \rightarrow \Rx$, $\nablafx(x,y)$ denotes the partial Jacobian with respect to the first argument $(\nablafx(x,y))_{i,j} = {\partial \overline f_i(x,y)\over \partial x_j}$ and $\nabla_y \overline f(x,y)$ denotes the partial Jacobian with respect to the second argument. 

\paragraph{Important variables.} In what follows, we will use the notation:
\begin{align*}
    \x{k} &\triangleq x_k - h(y_k) && \text{ Tracking error of $x_k$ with respect to $h(y_k)$. Note that $\mathrm{MSE}^x_k = \Expect{\norm{\x{k}}^2}$} \\
    \tildex{k} &\triangleq x_k-h(y^*)  && \text{ Tracking error of $x_k$ with respect to $h(y^*)$. Note that $\mathrm{Bias}^x_k = \Expect{\tildex{k}}$}\\
    \y{k} &\triangleq y_k - y^* && \text{ Tracking error of $y_k$. Note that $\mathrm{MSE}^y_k = \Expect{\norm{\y{k}}^2}$ and $\mathrm{Bias}^y_k = \Expect{\y{k}}$}
\end{align*}

\newcontent{
\paragraph{Step sizes} To obtain our result on the bias (Theorem~\ref{thm : bias}), we require a few conditions on the step sizes from the contraction under the Lyapunov norms passing by the existence and boundedness of the solutions of the Riccati and Sylvester equations to the invertibility conditions on the bias dynamics. We compile all the conditions here so we can refer to them easily in the statement of our Theorem~\ref{thm : bias}. In what follows, we define $\radiusU = \radiusUfixed$ where $M_J$ is a positive constant satisfying $\opnorm{\J{yx}} \leq M_J$.
}

\newcontent{
We start with the conditions on $\alpha^*$:
\begin{equation}
    \begin{aligned}
    \alpha^* \triangleq \min \bigg \{ & {1\over \muf}, \\
    & {1\over 4 \opnorm{\LyapF}(\LyapFNorm{ \J{xx} }^2+ {\epsStar{4}}^2 \kappa(L)(\Rbound)^2\LyapFNorm{\J{yx} }^2)}, \\
    & {1\over \opnorm{\J{xx}} + (\Rbound)\opnorm{\J{yx}} } \bigg \}
    \end{aligned}
    \label{eq: alpha star condition for bias thm}
\end{equation}
where the first upper bound comes from the MSE Theorem, the second bound gives us the contraction of the fast bias dynamics and can be found in \eqref{eq: alpha condition fast Lyapunov} where $\epsStar{4} = \min\{ \eqref{eq: Riccati eps for the ball}, \eqref{eq: Riccati eps for the contraction}\}$ and the last condition is for the invertibility of the fast bias dynamics and can be found in \eqref{eq: alpha for invertible fast bias}.
}

\newcontent{
Second, we enumerate the conditions on $\beta^*$: 
\begin{equation}
    \begin{aligned}
    \beta^* \triangleq \min \bigg \{ & {1\over \mug}, \\
    &{1 \over 4 \opnorm{G} ( 2\kappa(G) \opnorm{\Delta} M_J\radiusU+  \LyapGNorm{\Delta }^2 + \kappa(G) M_J^2 \radiusU^2)}, \\
    & {1 \over \opnorm{\Delta} + \radiusU \opnorm{\J{yx} }}  \bigg \}
    \end{aligned}
    \label{eq: beta star condition for bias thm}
\end{equation}
where the first condition is required for the results of the MSE theorem to hold, the second condition is reqruired by the contraction of the slow bias dynamics and can be found in \eqref{eq: condition beta contraction}, the last condition is about the invertibility of the slow dynamics and can be found in \eqref{eq: beta invertibility condition for slow bias}.}

\newcontent{
Finally, we define $\omega^*$ to be: 
\begin{equation}
    \begin{aligned}
    \omega^* \triangleq \min \bigg\{ & 1 , \\
     &{\radiusU \over  \opnorm{  \J{xx}^{-1}}  (\Rbound) ( \opnorm{\J{yy}} +(\Rbound) \opnorm{\J{yx}})}, \\
     & \frac{1}{\opnorm{ \J{xx}^{-1} } \left( \opnorm{ \J{yy} } + 2 \opnorm{ \J{yx} } (\Rbound) \right)}, \\
     &{1  \over  \opnorm{\J{yy}}  \opnorm{\J{xx}^{-1}} + 2 (\Rbound) \opnorm{ \J{yx}} \opnorm{ \J{xx}^{-1}}},\\
     & {1 \over 2 \opnorm{\LyapF} (\Rbound) (2   \tilde \alpha^* \kappa (L) \opnorm{\J{yx}}\opnorm{\J{xx}}  + 2 \kappa(L)^{(1/2)} M_J)}\bigg\}
    \end{aligned}
    \label{eq: omega star condition for bias thm}
\end{equation}
where the first condition is to make the results of the MSE hold, the second and third conditions come from the existence and boundedness of the Riccati equation and can be found in \eqref{eq: Riccati eps for the ball}, \eqref{eq: Riccati eps for the contraction}, the fourth condition, alongside the third, is for the existence and boundedness of the solution of the Sylvester equation and can be found in \eqref{eq: eps condition sylvester ball}, the fifth condition is needed for the contraction of the fast bias dynamics and can be found in \eqref{eq: eps condition for fast lyapunov contraction} with noting that $\tilde \alpha^*$ in the denominator is independent of $\alpha$ and $\beta$ and can be found in \eqref{eq: alpha condition fast Lyapunov}.
}

\section{Proof of Theorem~\ref{thm : mse}: MSE of TTSA}\label{section : mse detailed proofs}

In this section we prove the upper bound of Theorem~\ref{thm : mse} on the MSE of TTSA. We restate the theorem, before detailing the proof.

\mseThm*

\begin{proof} We divide the proof into two main parts, the first part is dedicated to the fast iterate MSE and the second part is for the slow iterate.

\paragraph{Upper bounding the MSE of the fast iterate} We recall that    $\x{k} \triangleq x_k - \H{k}$. We also introduce the shorthand
\begin{equation*}
\psi_k \triangleq \F{k}  - \barF{k}, \quad \Dh{k} \triangleq \H{k+1} - \H{k} \; .
\end{equation*}
Using the update \(x_{k+1}=x_k-\alpha \F{k}\), we obtain
\[
    \x{k+1}
    = x_{k+1}-\H{k+1}
    = \x{k}-\alpha \F{k}-\Dh{k},
\]
so expanding the squared norm gives
\begin{align*}
\norm{\x{k+1}}^2 =& \norm{x_{k+1} - \H{k+1}}^2\\ 
= & \norm{\x{k} -\alpha \F{k} -\Dh{k}}^2 \\
=& \norm{\x{k}}^2 + \alpha^2 \norm{\F{k}}^2 - 2\alpha \inner{\x{k}}{\barF{k}} \\
& + \|\Dh{k}\|^2 -2 \inner{\x{k}}{\Dh{k}} + 2\alpha \inner{\F{k}}{\Dh{k}} -2\alpha \inner{\x{k}}{\psi_{k}}
\end{align*}
Rearranging the terms to facilitate the organization of the bound:
\begin{align}
  \norm{\x{k+1}}^2  =& 
  \norm{\x{k}}^2 - 2\alpha \inner{\x{k}}{\barF{k}}  - 2\ \inner{\x{k}}{\Dh{k}}
 \label{eq: decay}\\ 
& +
\alpha^2 \norm{\F{k}}^2
 \label{eq: noisy dynamics}\\
&  +
\|\Dh{k}\|^2 + 2\alpha \inner{\F{k}}{\Dh{k}}
\label{eq: high orders terms}\\ 
& - 2\alpha \inner{\x{k}}{\psi_{k}}
 \label{eq: hard term} \; .
\end{align}
We first control the bounded second-order terms. By Lemma~\ref{lemma: bounds},
there exist constants \(\boundF,\boundH>0\) such that
\[
    \norm{\F{k}}\leq \boundF,
    \qquad
    \norm{\Dh{k}}\leq \beta \boundH .
\]
Therefore, by the Cauchy--Schwarz inequality,
\begin{align*}
\eqref{eq: noisy dynamics} + \eqref{eq: high orders terms} &\leq \alpha ^2 \norm{\F{k}}^2 + \norm{\Dh{k}}^2 + 2\alpha \norm{\F{k}}\norm{\Dh{k}}\\
& \leq \alpha ^2 \; \boundF^2  + \beta ^2 \; \boundH^2 + 2 \alpha \beta \; \boundF \boundH \;.
\end{align*}

We now extract the contraction term. By property of \(h\), for every
\(y\in\domY\),
\[
    \overline f(h(y),y)=0.
\]
Hence, recalling that \(\x{k}=x_k-h(y_k)\), we can write
\begin{align*}
-2\alpha\inner{\x{k}}{\barF{k}}
&=
-2\alpha
\inner{x_k-h(y_k)}
{\overline f(x_k,y_k)-\overline f(h(y_k),y_k)}
\\
&\leq
-2\alpha\muf\norm{x_k-h(y_k)}^2
\\
&=
-2\alpha\muf\norm{\x{k}}^2,
\end{align*}
where the inequality follows from the strong monotonicity assumption
A.\ref{assumption: Lip and Mono of f} applied to the first argument of
\(\overline f\), with \(y=y_k\).

The third term in line \eqref{eq: decay} is controlled by Cauchy--Schwarz and Young's inequality. Precisely:
\[
    -2\inner{\x{k}}{\Dh{k}}
    \leq 2\norm{\x{k}}\norm{\Dh{k}}
    \leq \alpha\muf\norm{\x{k}}^2
        + {1\over \alpha\muf}\norm{\Dh{k}}^2
    \leq \alpha\muf\norm{\x{k}}^2
        + {\beta^2\boundH^2\over \alpha\muf},
\]
where the last inequality follows from Lemma~\ref{lemma: bounds}. Combining the previous bounds on
\eqref{eq: decay}--\eqref{eq: high orders terms}, we obtain
\begin{align*}
    \norm{\x{k+1}}^2
    &\leq
    (1-\alpha\muf)\norm{\x{k}}^2
    -2\alpha\inner{\x{k}}{\psi_k}
    + \alpha^2\boundF^2
    + \beta^2\boundH^2
    + 2\alpha\beta\boundF\boundH
    + {\beta^2\boundH^2\over \alpha\muf}.
\end{align*}
For brevity, set
\[
    \mathrm{M}_{\alpha,\beta}
    \triangleq
    \alpha^2\boundF^2
    + \beta^2\boundH^2
    + 2\alpha\beta\boundF\boundH
    + {\beta^2\boundH^2\over \alpha\muf}.
\]
Iterating the previous inequality gives
\begin{align*}
    \norm{\x{k+1}}^2
    &\leq
    (1-\alpha\muf)^{k+1}\norm{\x{0}}^2
    -2\alpha
    \sum_{j=0}^{k}
    (1-\alpha\muf)^{k-j}
    \inner{\x{j}}{\psi_j}
    + \sum_{j=0}^{k}
    (1-\alpha\muf)^{k-j}
    \mathrm{M}_{\alpha,\beta}.
\end{align*}
Taking expectations yields
\begin{align}
    \Expect{\norm{\x{k+1}}^2}
    \leq&
    (1-\alpha\muf)^{k+1}\norm{\x{0}}^2
    \label{eq: init vanish fast mse}
    \\
    &+
    \mathrm{M}_{\alpha,\beta}
    \sum_{j=0}^{k}(1-\alpha\muf)^j
    \label{eq: easy geometric}
    \\
    &-
    2\alpha
    \Expect{
    \sum_{j=0}^{k}
    (1-\alpha\muf)^{k-j}
    \inner{\x{j}}{\psi_j}
    }.
    \label{eq: poisson term}
\end{align}
The first term \eqref{eq: init vanish fast mse} is the contribution of the initial condition and decays
geometrically. The two remaining terms correspond to the steady-state error induced by the constant
step-size. We first bound the deterministic geometric term \eqref{eq: easy geometric}:
\begin{align*}
    \mathrm{M}_{\alpha,\beta}
    \sum_{j=0}^{k}(1-\alpha\muf)^j
    &=
    \mathrm{M}_{\alpha,\beta}
    {1-(1-\alpha\muf)^{k+1}\over \alpha\muf}
    \leq
    {\mathrm{M}_{\alpha,\beta}\over \alpha\muf}.
\end{align*}
It remains to control the Markovian-noise term \eqref{eq: poisson term}. To do so, we invoke 
Lemma~\ref{lemma : poisson mse}, that we detail and prove in Appendix \ref{section: appendix lemmas}.
\begin{align*}
     \boundMSEFastPoisson
\end{align*}
Combining the above results provide
\begin{align*}
    \Expect{\norm{\x{k+1}}^2}
    \leq&
    (1-\alpha\muf)^{k+1}\norm{\x{0}}^2
    \nonumber\\
    &+
    \alpha
    \left(
        {4\boundXi\over 1-\alpha \muf}
        +2\boundXi
        +{\boundF^2
        +2(\boundXi \newcontent{(1 + |\Xi|\boundXY \Lk)}+\Lvf(\boundXY+\boundH))\boundF
        \over \muf}
    \right)    \\
    &+
    \beta
    \left(
        {2(\boundXi+\boundF)\boundH
        +2\Lvf\boundG(\boundXY+\boundH) + \newcontent{|\Xi|\boundXY \Lk  \boundXi \boundG}
        \over \muf}
    \right)
    +
    {\beta^2\over \alpha}
     {\boundH^2\over \muf}
        +
    {\beta^2\boundH^2\over \alpha^2\muf^2}.
\end{align*}
Using that $\frac{\beta^2}{\alpha} \leq \beta$ because $\beta\leq \alpha$, we are ready to formally define the constants 
\begin{align}
    B_1 &\coloneqq 4 \boundXi \;, \label{eq::B_1} \\
    B_2 &\coloneqq 2 \boundXi
        +{\boundF^2
        +2(\boundXi\newcontent{(1 + |\Xi|\boundXY \Lk)}+\Lvf(\boundXY+\boundH))\boundF
        \over \muf} \;,\label{eq::B_2} \\
    B_3 &\coloneqq 
        {2(\boundXi+\boundF)\boundH
        +2\Lvf\boundG(\boundXY+\boundH)+ \newcontent{2|\Xi|\boundXY \Lk  \boundXi \boundG} + \boundH^2
        \over \muf} \;, \; \text{and} \label{eq::B_3} \\
    B_4 &\coloneqq     {\boundH^2\over \muf^2}\;. \label{eq::B_4} 
\end{align}
This proves the desired MSE bound for the fast iterate, and allows us to define the non-vanishing term of the bound, 
\[\MSEx \coloneqq \alpha \;\left(\frac{\mathrm{B}_1}{1-\alpha \mu_f} + \mathrm{B}_2\right)  + \beta \; \mathrm{B}_3 + {\beta^2 \over \alpha^2 }\mathrm{B}_4 \;.\]


\paragraph{MSE of the slow iterate}
We now turn to the MSE of the slow iterate. The argument follows the same
structure as for the fast iterate, with one additional difficulty: the update of
\(y_k\) is evaluated at \((x_k,y_k)\), whereas the contraction assumption is
available for the reduced dynamics evaluated at \((h(y_k),y_k)\). We therefore
need to control the tracking error of the fast iterate, which enters the slow
recursion as an additional residual term. 

For convenience, introduce
\newcontent{
\[
    \Dg{k} \triangleq \barG{k}- \GH{k},
    \qquad
    \phi_k \triangleq \G{k}- \barG{k}.
\]}
Thus, \(\Dg{k}\) measures the error due to replacing \(x_k\) by \(h(y_k)\) in
the noisy \newcontent{averaged} dynamics, while \(\phi_k\) is the Markovian-noise fluctuations
around the \newcontent{averaged} slow dynamics.

Using the update \(y_{k+1}=y_k-\beta\G{k}\), we have
\[
    \y{k+1}=\y{k}-\beta\G{k}.
\]
Expanding the squared norm and adding/subtracting the reduced dynamics gives
\begin{align}
    \norm{\y{k+1}}^2
    =&
    \norm{\y{k}-\beta\G{k}}^2
    \nonumber\\
    =&
    \norm{\y{k}}^2
    -2\beta\inner{\y{k}}{\G{k}}
    +\beta^2\norm{\G{k}}^2
    \nonumber\\
    =&\newcontent{
    \norm{\y{k}}^2
    -2\beta\inner{\y{k}}{\GH{k}}
    -2\beta\inner{\y{k}}{\barG{k}-\GH{k}}}
    \nonumber\\
    &\quad
    \newcontent{+\beta^2\norm{\G{k}}^2
    -2\beta\inner{\y{k}}{\G{k}-\barG{k}}}
    \nonumber\\
    =&
    \norm{\y{k}}^2
    -2\beta\inner{\y{k}}{\GH{k}}
    -2\beta\inner{\y{k}}{\Dg{k}}
    \label{eq: norm error slow 1}
    \\
    &\quad
    +\beta^2\norm{\G{k}}^2
    \label{eq: norm error slow 2}
    \\
    &\quad
    -2\beta\inner{\y{k}}{\phi_k}.
    \label{eq: norm error slow 3}
\end{align}

By the strong monotonicity assumption A.\ref{assumption: Lip and Mono of g}, the
first two terms in \eqref{eq: norm error slow 1} yield a contraction. The
additional term involving \(\Dg{k}\) measures the error induced by evaluating
the slow update at \(x_k\) instead of \(h(y_k)\). More precisely,
\begin{align*}
\eqref{eq: norm error slow 1}   &=\norm{\y{k}}^2
    -2\beta \inner{\y{k}}{\GH{k}}
    -2\beta\inner{\y{k}}{\Dg{k}}
    \\
    &\leq
    \norm{\y{k}}^2
    -2\beta \mug \norm{\y{k}}^2
    +2\beta \norm{\y{k}}\norm{\Dg{k}}
    \\
    &\leq
    \norm{\y{k}}^2
    -2\beta \mug \norm{\y{k}}^2
    +\beta \mug\norm{\y{k}}^2
    + {\beta \over \mug}\norm{\Dg{k}}^2
    \\
    &\leq
    (1-\beta \mug)\norm{\y{k}}^2
    +{\beta \over \mug}\norm{\Dg{k}}^2
    \\
    &\leq
    (1-\beta \mug)\norm{\y{k}}^2
    +\beta { \Lg^2 \over \mug}\norm{\x{k}}^2 .
\end{align*}
In the first inequality, we used the strong monotonicity of the reduced slow
dynamics and Cauchy--Schwarz. In the second inequality, we used Young's
inequality \(2ab\leq \epsilon a^2+b^2/\epsilon\) with \(\epsilon=\mug\).
The last inequality follows from the Lipschitz property of \(\overline g(\cdot,\cdot)\):
\[
    \newcontent{\norm{\Dg{k}}
    =
    \norm{\barG{k}-\GH{k}}
    \leq
    \Lg \norm{x_k-h(y_k)}
    =
    \Lg\norm{\x{k}}}
\]
We take $\beta < \mug^{-1}$ so that the factor \(1-\beta\mug\) is strictly contractive and
bounded away from zero. The remaining deterministic term in \eqref{eq: norm error slow 2} is bounded
using Lemma~\ref{lemma: bounds}: there exists \(\boundG>0\) such that
\[
    \beta^2\norm{\G{k}}^2 \leq \beta^2\boundG^2 .
\]
Combining the previous bounds on
\eqref{eq: norm error slow 1}--\eqref{eq: norm error slow 3}, we obtain
\[
    \norm{\y{k+1}}^2
    \leq
    (1-\beta\mug)\norm{\y{k}}^2
    + { \Lg^2 \over \mug}\beta\norm{\x{k}}^2
    + \beta^2\boundG^2
    -2\beta\inner{\y{k}}{\phi_k}.
\]
Iterating this recursion from time \(0\) to time \(k\), and then taking
expectations, gives
\begin{align}
    \Expect{\norm{\y{k+1}}^2}
    \leq&
    (1-\beta \mug)^{k+1}\norm{\y{0}}^2
    + \boundG^2 \beta^2
    \sum_{j=0}^k (1-\beta \mug)^{j}
    \label{eq: mse slow 1}
    \\
    &+
    { \Lg^2 \over \mug} \beta
    \sum_{j=0}^k
    (1-\beta \mug)^{k-j}
    \Expect{\norm{\x{j}}^2}
    \label{eq: mse slow 2}
    \\
    &-
    2\beta
    \Expect{
    \sum_{j=0}^k
    (1-\beta \mug)^{k-j}
    \inner{\y{j}}{\phi_j}
    }.
    \label{eq: mse slow 3}
\end{align}

The first term in \eqref{eq: mse slow 1} is the contribution of the initial
condition and decays geometrically. The second term in \eqref{eq: mse slow 1}
is a geometric sum, and therefore
\begin{align*}
    \boundG^2 \beta^2
    \sum_{j=0}^k (1-\beta \mug)^j
    &=
    \boundG^2 \beta^2
    {1-(1-\beta \mug)^{k+1} \over \beta \mug}
    \\
    &\leq
    {\boundG^2 \over \mug}\beta .
\end{align*}

We now control the term \eqref{eq: mse slow 2}, which is the geometrically
weighted accumulation of the fast-iterate MSEs. 
Using the upper bound we obtained for the fast iterate, we get
\begin{align*}
    { \Lg^2 \over \mug} \beta
    \sum_{j=0}^k
    (1-\beta \mug)^{k-j}
    \Expect{\norm{\x{j}}^2}
    \leq&
    { \Lg^2 \over \mug} \beta
    \sum_{j=0}^k
    (1-\beta \mug)^{k-j}
    \Big(
        (1-\alpha\muf)^{j+1}\norm{\x{0}}^2
        + \MSEx
    \Big)
    \\
    \leq&
    {\Lg^2 \over \mug^2}\MSEx
    \\& +
    { \Lg^2 \over \mug} \beta (1-\alpha \muf)  
    \norm{\x{0}}^2\underbrace{\sum_{j=0}^k
    (1-\beta \mug)^{k-j}
    (1-\alpha\muf)^{j}}_{S_k} \;.
    \\
\end{align*}
For the term $S_k$, we simply remark that, for $j\in [k]$, it holds that 
\[(1-\beta \mug)^{k-j}
    (1-\alpha\muf)^{j} \leq (1-(\alpha \muf \wedge \beta \mug))^k. \]
Therefore, we can upper bound each term of the sum uniformly and obtain that 
\begin{align*}
    S_k \leq (k+1) (1-(\alpha \muf \wedge \beta \mug))^k \;. 
\end{align*}
Note that the bound is tight for $\alpha \muf = \beta \mug$. This leads to 

\begin{align*}
    { \Lg^2 \over \mug} \beta
    \sum_{j=0}^k
    (1-\beta \mug)^{k-j}
    \Expect{\norm{\x{j}}^2}
    \leq&
    {\Lg^2 \over \mug^2}\MSEx
    + \beta \; {\Lg^2 \over \mug}  
    \norm{\x{0}}^2 (k+1) (1- (\alpha \muf \wedge \beta \mug))^{k+1}  .
\end{align*}

It remains to control the Markovian-noise term \eqref{eq: mse slow 3}. As in
the fast-iterate analysis, we use the Poisson-equation decomposition from
Lemma~\ref{lemma : poisson mse}. In particular,
\begin{align*}
    -2\beta
    \Expect{
    \sum_{j=0}^k
    (1-\beta\mug)^{k-j}
    \inner{\y{j}}{\phi_j}
    }
    \leq&
    2\beta
    \abs{
    \Expect{
    \sum_{j=0}^k
    (1-\beta\mug)^{k-j}
    \inner{\y{j}}{\phi_j}
    }}
    \\
    \leq & 2 \beta \parentheses{ {(\Lvg + \newcontent{\boundXi \Lk |\Xi|} )\boundXY \boundG  \over \mug} +   \boundXi \parentheses{{3 - \beta \mug \over 1 - \beta \mug} + {\boundG \over \mug} }} \\
    & + 2 \alpha {(\Lvg + \newcontent{\boundXi \Lk |\Xi|} ) \boundXY \boundF\over \mug} 
\end{align*}
Combining this bound with the bounds on
\eqref{eq: mse slow 1} and \eqref{eq: mse slow 2}, we obtain
\begin{align*}
    \Expect{\norm{\y{k+1}}^2}
    \leq&
    (1-\beta\mug)^{k+1}\norm{\y{0}}^2
    + {\boundG^2\over \mug}\beta
    \\
    &+\beta \; {\Lg^2 \over \mug}  
    \norm{\x{0}}^2 (k+1) (1- (\alpha \muf \wedge \beta \mug))^{k+1}
    \\
    &+
    {\Lg^2\over \mug^2}\MSEx
    +
    2\beta
    \parentheses{
        {(\Lvg + \newcontent{\boundXi \Lk |\Xi|} )\boundXY \boundG + \boundXi \boundG \over \mug}
        + { 2 \boundXi \over  1 - \beta \mug}
        + \boundXi
    } \\ 
    &+ 2 \alpha {(\Lvg + \newcontent{\boundXi \Lk |\Xi|} ) \boundXY \boundF\over \mug}.
\end{align*}
It remains to identify the constants, 
\begin{align}
    D_1 &\coloneqq \frac{L_g^2}{\mug}, \label{eq::D_1} \\
    D_2 &\coloneqq \frac{L_g^2}{\mug^2} \;,\label{eq::D_2} \\
    D_3 &\coloneqq 
        4 \boundXi \;, \; \text{and} \label{eq::D_3} \\
    D_4 &\coloneqq \frac{\boundG^2}{\mug}  +   2
    \parentheses{
        {(\Lvg + \newcontent{\boundXi \Lk |\Xi|} )\boundXY \boundG + \boundXi \boundG \over \mug}
        +  \boundXi
    }\;. \label{eq::D_4} \\
    D_5 &\coloneqq 2{(\Lvg + \newcontent{\boundXi \Lk |\Xi|} ) \boundXY \boundF\over \mug}\;. \label{eq::D_5}
\end{align}
This proves the desired MSE bound for the slow iterate and concludes the proof
of the theorem.\end{proof}


\section{Proof of Theorem~\ref{thm : lower bound MSE}: Lower bounds}

Theorem~\ref{thm : lower bound MSE} shows that there exists a TTSA whose MSE is larger than $\Omega(\alpha)$ for both the fast and the slow iterates. This result is a consequence of Theorem~\ref{thm : lower bound} below where we exhibit an example that attains this lower bound, and where we also provide an example has an MSE of order $\Omega(\alpha)$ for both $x$ and $y$. 

\begin{theorem}\label{thm : lower bound}
    There exist a constant $C$ and a constant $\alpha_0>0$ and:
    \begin{enumerate}[label=(\roman*)]
        \item a TTSA satisfying all Assumptions of Theorem~\ref{thm : mse} such that for all $\beta\le\alpha\le\alpha_0$:
            \begin{align*}
                \liminf_{k\to\infty} \Expect{\|x_k-h(y_k)\|^2} \ge C\alpha\qquad \text{ and }\qquad 
                \liminf_{k\to\infty} \Expect{\|y_{k}-y^*\|^2} \ge C\alpha;
            \end{align*}
        \item a TTSA satisfying all Assumptions of Theorem~\ref{thm : bias} such that for all $\beta\le\alpha\le\alpha_0$:
            \begin{align*}
            \liminf_{k\to\infty} \|\Expect{x_k}-h(y^*)\| \ge C(\alpha + \beta)\qquad \text{ and }\qquad 
            \liminf_{k\to\infty} \|\Expect{y_k}-y^*\| \ge C(\alpha + \beta).
            \end{align*}
    \end{enumerate}
\end{theorem}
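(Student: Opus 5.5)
The plan is to use explicit one-dimensional constructions, with the noise $\xi_k$ i.i.d. uniform on $\Xi=\{-1,+1\}$. This is a trivially unichain Markov chain with a kernel that does not depend on the iterates, so $\Lk=0$. In both parts the fast update is
\[
f(x,y,\xi)=x-y-\xi ,
\]
so that $\overline f(x,y)=x-y$, $\muf=1$ and $h(y)=y$. The fast recursion $x_{k+1}=(1-\alpha)x_k+\alpha(y_k+\xi_{k+1})$ is then a convex combination. For part (ii), observe that $\beta\le\alpha$ gives $C(\alpha+\beta)\le 2C\alpha$, so it is enough to prove lower bounds of order $\alpha$.

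\textbf{Fast tracking error (common to both parts).} Write $\x{k}=x_k-y_k$. It satisfies
\[
\x{k+1}=(1-\alpha)\x{k}+\alpha\xi_{k+1}-(y_{k+1}-y_k),
\]
where $|y_{k+1}-y_k|\le\beta \boundG$. Because $\xi_{k+1}$ is independent of $\mathcal F_k$ and centered, expanding the square gives
\[
\Expect{\x{k+1}^2}\ge(1-\alpha)^2\Expect{\x{k}^2}+\alpha^2-O(\beta\,\Expect{|\x{k}|})-O(\beta^2).
\]
Combined with the upper bound $\Expect{\x{k}^2}=O(\alpha+\beta^2/\alpha^2)$ of Theorem~\ref{thm : mse}, this should give $\liminf_k\Expect{\x{k}^2}\ge c\alpha$. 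I expect this step to be the main obstacle: the perturbation $y_{k+1}-y_k$ is only $O(\beta)$ per step, so its cumulative cross-term effect must be shown to be $o(\alpha)$, and this is delicate when $\beta$ is close to $\alpha$. My first attempt is to split $\x{k}$ into the purely linear noise part $\sum_j(1-\alpha)^{k-j}\alpha\xi_{j+1}$, whose variance is exactly $\alpha/(2-\alpha)$, and a drift part, and to bound their correlation. If that fails, I would shrink the coupling constant in $g$, for example replacing $|x-y|$ by $\eta|x-y|$, so that $\boundG\eta$ is small relative to the noise variance. A similar moment computation gives $\Expect{\x{k}^4}\le C'\alpha^2$ asymptotically. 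Then the Paley--Zygmund/Hölder inequality $\Expect{|\x{}|}\ge(\Expect{\x{}^2})^{3/2}/(\Expect{\x{}^4})^{1/2}$ yields $\liminf_k\Expect{|\x{k}|}\ge c'\sqrt\alpha$.

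\textbf{Part (i).} Take $g(x,y,\xi)=y-\eta|x-y|$. This is Lipschitz but not differentiable, and $\overline g(h(y),y)=y$, so A.\ref{assumption: Lip and Mono of g} holds with $y^*=0$ and $\mug=1$. For A.\ref{assumption: Boundedness}, the update $y_{k+1}=(1-\beta)y_k+\beta\eta|\x{k}|$ is a convex combination of $y_k$ and $\eta|\x{k}|$. The fast update keeps $|x_k|\le\max(|x_0|,\sup_k|y_k|+1)$, and $|\x{k}|\le|x_k|+|y_k|$, so for $\eta\le 1/4$ (and bounded initial conditions) both iterates stay in a fixed compact interval by induction. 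Taking expectations gives $\Expect{y_{k+1}}=(1-\beta)\Expect{y_k}+\beta\eta\Expect{|\x{k}|}$, hence $\liminf_k\Expect{y_k}\ge\eta c'\sqrt\alpha$. By Jensen, $\Expect{y_k^2}\ge(\Expect{y_k})^2\ge C\alpha$, and the fast bound is already established.

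\textbf{Part (ii).} Take $g(x,y,\xi)=y-\eta\,\rho(x-y)$, where $\rho$ is smooth and equals $u^2$ near $0$ and is capped so that $g$ is globally Lipschitz with bounded second derivatives; for example $\rho(u)=u^2$ for $|u|\le1$, smoothly saturated beyond. Then $J_{xx}=1$, $J_{xy}=-1$, $J_{yx}=0$ and $J_{yy}=1$, so $\Delta=1$ and B.\ref{assump : derivability} holds; compactness follows as in part (i). Taking expectations, $\Expect{y_k}\to\eta\lim\Expect{\rho(\x{k})}$. This is $\ge\eta c\alpha$, using $\rho(u)\ge\min(u^2,\cdot)$ and the second-moment bound together with the fourth-moment bound to control the truncation. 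For $x$, $\Expect{x_k}-h(y^*)=\Expect{\x{k}}+\Expect{y_k}$. Taking expectations in the $\x{}$ recursion gives $\Expect{\x{k+1}}=(1-\alpha)\Expect{\x{k}}-\beta(\Expect{y_k}-\eta\Expect{\rho(\x{k})})$. The bracket tends to $0$, since it is exactly the expected increment of $y$, so $\Expect{\x{k}}\to0$ relative to $\alpha$. Hence $\liminf_k\Expect{x_k}\ge\eta c\alpha/2$, which completes the proof.
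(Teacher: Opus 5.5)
Your design (linear fast iterate with $h(y)=y$, nonlinearity placed in $g$) can be made to work, but two steps do not go through as written.

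The first is the step you flagged. When $\beta\asymp\alpha$, neither route you describe yields $\liminf_k\mathbb{E}[\hat x_k^2]\ge c\alpha$. Theorem~\ref{thm : mse} only gives $O(\alpha+\beta^2/\alpha^2)=O(1)$ in that regime. Any almost-sure bound on the increment also fails: with $|y_{k+1}-y_k|\le\beta\,\mathrm{M}_g$, or $\le\beta\eta B$ after shrinking the coupling (with $B$ an a.s. bound on $|\hat x_k|$), the cross term is of order $\eta\beta\,\mathbb{E}|\hat x_k|\asymp\eta\beta\sqrt\alpha$. For every fixed $\eta$, this dominates the $\alpha^2$ of noise injected per step as $\alpha\to0$. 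What is missing is an $L^2$ bound on the increment, obtained self-consistently:
\begin{itemize}
\item Take $x_0=y_0=0$ and write $\hat x_k=N_k+D_k$, with $N_k=\alpha\sum_{j<k}(1-\alpha)^{k-1-j}\xi_{j+1}$ and $D_k=-\sum_{j<k}(1-\alpha)^{k-1-j}(y_{j+1}-y_j)$.
\item Since $y_k$ is an exponential average of $\eta|\hat x_j|$, Minkowski gives $\|y_k\|_{L^2}\le\eta s$, where $s:=\sup_j\|\hat x_j\|_{L^2}$.
\item Hence $\|y_{k+1}-y_k\|_{L^2}\le2\beta\eta s$ and $\|D_k\|_{L^2}\le2\eta(\beta/\alpha)s\le2\eta s$.
\item This yields $s\le\sqrt{\alpha/(2-\alpha)}/(1-2\eta)$ and $\liminf_k\|\hat x_k\|_{L^2}\ge\frac{1-4\eta}{1-2\eta}\sqrt{\alpha/(2-\alpha)}$.
\item Similarly, $\mathbb{E}|\hat x_k|\ge\mathbb{E}|N_k|-\|D_k\|_{L^2}$ together with Khintchine's inequality for the Rademacher sum $N_k$ gives $\mathbb{E}|\hat x_k|\gtrsim\sqrt\alpha$ for $\eta$ small. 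No fourth moment of $\hat x_k$ is needed.
\end{itemize}
The same argument covers (ii) using $|\rho(u)|\le L_\rho|u|$. Since $|\hat x_k|$ is a.s. bounded and $\rho>0$ away from $0$, $\rho(u)\ge c\,u^2$ on its range, which removes the truncation step.

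The second gap is in the $x$-bias of (ii). You claim $\mathbb{E}[\hat x_k]\to0$ because the forcing term ``is exactly the expected increment of $y$''. This presupposes that $\mathbb{E}[y_k]$ converges, which is never established. From the recursion alone you only get $|\mathbb{E}\hat x_k|\lesssim\frac{\beta}{\alpha}\sup_j|\mathbb{E}y_j-\eta\mathbb{E}\rho(\hat x_j)|=O(\eta\beta)$. When $\beta\asymp\alpha$ this is the same order as the signal $\eta c\alpha$, so a cancellation in $\mathbb{E}\hat x_k+\mathbb{E}y_k$ is not excluded. (The forcing term also enters with $+\beta$, not $-\beta$.) The repair is one line: since $\mathbb{E}\xi_{k+1}=0$, we have $\mathbb{E}x_{k+1}=(1-\alpha)\mathbb{E}x_k+\alpha\mathbb{E}y_k$, so $\liminf_k\mathbb{E}x_k\ge\liminf_k\mathbb{E}y_k\ge\eta c\alpha$, and no limit needs to exist.

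With these fixes your argument is a genuine alternative to the paper's. The paper makes the fast iterate autonomous ($f(x)=x/(1+x)$, so $h\equiv0$ and there is no feedback from $y$). It imports the $\Theta(\alpha)$ bias and MSE of $x$ from the one-timescale nonlinear expansion of Allmeier et al., and $\mathbb{E}|x_k|\asymp\sqrt\alpha$ from a Gaussian approximation. It then takes $y$ to be a linear filter of $|x_k|$ in (i) or of $x_k$ in (ii). Your construction pays for the coupling, but gets every moment from an explicit Rademacher sum and does not rely on the existence of limits.
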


\begin{proof}
    The proof of (i) and (ii) uses the same fast iterate $(x_k)$ that is defined as: 
    \begin{align*}
        x_{k+1} &= x_k - \alpha(f(x_k) + \xi_{k+1}),
    \end{align*}
    where \newcontent{$f(x) = {x \over{1+x}}$. The sequence of noises $(\xi_{k})$ is a sequence of zero-mean \emph{i.i.d.} scaled Rademacher random variables (\emph{i.e.}, such that $\proba{\xi_k=1/2} = \proba{\xi_k=-1/2}=0.5$)}. Note that the function $f$ is Lipschitz continuous with a Lipschitz-continunous derivative. We assume that $\alpha\in(0,1)$ and we set $x_0=0$. 
    
    We first show by induction on $k$ that the sequence $x_k$ is bounded almost surely, and more precisely that $x_k\in[-1/2, 1]$. Assume that this holds for some $k$ and recall that
    \begin{align*}
        x_{k+1} = x_k - \alpha\left(\frac{x_k}{1+x_k} + \xi_{k+1}\right).
    \end{align*}
    We distinguish two cases:
    \begin{enumerate}
        \item If $\frac{x_k}{1+x_k} + \xi_k\ge0$, then 
        \begin{align*}
            x_{k+1} &\le x_k \le 1\\
            x_{k+1} &\ge x_k - \left(\frac{x_k}{1+x_k} + \xi_k\right) = \frac{x_k^2}{1+x_k} - \frac12 \ge -\frac12,
        \end{align*}
        where we used that $\alpha\le 1$ in the second line and the fact that $x^2/(1+x)\ge0$.
        \item If  $\frac{x_k}{1+x_k} + \xi_k\le0$, then 
        \begin{align*}
            x_{k+1} &\ge x_k\ge -\frac12\\
            x_{k+1} &\le x_k - \left(\frac{x_k}{1+x_k} + \xi_k\right) = \frac{x_k^2}{1+x_k} + \frac12 \le 1,
        \end{align*}
    \end{enumerate}

    As the behavior of $x_k$ does not depend on the slow iterates $(y_k)_k$, the sequence $(x_k)$ can be analyzed with standard tools from one-time-scale stochastic approximation. Here, we have $x^*=h(y_k)=0$. In particular, \newcontent{since near 0 this function has a second order approximation of $x-x^2$},from \cite{allmeier2024computingbiasconstantstepstochastic}, we know that the bias and the MSE of $x_k$ are of order $\alpha$ and asymptotically equal to:
    \begin{align*}
        \lim_{k\to\infty}\Expect{x_k} = \lim_{k\to\infty}\Expect{(x_k)^2} = \frac\alpha{2c} + O(\alpha^2).
    \end{align*}
    \newcontent{for some positive constant $c$}. Moreover, from \cite[Chapter~10]{Kushner1997StochasticAA}, when $\alpha$ is small and $k$ is large, the variable $x_k$ is close to a Gaussian distribution of variance $\alpha/2c$, which implies 
    \begin{align}
        \label{eq:apx proof lower bound 2}
        \lim_{k\to\infty}\Expect{|x_k|} = \sqrt{\alpha / d \pi}  + O(\alpha)
    \end{align}

    \newcontent{for some positive constant $d$}. For the variable $y_k$, we consider two different cases to prove either (i) or (ii):
    \begin{enumerate}[label=(\roman*)]
        \item To prove \emph{(i)}, we set
        \begin{align*}
            y_{k+1} = y_k - \beta ( y_k - |x_k|).
        \end{align*}
        Unrolling the recurrence equation $y_{k+1} = y_k(1-\beta)+\beta|x_k|$ shows that
        \begin{align*}
            y_{k+1} &= \beta \sum_{i=0}^{k} (1-\beta)^{k-i}|x_i|.
        \end{align*}
        Combined with \eqref{eq:apx proof lower bound 2}, this shows that
        \begin{align*}
            \lim_{k\to\infty} \Expect{y_{k}} &= \beta \sum_{i=0}^{\infty} (1-\beta)^{i} \lim_{k\to\infty} \Expect{|x_k|} \\
            &= \lim_{k\to\infty} \Expect{|x_k|}\\
            &= \sqrt{\frac{\alpha}{d \pi}} + O(\alpha).
        \end{align*}
        By using Jensen's inequality, this shows that: 
        \begin{align*}
            \lim_{k\to\infty}\Expect{(y_k)^2} &\ge \lim_{k\to\infty}(\Expect{y_k})^2\\
            &=\frac{\alpha}{d \pi} + O(\alpha^{3/2}).
        \end{align*}
        \item To prove \emph{(ii)}, we define $y_k$ as:
        \begin{align*}
            y_{k+1} = y_k - \beta ( y_k - x_k)
        \end{align*}
        This shows that $\lim_{k\to\infty}\Expect{y_k} = \lim_{k\to\infty}\Expect{x_k} = \alpha/2c + O(\alpha^2) $.
    \end{enumerate}
\end{proof}

The proof of the above theorem exhibits an example where the MSE of $y$ can be of order $\Omega(\alpha)$. To construct this example, we used a function $g$ such that $\bar{g}(x,y)=y-|x|$ that satisfies all assumptions of Theorem~\ref{thm : mse} but not the assumptions of Theorem~\ref{thm : bias} because it is not differentiable. In fact, we believe that having a non-differentiable function is needed to obtain the $\Omega(\alpha)$ lower bound for $\mathrm{MSE_y^\infty}$, and that under a stronger assumption of differentiability, one may obtain a tighter bound for the MSE. This is an interesting question for future work. 
\label{apx:lower bound}
\section{Proof of Theorem~\ref{thm : bias} : Bias of TTSA} \label{section: bias detailed proofs}
We start by restating the theorem, before detailing its proof.

\biasThm*
\begin{proof} 
We start by detailing the general outline of the proof. We begin by concatenating the TTSA updates to jointly analyze the bias of the fast and slow variables. Exploiting the differentiability of the underlying dynamics (Assumption B.~\ref{assump : derivability}), we apply a first-order Taylor expansion to linearize the updates, which yields a highly coupled linear dynamical system. To mathematically decouple these dynamics, we introduce a two-stage coordinate transformation. First, we block-triangularize the system by solving an algebraic Riccati equation. Then, we fully diagonalize the system by solving a Sylvester equation.

This transformation projects the bias into a new coordinate space governed by a diagonalized recursion. While this recursion features strong contracting terms, it is corrupted by higher-order perturbation residuals arising from the diagonalization process and the multiplication by the transformation matrices. However, the order of these perturbations will not affect the nature of the contractions due to our assumptions on the step sizes. The diagonal structure then allows us to unroll the recursion and bound each transformed component independently. Finally, because our transformation matrices possess finite norms, and by a simple triangular inequality we get the final bounds on the bias.

\textbf{Concatenating the iterates.} We start by deriving a recursion for the joint bias vector. 
In this proof, we work with deviations from the equilibrium and write
\[
    \tildex{k} \triangleq x_k-h(y^*),
    \qquad
    \text{ and we recall that }
    \tildey{k} = y_k-y^*.
\]
We also introduce the centered Markovian-noise terms
\[
    \psi_k \triangleq \F{k}-\barF{k},
    \qquad
    \varphi_k \triangleq \G{k}-\barG{k}.
\]
For the fast iterate, we obtain
\begin{align*}
    \Expect{\tildex{k+1}}
    &= \Expect{\tildex{k}}
       -\alpha \Expect{\barF{k}}
       -\alpha \Expect{\psi_k}
    \\
    &= \Expect{\tildex{k}}
       -\alpha
       \Expect{
       \overline{f}\bigl(h(y^*)+\tildex{k},\,y^*+\tildey{k}\bigr)}
       -\alpha \Expect{\psi_k}
    \\
    &= \Expect{\tildex{k}}
       -\alpha
       \Expect{
       \overline{f}(h(y^*),y^*)
       + J_{xx}\tildex{k}
       + J_{xy}\tildey{k}
       + R_f(\tildex{k},\tildey{k})}
       -\alpha \Expect{\psi_k}
    \\
    &= \Expect{\tildex{k}}
       -\alpha J_{xx}\Expect{\tildex{k}}
       -\alpha J_{xy}\Expect{\tildey{k}}
       -\alpha \Expect{R_f(\tildex{k},\tildey{k})}
       -\alpha \Expect{\psi_k},
\end{align*}
where we used $\overline{f}(h(y^*),y^*)=0$ in the last line.

Similarly, for the slow iterate,
\begin{align*}
    \Expect{\tildey{k+1}}
    &= \Expect{\tildey{k}}
       -\beta \Expect{\barG{k}}
       -\beta \Expect{\varphi_k}
    \\
    &= \Expect{\tildey{k}}
       -\beta
       \Expect{
       \overline{g}\bigl(h(y^*)+\tildex{k},\,y^*+\tildey{k}\bigr)}
       -\beta \Expect{\varphi_k}
    \\
    &= \Expect{\tildey{k}}
       -\beta
       \Expect{
       \overline{g}(h(y^*),y^*)
       + J_{yx}\tildex{k}
       + J_{yy}\tildey{k}
       + R_g(\tildex{k},\tildey{k})}
       -\beta \Expect{\varphi_k}
    \\
    &= \Expect{\tildey{k}}
       -\beta J_{yx}\Expect{\tildex{k}}
       -\beta J_{yy}\Expect{\tildey{k}}
       -\beta \Expect{R_g(\tildex{k},\tildey{k})}
       -\beta \Expect{\varphi_k},
\end{align*}
where we used $\overline{g}(h(y^*),y^*)=0$.

In both expansions, we applied a first-order Taylor expansion of the averaged
dynamics around the equilibrium $(h(y^*),y^*)$. Since the second derivatives
of $\bar f$ and $\bar g$ are bounded by Assumption~B.\ref{assump : derivability},
there exists constant $L_R>0$ such that, for all $k\geq 0$,
\[
    \norm{R_f(\tildex{k},\tildey{k})}
    \leq
    L_R\bigl(\norm{\tildex{k}}+\norm{\tildey{k}}\bigr)^2,
    \qquad
    \norm{R_g(\tildex{k},\tildey{k})}
    \leq
    L_R\bigl(\norm{\tildex{k}}+\norm{\tildey{k}}\bigr)^2 .
\]

Concatenating the two bias vectors yields the joint recursion
\begin{align}
    \jointbias{k+1}
    =
    \underbrace{\dynamicsmatrix}_{H_{\alpha,\beta}}
    \jointbias{k}
    +
    \underbrace{
    \begin{bmatrix}
          -\alpha \Expect{R_f(\tildex{k}, \tildey{k})}
          -\alpha \Expect{\psi_k}
          \\
          -\beta \Expect{R_g(\tildex{k}, \tildey{k})}
          -\beta \Expect{\varphi_k}
    \end{bmatrix}
    }_{\mathcal R_k}.
    \label{eq:concatenated-bias-recursion}
\end{align}
The matrix $H_{\alpha,\beta}$ is the linearized discrete-time dynamics of the
joint bias. Its block structure shows that the two components are still coupled:
the fast bias depends on the slow bias through the block $J_{xy}$, while the
slow bias depends on the fast bias through the block $J_{yx}$. This coupling is
the main reason why one cannot simply analyze the two bias coordinates
separately.

The next step is therefore to find a change of coordinates adapted to this
two-time-scale linear system. More precisely, following ideas from singular
perturbation theory \cite[Chapter~2]{kokotovic1999singular}, we construct
invertible transformations that turn $H_{\alpha,\beta}$ into a block-diagonal
matrix. In the transformed coordinates, the fast and slow bias components can
then be controlled separately, with their respective contraction rates. In the following, we drop the subscript $\alpha, \beta$ from the matrix $H$ for conciseness of notation.


\textbf{Triangularization through a Riccati equation.}
The first step is to remove the upper-right coupling in the linearized dynamics.
We look for an invertible change of coordinates $T$ such that
\[
    \Pi \triangleq T H T^{-1}
\]
is block lower triangular, or equivalently $H=T^{-1}\Pi T$.

We choose $T$ to be upper triangular with identity blocks on the diagonal:
\begin{align}
    T
    =
    \blockMatrix{I}{U}{0}{I},
    \qquad
    T^{-1}
    =
    \blockMatrix{I}{-U}{0}{I},
    \label{eq:riccati-transformation}
\end{align}
where $U\in\mathbb R^{d_x\times d_y}$ is to be determined. To simplify the
notation, write the block decomposition of $H$ as
\[
    H
    =
    \blockMatrix{A}{B}{C}{D}.
\]
Then a direct computation gives
\begin{align*}
    THT^{-1}
    &=
    \blockMatrix{I}{U}{0}{I}
    \blockMatrix{A}{B}{C}{D}
    \blockMatrix{I}{-U}{0}{I}
    \\
    &=
    \blockMatrix{A+UC}{R(U)}{C}{D-CU},
\end{align*}
where
\[
    R(U)
    \triangleq
    -(A+UC)U+B+UD .
\]
Thus, the upper-right block is zero if and only if $U$ solves
\begin{align}
    R(U)
    =
    -(A+UC)U+B+UD
    =
    0.
    \label{eq:riccati-equation-U}
\end{align}
This is an algebraic Riccati equation for the unknown block $U$ \cite[Eq. 2.8]{kokotovic1999singular}. Any solution
of~\eqref{eq:riccati-equation-U} yields the block lower-triangular form
\[
    \Pi
    =
    THT^{-1}
    =
    \blockMatrix{A+UC}{0}{C}{D-CU}.
\]

Writing the Riccati equation in terms of the blocks of the TTSA dynamics gives
\begin{align}
     \frac{\beta}{\alpha} U J_{yx} U
     + J_{xx} U
     - \frac{\beta}{\alpha} U J_{yy}
     =
     J_{xy}.
     \label{eq:riccati-equation}
\end{align}
This equation is the singular-perturbation Riccati equation associated with the
linearized two-time-scale dynamics. Under Assumption~B.\ref{assump : derivability}, it is shown in \cite[Chapter~2, Eq.~2.13]{kokotovic1999singular} that when $\beta/\alpha$ small enough, this Riccati equation admits a solution of the form
\[
    U
    =
    \sum_{\ell=0}^{\infty}\left(\frac{\beta}{\alpha}\right)^\ell \tilde{U}_\ell ,
\]
where the matrices $(\tilde{U}_\ell)_{\ell\geq 0}$ are independent of $\alpha$ and $\beta$. The matrix $\tilde{U}_0$ can be found by setting $\beta/\alpha=0$ in~\eqref{eq:riccati-equation} which gives
\[
    J_{xx}\tilde{U}_0=J_{xy}.
\]
Since $-J_{xx}$ is Hurwitz by Assumption~B.\ref{assump : derivability}, the matrix
$J_{xx}$ is invertible and $\tilde{U}_0=J_{xx}^{-1}J_{xy}$.
This shows that
\begin{align*}
    U
    &= \tilde{U}_0 
    +\sum_{\ell=1}^\infty \left(\frac{\beta}{\alpha}\right)^\ell \tilde{U}_\ell \\
    &=J_{xx}^{-1}J_{xy} + \frac{\beta}{\alpha}\hat{U}_{\alpha,\beta},
\end{align*}
where $\hat{U}_{\alpha,\beta}\triangleq\sum_{\ell=1}^\infty \left(\frac{\beta}{\alpha}\right)^{\ell-1} \tilde{U}_\ell$. 

\newcontent{From Assumption B.\ref{assump : derivability} and since we are assuming that ${\beta \over \alpha} < \min\{\epsStar{0}, \epsStar{1} \}$ where $\epsStar{0}, \epsStar{1}$ are defined in \eqref{eq: Riccati eps for the ball}, \eqref{eq: Riccati eps for the contraction} and are dependent on the predefined radius $\radiusU= \radiusUfixed$ we can apply Lemma~\ref{lemma: Uniform Bound Riccati} to show that the Riccati equation accepts a unique solution that lives inside the closed ball $\ballU = \{U : \|U - \tilde U_0\| \leq \radiusU\}$. This implies that for the selected range of the ratio $\beta \over \alpha $ we have : 
\[
\opnorm{U} \leq \Rbound
\]
}
We now apply the transformation to the linear recursion. Recall that
\[
    \bias{k}
    =
    \begin{pmatrix}
        \Expect{\tildex{k}}\\
        \Expect{\tildey{k}}
    \end{pmatrix}.
\]
Using $H=T^{-1}\Pi T$, the recursion~\eqref{eq:concatenated-bias-recursion} becomes
\begin{align*}
    \bias{k+1}
    &=
    \blockMatrix{I}{-U}{0}{I}
    \blockMatrix{
        I-\alpha J_{xx}-\beta UJ_{yx}
    }{0}{
        -\beta J_{yx}
    }{
        I-\beta(J_{yy}-J_{yx}U)
    }
    \blockMatrix{I}{U}{0}{I}
    \bias{k} +\mathcal R_k
    \\
\end{align*}
Define the transformed bias
\[
    \ricattibias{k}
    \triangleq
    T\bias{k}
    =
    \blockMatrix{I}{U}{0}{I}\bias{k}.
\]
Multiplying the bias recursion by $T$ gives
\begin{align}
    \ricattibias{k+1}
    &=
    \blockMatrix{
        I-\alpha \jxxab
    }{0}{
        -\beta J_{yx}
    }{
        I-\beta \jyyab
    }
    \ricattibias{k}
    +
    T \mathcal R_k,
    \label{eq:riccati-transformed-recursion}
\end{align}
where
\begin{align*}
    \jxxab
    &\triangleq
    J_{xx}
    +
    \frac{\beta}{\alpha}U \J{yx}
    \\
    \jyyab
    &\triangleq
    J_{yy}
    -
    J_{yx}J_{xx}^{-1}J_{xy}
    -
    \frac{\beta}{\alpha}J_{yx}\hat{U}_{\alpha,\beta}.
\end{align*}
In particular, the leading-order matrix for the slow variable's bias is
\[
    \Delta
    \triangleq
    J_{yy}
    -
    J_{yx}J_{xx}^{-1}J_{xy}.
\]
This matrix is the effective first-order drift of the slow dynamics. The assumption that $-\Delta$ is Hurwitz is standard
in two-time-scale stochastic approximation and appears, for instance, in the linear
and CLT analyses of TTSA
\cite{kwon2024twotimescalelinearstochasticapproximation, haque2025tightfinitetimebounds, hu2024centrallimittheoremtwotimescale, kaledin2020finite}.

For lighter notation, we write~\eqref{eq:riccati-transformed-recursion} as
\[
    r_{k+1}
    =
    \Pi r_k
    +
    T\mathcal R_k,
\]
where $r_k\triangleq \ricattibias{k}$, and
 $   \Pi
    \triangleq
    \blockMatrix{
        I-\alpha \jxxab
    }{0}{
        -\beta J_{yx}
    }{
        I-\beta \jyyab
    }$.

\paragraph{Block diagonalization through a Sylvester equation.}
After the Riccati transformation, the linear part of the recursion is block lower
triangular. The remaining coupling is the bottom left block $-\beta J_{yx}$, which still lets
the fast transformed bias influence the slow transformed bias. We now remove this
coupling by a second change of coordinates. Let
\begin{align}
    S
    \triangleq
    \blockMatrix{I}{0}{X}{I},
    \qquad
    S^{-1}
    =
    \blockMatrix{I}{0}{-X}{I},
    \label{eq:sylvester-transformation}
\end{align}
where $X\in\mathbb R^{d_y\times d_x}$ is to be determined. A direct computation gives
\begin{align*}
    S\Pi S^{-1}
    =
    \blockMatrix
    {I-\alpha \jxxab}
    {0}
    {X(I-\alpha \jxxab)-(I-\beta \jyyab)X-\beta J_{yx}}
    {I-\beta \jyyab}.
\end{align*}
Hence, the bottom left block vanishes if and only if
\begin{align*}
    X(I-\alpha \jxxab)
    -
    (I-\beta \jyyab)X
    -
    \beta J_{yx}
    =
    0.
\end{align*}
Equivalently, after simplifying and dividing by $\alpha$,
\begin{align}
    X\jxxab
    -
    \frac{\beta}{\alpha}\jyyab X
    =
    -\frac{\beta}{\alpha}J_{yx}.
    \label{eq:sylvester-equation-X}
\end{align}
This is a Sylvester equation for the unknown block $X$ \cite[Chapter~2, Eq~4.3]{kokotovic1999singular}, \cite[Lemma 8.3]{POZNYAK2008133}. From \cite[Lemma 8.3]{POZNYAK2008133}. 
\newcontent{Since we are assuming B.\ref{assump : derivability} and ${\beta \over \alpha } < \min\{ \epsStar{0}, \epsStar{1} , \epsStar{2}, \epsStar{3} \}$ that are dependent on $\radiusU$ and are defined in \eqref{eq: Riccati eps for the ball}, \eqref{eq: Riccati eps for the contraction}, \eqref{eq: eps condition sylvester ball}, \eqref{eq: eps condition sylvester contraction} then the above Sylvester equation accepts a unique solution $X$ that lives inside a closed ball $\ballX = \{X : \opnorm{X} \leq \eps \radiusX\}$ where $\radiusX = 2 \opnorm{\J{yx}} \opnorm{ \J{xx}^{-1}}$, this directly implies that:
\begin{align}
    \opnorm{X} \leq {\beta \over \alpha } \radiusX 
    \label{eq:alpha-X-bound}
\end{align}
}
Furthermore, the solution admits the expansion 
\begin{align*}
    X  = \sum_{\ell = 1}^\infty \parentheses{\beta \over \alpha }^\ell \tilde X_\ell, 
\end{align*}
We can solve for $\tilde X _1$ by replacing the sum into \eqref{eq:sylvester-equation-X} then dividing by $\beta \over \alpha$ then setting it to zero to get:
\begin{align*}
    \tilde X _1 = -J_{yx}J_{xx}^{-1}
\end{align*}
With this choice of $X$, the transformed matrix $ \Lambda
    \triangleq
    S\Pi S^{-1}$ is block diagonal:
\[
    \Lambda
    =
    \blockMatrix
    {I-\alpha \jxxab}
    {0}
    {0}
    {I-\beta \jyyab}.
\]

We now apply the second change of coordinates to the transformed bias vector
$r_k$. Define
\begin{align*}
    \sylvesterbias{k}
    \triangleq
    S\ricattibias{k}
    =
    \blockMatrix{I}{0}{X}{I}\ricattibias{k}.
\end{align*}
By the choice of $X$, the linear part of the recursion is block diagonal. Hence,
using \[S T = \blockMatrix{I}{U}{X}{XU+I},\] we obtain
\begin{align*}
    \sylvesterbias{k+1}
    &= \Lambda 
    \sylvesterbias{k}   -
    \blockMatrix{I}{U}{X}{XU+I}
    \begin{bmatrix}
        \alpha\bigl(
            \Expect{R_f(\tildex{k},\tildey{k})}
            +
            \Expect{\psi_k}
        \bigr)
        \\
        \beta\bigl(
            \Expect{R_g(\tildex{k},\tildey{k})}
            +
            \Expect{\varphi_k}
        \bigr)
    \end{bmatrix}.
\end{align*}

Hence, the concatenated transformed bias $s_k$ satisfies the compact recursion
\begin{align}
    s_{k+1}
    =
    \Lambda s_k
    +
    ST\mathcal R_k.
    \label{eq:compact-transformed-bias-recursion}
\end{align}


\paragraph{Recursion on the transformed system.}
We now roll out the transformed recursion~\eqref{eq:compact-transformed-bias-recursion}. 
Since the matrix $\Lambda$ is block diagonal, we obtain
\begin{align*}
    s_{k+1}
    =
    \Lambda^{k+1}s_0
    +
    \sum_{j=0}^k
    \Lambda^{k-j}ST\mathcal R_j .
\end{align*}
This representation is the main payoff of the two transformations: the linear part
of the recursion is now decoupled into a fast block and a slow block. Writing the two
components separately gives
\begin{align}
    s^x_{k+1}
    =&\,
    (I-\alpha \jxxab)^{k+1}s^x_0
    -\alpha
    \sum_{j=0}^k
    (I-\alpha \jxxab)^{k-j}
    \parentheses{
        \Expect{R_f(\tildex{j},\tildey{j})}
        +
        \Expect{\psi_j}
    }
    \nonumber\\
    &\,
    -\beta
    \sum_{j=0}^k
    (I-\alpha \jxxab)^{k-j}
    U
    \parentheses{
        \Expect{R_g(\tildex{j},\tildey{j})}
        +
        \Expect{\varphi_j}
    },
    \label{eq:sylvester-fast-bias}
    \\
    s^y_{k+1}
    =&\,
    (I-\beta \jyyab)^{k+1}s^y_0
    -\alpha
    \sum_{j=0}^k
    (I-\beta \jyyab)^{k-j}
    X
    \parentheses{
        \Expect{R_f(\tildex{j},\tildey{j})}
        +
        \Expect{\psi_j}
    }
    \nonumber\\
    &\,
    -\beta
    \sum_{j=0}^k
    (I-\beta \jyyab)^{k-j}
    (XU+I)
    \parentheses{
        \Expect{R_g(\tildex{j},\tildey{j})}
        +
        \Expect{\varphi_j}
    } .
    \label{eq:sylvester-slow-bias}
\end{align}

The original bias recursion was fully coupled: the fast and slow biases influenced each
other already at the level of the linear dynamics. After the changes of coordinates
defined by $T$ and $S$, this coupling has been pushed into the residual terms, while
the leading linear dynamics are block diagonal. This is precisely what allows us to
study the two components separately.

The next step is to show that the two diagonal blocks are contractions in suitable
Lyapunov norms. More precisely, for our choice on the step sizes, the matrices describing the perturbed dynamics:
\[
    I-\alpha \jxxab
    \qquad\text{and}\qquad
    I-\beta \jyyab
\]
inherit the stability of \newcontent{$I- \alpha J_{xx}$ and $I - \beta \Delta$}, respectively. Their powers therefore
produce the exponentially decaying initial-condition terms in the final bias bound,
with decay rates of order $\alpha$ for the fast component and $\beta$ for the slow
component.

\paragraph{Bound of the fast transformed component.}
We start from the recursion~\eqref{eq:sylvester-fast-bias}. For readability, write
\[
    A_x \triangleq I-\alpha J_{xx}^{\alpha,\beta}.
\]
Applying the triangle inequality, submultiplicativity of the operator norm, and
Jensen's inequality gives
\begin{align*}
\norm{s^x_{k+1}}
\leq\;&
    \norm{A_x^{k+1}s^x_0}
    +
    \alpha
    \sum_{j=0}^k
    \opnorm{A_x^{k-j}}
    \Expect{\norm{R_f(\tildex{j},\tildey{j})}}
\\
&\quad
    +
    \beta
    \sum_{j=0}^k
    \opnorm{A_x^{k-j}}
    \opnorm{U}
    \Expect{\norm{R_g(\tildex{j},\tildey{j})}}
\\
&\quad
    +
    \alpha
    \norm{
        \sum_{j=0}^k
        A_x^{k-j}
        \Expect{\psi_j}
    }
    +
    \beta
    \norm{
        \sum_{j=0}^k
        A_x^{k-j}
        U\Expect{\varphi_j}
    } .
\end{align*}
In the two terms involving $R_f$ and $R_g$, we used
$\norm{\Expect{Z}}\leq \Expect{\norm{Z}}$. By contrast, we keep the Markovian-noise
terms as weighted sums, since these will be controlled later through the
Poisson equation argument.

The Euclidean operator norm does not directly reveal the contraction of
$A_x=I-\alpha J_{xx}^{\alpha,\beta}$. We therefore use a Lyapunov norm adapted to
the unperturbed fast linear dynamics. Since $-J_{xx}$ is Hurwitz by
Assumption~B.\ref{assump : derivability}, there exists a unique symmetric positive
definite matrix $\LyapF$ solving
\begin{align}
    \LyapF J_{xx} + J_{xx}^\top \LyapF = I .
    \label{eq:lyapunov-equation-fast}
\end{align}
We denote the associated norm by
\[
    \norm{z}_{\LyapF}
    \triangleq
    \sqrt{z^\top \LyapF z}.
\]
\newcontent{Since we are assuming B.\ref{assump : derivability}, $\alpha < \alpha^*(\radiusU)$ as defined in \eqref{eq: alpha condition fast Lyapunov} and the ratio satisfies ${\beta \over \alpha} < \min\{\epsStar{0}, \epsStar{1},\epsStar{5}\}$ where $\epsStar{0}, \epsStar{1},\epsStar{5}$ depend on $\radiusU$ and are fully defined in \eqref{eq: Riccati eps for the ball}, \eqref{eq: Riccati eps for the contraction}, \eqref{eq: eps condition for fast lyapunov contraction} then after applying submultiplicativity of the norm we can apply Lemma~\ref{lemma: Lyapunov's fast contraction} to get:
we have:
\begin{align*}
    \norm{
        (I-\alpha J_{xx}-\beta UJ_{yx})^{k}
    }_{\LyapF}\leq \norm{
        I-\alpha J_{xx}-\beta UJ_{yx}
    }_{\LyapF}^k
    \leq
    e^{-kc\alpha}
\end{align*}
where $c = (8 \opnorm{\LyapF})^{-1}$, see Lemma~\ref{lemma: Lyapunov's fast contraction} and the proof for full derivation. Equivalently,
\[
    \norm{A_x^{k}
    }_{\LyapF}
    \leq
    e^{-kc\alpha}
\]}
which yields
for every $\ell\geq 0$,
\[
    \opnorm{A_x^\ell}
    \leq \scondL \LyapFNorm{A_x^\ell}\leq 
    \scondL e^{-c\alpha \ell}.
\]
Applying this bound to~\eqref{eq:sylvester-fast-bias} gives
\begin{align}
\norm{s^x_{k+1}}
\leq\;&
    \scondL e^{-(k+1)c\alpha}\norm{s^x_0}
    +
    \alpha\scondL
    \sum_{j=0}^k
    e^{-c\alpha(k-j)}
    \Expect{\norm{R_f(\tildex{j},\tildey{j})}}
    \label{eq:fast-bias-term-1}
\\
&\quad
    +
    \beta\scondL\opnorm{U}
    \sum_{j=0}^k
    e^{-c\alpha(k-j)}
    \Expect{\norm{R_g(\tildex{j},\tildey{j})}}
    \label{eq:fast-bias-term-2}
\\
&\quad
    +
    \alpha
    \norm{
        \sum_{j=0}^k
        A_x^{k-j}\Expect{\psi_j}
    }
    +
    \beta
    \norm{
        \sum_{j=0}^k
        A_x^{k-j}U\Expect{\varphi_j}
    } .
    \label{eq:fast-bias-term-3}
\end{align}
We now bound the terms on the right hand side separately. The first term in
\eqref{eq:fast-bias-term-1} is the exponentially decaying contribution of the
initial condition. The second term involves the Taylor remainder. Since
$R_f$ is a second order remainder, there exists a constant $L_R>0$ such that:
\begin{align*}
    \Expect{\norm{R_f(\tildex{j},\tildey{j})}}
    &\leq
    L_R
    \Expect{
        \bigl(\norm{\tildex{j}}+\norm{\tildey{j}}\bigr)^2
    }
    \\
    &\leq
    2L_R
    \Expect{
        \norm{\tildex{j}}^2+\norm{\tildey{j}}^2
    }
    \\
    &=
    2L_R
    \Expect{
        \norm{x_j-h(y_j)+h(y_j)-h(y^*)}^2
        +
        \norm{\tildey{j}}^2
    }
    \\
    &\leq
    2L_R
    \Expect{
        2\norm{x_j-h(y_j)}^2
        +
        2\norm{h(y_j)-h(y^*)}^2
        +
        \norm{\tildey{j}}^2
    }
    \\
    &\leq
    2L_R
    \Expect{
        2\norm{x_j-h(y_j)}^2
        +
        (2L_h^2+1)\norm{\tildey{j}}^2
    } .
\end{align*}
Setting:
\[
    d_1 \triangleq \max\{2,\,2L_h^2+1\},
\]
we obtain:
\[
    \Expect{\norm{R_f(\tildex{j},\tildey{j})}}
    \leq
    2L_Rd_1
    \Expect{
        \norm{x_j-h(y_j)}^2
        +
        \norm{\tildey{j}}^2
    } .
\]
Plugging this into the second term of~\eqref{eq:fast-bias-term-1} yields:
\begin{align*}
    \alpha\scondL
    \sum_{j=0}^k
    e^{-c\alpha(k-j)}
    \Expect{\norm{R_f(\tildex{j},\tildey{j})}}
    \leq
    2\alpha\scondL L_R d_1
    \sum_{j=0}^k
    e^{-c\alpha(k-j)}
    \Expect{
        \norm{x_j-h(y_j)}^2
        +
        \norm{\tildey{j}}^2
    } .
\end{align*}

By Lemma~\ref{lemma : Exp Sum Of the MSEs}, and using the notation
\[
    \mathcal M_{\alpha,\beta}
    \triangleq
    \mathrm{MSE}_{x}^{\infty}(\alpha,\beta)
    +
    \mathrm{MSE}_{y}^{\infty}(\alpha,\beta),
\]
we obtain
\begin{align*}
    \alpha \scondL
    \sum_{j=0}^k
    e^{-c\alpha(k-j)}
    \Expect{\norm{R_f(\tildex{j},\tildey{j})}}
    \leq\;&
    \alpha \underbrace{2\scondL L_R d_1}_{\triangleq C_{x,1}}
    \left(
        \mathcal M_{\alpha,\beta}
        +
        u_k^{c,\alpha}
        +
        w_k^{c,\alpha}
    \right)
    \nonumber\\
    &\quad
    +
    \underbrace{2\scondL L_R d_1 c^{-1}}_{ \triangleq C_{x,2}}
    \mathcal M_{\alpha,\beta}.
\end{align*}
The term involving $R_g$ is controlled in the same way. We denote $M_U \triangleq \Rbound$. Thus,
\begin{align}
    \eqref{eq:fast-bias-term-2}
    \leq\;&
    \beta \underbrace{2\scondL M_U L_R d_1}_{\triangleq C_{x,3}}
    \left(
        \mathcal M_{\alpha,\beta}
        +
        u_k^{c,\alpha}
        +
        w_k^{c,\alpha}
    \right)
    \nonumber +\frac{\beta}{\alpha}
    \underbrace{2\scondL M_U L_R d_1 c^{-1}}_{C_{x,4}}
    \mathcal M_{\alpha,\beta}.
\end{align}
It remains to control the two weighted Markovian-noise terms in
\eqref{eq:fast-bias-term-3}. By Lemma~\ref{lemma:bias-poisson},
\begin{align*}
    \eqref{eq:fast-bias-term-3}
    \leq\;&
    \alpha A_1
    +
    \beta(A_2+B_1)
    +
    \alpha^2 A_3
    +
    \beta^2 B_2
    +
    \alpha\beta(A_4+B_3)
    +
    \frac{\beta^2}{\alpha}B_4 .
\end{align*}
where the constants are defined in detail in the proof of the lemma. Merging all the bounds from above we obtain
\begin{align*}
    \norm{s^x_{k+1}}
    \leq\;&
    e^{-(k+1)c\alpha}\scondL\norm{s^x_0}
    +
    (\alpha + \beta )F_x^{(1)}
    +
    F_x^{(2)}\mathcal M_{\alpha,\beta}
    +
    (\alpha +\beta ) F_x^{(3)}  (u_k^{c,\alpha}+w_k^{c,\alpha}) \nonumber \\
    & + R^x_{\alpha, \beta} 
\end{align*}
where the quantities $F_x^{(1)},F_x^{(2)},F_x^{(3)}, R^x_{\alpha, \beta}$ are defined by:
\begin{align*}
    F_x^{(1)}&\coloneqq A_1 \vee  (A_2 + B_1) \vee \omega^* B_4, \\
    F_x^{(2)} &\coloneqq C_{x,2}+ \omega ^* C_{x,4} + F_x^{(3)}(\alpha^* +\beta^*) \;,\\
    F_x^{(3)}&\coloneqq C_{x,3} \vee C_{x,1} \;,  \\
    R^x_{\alpha, \beta}&\coloneqq \alpha^2 A_3
    +
    \beta^2 B_2
    +
    \alpha\beta(A_4+B_3)
    \;.
\end{align*}

\paragraph{Bound of the slow transformed component.}
We now turn to the slow component. Let
\[
    A_y \triangleq I-\beta J_{yy}^{\alpha,\beta},
    \qquad
    I' \triangleq I+XU .
\]
Applying the triangle inequality, submultiplicativity, and Jensen's inequality to
\eqref{eq:sylvester-slow-bias} gives
\begin{align*}
    \norm{s^y_{k+1}}
    \leq\;&
    \norm{A_y^{k+1}s^y_0}
    +
    \alpha
    \sum_{j=0}^k
    \opnorm{A_y^{k-j}}
    \opnorm{X}
    \Expect{\norm{R_f(\tildex{j},\tildey{j})}}
    \\
    &\quad
    +
    \beta
    \sum_{j=0}^k
    \opnorm{A_y^{k-j}}
    \opnorm{I'}
    \Expect{\norm{R_g(\tildex{j},\tildey{j})}}
    \\
    &\quad
    +
    \alpha
    \norm{
        \sum_{j=0}^k
        A_y^{k-j}X\Expect{\psi_j}
    }
    +
    \beta
    \norm{
        \sum_{j=0}^k
        A_y^{k-j}I'\Expect{\varphi_j}
    } .
\end{align*}

We use a Lyapunov norm adapted to the effective slow matrix
\[
    \Delta
    \triangleq
    J_{yy}-J_{yx}J_{xx}^{-1}J_{xy}.
\]
Since $-\Delta$ is Hurwitz by Assumption~B.\ref{assump : derivability}, there exists
a unique symmetric positive definite matrix $\LyapG$ solving the following Lyapunov equation: 
\begin{align}
    \LyapG\Delta+\Delta^\top\LyapG=I .
    \label{eq:Lyapunov-equation-slow}
\end{align}
\newcontent{
Since we are assuming B.\ref{assump : derivability}, and we have $\beta \leq \beta^*$ as defined in \eqref{eq: condition beta contraction} that also satisfies ${\beta \over \alpha} < \min \{\epsStar{0}, \epsStar{1}\}$ then we can apply  Lemma~\ref{lemma: Lyapunov's slow contraction} to have:
\begin{align*}
\LyapGNorm{I - \beta (\Delta-  \J{yx}(U - U_0))}  \leq e^{-d\beta }
\end{align*}
where $d = (8 \opnorm{\LyapG})^{-1}$, see Lemma~\ref{lemma: Lyapunov's slow contraction} and its proof for full derivation. Equivalently,
\begin{align*}
    \norm{A_y^{\ell}}_{\LyapG}
    \leq
    e^{-d\beta \ell},
    \qquad \forall \ell\geq 0.
\end{align*}
}
we obtain, for every $\ell\geq 0$,
\[
    \opnorm{A_y^\ell}
    \leq \scondG \LyapGNorm{A_y^\ell}\leq 
    \scondG e^{-d\beta \ell}.
\]
Therefore,
\begin{align}
    \norm{s^y_{k+1}}
    \leq\;&
    e^{-d\beta(k+1)}\scondG\norm{s^y_0}
     \nonumber\\
     & +
    \alpha\opnorm{X}\scondG
    \sum_{j=0}^k
    e^{-d\beta(k-j)}
    \Expect{\norm{R_f(\tildex{j},\tildey{j})}}
    \label{eq:slow-bias-term-1}
    \\
    &\quad
    +
    \beta\opnorm{I'}\scondG
    \sum_{j=0}^k
    e^{-d\beta(k-j)}
    \Expect{\norm{R_g(\tildex{j},\tildey{j})}}
    \label{eq:slow-bias-term-2}
    \\
    &\quad
    +
    \alpha
    \norm{
        \sum_{j=0}^k
        A_y^{k-j}X\Expect{\psi_j}
    }
    +
    \beta
    \norm{
        \sum_{j=0}^k
        A_y^{k-j}I'\Expect{\varphi_j}
    } .
    \label{eq:slow-bias-term-3}
\end{align}

We now bound the Taylor remainder terms. As in the fast component analysis, there
exists a constant $L_R>0$ such that
\[
    \Expect{\norm{R_f(\tildex{j},\tildey{j})}}
    \leq
    2L_R d_1
    \Expect{
        \norm{x_j-h(y_j)}^2+\norm{\tildey{j}}^2
    },
    \qquad
    d_1\triangleq \max\{2,2L_h^2+1\}.
\]
The same bound holds for $R_g$, up to increasing $L_R$ if necessary.

By Lemma~\ref{lemma : Exp Sum Of the MSEs},
\begin{align*}
    \sum_{j=0}^k
    e^{-d\beta(k-j)}
    \Expect{\norm{R_f(\tildex{j},\tildey{j})}}
    \leq
    2L_Rd_1
    \left[
        \left(1+\frac{1}{d\beta}\right)\mathcal M_{\alpha,\beta}
        +
        u_k^{d,\beta}
        +
        w_k^{d,\beta}
    \right].
\end{align*}
At first sight, the factor $1/\beta$ is problematic, \newcontent{however, the term \eqref{eq:slow-bias-term-1} still has a multiplication by $\alpha \opnorm{X}$. Since we are assuming B.\ref{assump : derivability}, we showed previously in \eqref{eq:alpha-X-bound} that \[
\alpha \opnorm{X} \leq \beta \radiusX
\]
with $\radiusX =  2 \opnorm{\J{yx}} \opnorm{ \J{xx}^{-1}}$.} Consequently,
\begin{align*}
    \eqref{eq:slow-bias-term-1}
    \leq\;&
    \beta \underbrace{ 2 \radiusX\scondG L_Rd_1}_{\triangleq C_{y,1}}
    \left(
        \mathcal M_{\alpha,\beta}
        +
        u_k^{d,\beta}
        +
        w_k^{d,\beta}
    \right)
    \nonumber\\
    &\quad
    +
    \underbrace{2\radiusX \scondG L_Rd_1d^{-1}}_{\triangleq C_{y,2}}
    \mathcal M_{\alpha,\beta}.
\end{align*}
The term \eqref{eq:slow-bias-term-2} is controlled similarly. 
\newcontent{For ${\beta \over \alpha }< \min\{\epsStar{0},\epsStar{1}, \epsStar{2}, \epsStar{3}\}$  we have \[
\opnorm{I'} \leq \opnorm{I} + \opnorm{X}\opnorm{U}\leq 1 + \radiusX(\Rbound) \triangleq M_{I'}
\]}
Lemma~\ref{lemma : Exp Sum Of the MSEs}
gives
\begin{align*}
    \eqref{eq:slow-bias-term-2}
    \leq\;&
    \beta \underbrace{2 M_{I'}\scondG L_Rd_1}_{\triangleq C_{y,3}}
    \left(
        \mathcal M_{\alpha,\beta}
        +
        u_k^{d,\beta}
        +
        w_k^{d,\beta}
    \right)
    \nonumber\\
    &\quad
    +
    \underbrace{2M_{I'}\scondG L_Rd_1d^{-1}}_{\triangleq C_{y,4}}
    \mathcal M_{\alpha,\beta}.
\end{align*}

It remains to bound the weighted Markovian-noise terms in
\eqref{eq:slow-bias-term-3}. By Lemma~\ref{lemma:bias-poisson},
\begin{align*}
    \eqref{eq:slow-bias-term-3}
    \leq
    \alpha(C_2+D_4)
    +
    \beta(C_1+D_1)
    +
    \alpha\beta(C_4+D_3)
    +
    \beta^2(C_3+D_2).
\end{align*}
We define : 
\begin{align*}
    F_y^{(1)}&\coloneqq (C_2 + D_4) \vee  (C_1 + D_1) \;, \\
    F_y^{(2)} &\coloneqq C_{y,2} + C_{y,4} + \beta^* (C_{y,1} + C_{y,3})  \;, \\
    F_y^{(3)}&\coloneqq C_{y,3} + C_{y,1}  \;,  \\
    R^y_{\alpha, \beta}&\coloneqq \alpha\beta(C_4+D_3)
    +
    \beta^2(C_3+D_2)\;. 
\end{align*}
We wrap all the bounds in the final one
\begin{align*}
    \norm{s^y_{k+1}}
    \leq\;&
    e^{-d\beta(k+1)}\scondG\norm{s^y_0}
    +
    (\alpha+\beta) F_y^{(1)}
    +
    F_y^{(2)}\mathcal M_{\alpha,\beta}
    +
    \beta F_y^{(3)}(u^{d,\beta}_k+ w^{d,\beta}_k)\nonumber  \\
    & + R^y_{\alpha, \beta}
\end{align*}
we merge the bounds of $\norm{s^x_{k}}$ and $\norm{s^y_{k}}$ to get 
\begin{align*}
    \norm{s^x_{k+1}} + \norm{s^y_{k+1}} \leq & e^{-(k+1)c\alpha}\scondL\norm{s^x_0}
    +  e^{-d\beta(k+1)}\scondG\norm{s^y_0}  \\
    & + \underbrace{F_x^{(1)}+  F_y^{(1)}}_{\triangleq F_1}(\alpha + \beta )
    + \underbrace{(F_x^{(2)} + F_y^{(2)})}_{\triangleq F_2}\mathcal M_{\alpha,\beta}
    +
    (\alpha +\beta ) F_x^{(3)}  (u_k^{c,\alpha}+w_k^{c,\alpha}) \nonumber \\
    &
    +
    \beta F_y^{(3)}(u^{d,\beta}_k+ w^{d,\beta}_k)\nonumber + \underbrace{R^x_{\alpha, \beta} + R^y_{\alpha, \beta}}_{\triangleq R_{\alpha, \beta}}
\end{align*}
We define the new sequence $\mathrm{\hat S^{MSE}}_k$ to be: 
\begin{align*}
    \mathrm{\hat S^{MSE}}_k = \beta F_y^{(3)}(u^{d,\beta}_k+ w^{d,\beta}_k) + (\alpha +\beta ) F_x^{(3)}  (u_k^{c,\alpha}+w_k^{c,\alpha})
\end{align*}
where we have showed that by Lemma~\ref{lemma : Exp Sum Of the MSEs} that the limit exist and it is zero 
\begin{align*}
    \lim_{k\to \infty }\mathrm{\hat S^{MSE}}_k = 0
\end{align*}
which yields our bound: 
\begin{align*}
    \norm{s^x_{k+1}} + \norm{s^y_{k+1}} \leq & e^{-(k+1)c\alpha}\scondL\norm{s^x_0}
    +  e^{-d\beta(k+1)}\scondG\norm{s^y_0}  \\
    & + F_1(\alpha + \beta )
    + F_2\mathcal M_{\alpha,\beta}
    + \mathrm{\hat S^{MSE}}_k + R_{\alpha, \beta}
\end{align*}
where $R_{\alpha, \beta}$ contains terms of higher order $O((\alpha + \beta)^2)$. Moreover, from the definition of $u_k^{c,\alpha}, w_k^{c,\alpha}$ in Lemma~\ref{lemma : Exp Sum Of the MSEs} there exists $a,b, \hat c, \hat d$ such that:
\begin{align}
    \mathrm{\hat S^{MSE}}_k\leq a e^{-\alpha \hat c k }\norm{\x{0}}^2 + b  e^{-\beta \hat d k } \norm{\y{0}}^2\label{eq:sequence-sum-mse}
\end{align}
So our final bound will be
\begin{align*}
    \norm{s^x_{k+1}} + \norm{s^y_{k+1}} \leq & e^{-(k+1)c\alpha}\scondL\norm{s^x_0}
    +  e^{-d\beta(k+1)}\scondG\norm{s^y_0} + a e^{-\alpha \hat c k }\norm{\x{0}}^2 + b  e^{-\beta \hat d k } \norm{\y{0}}^2  \\
    & + F_1(\alpha + \beta )
    + F_2\mathcal M_{\alpha,\beta}
    + R_{\alpha, \beta}
\end{align*}

\paragraph{Unwrapping the results.} It remains to return to the original bias coordinates. By construction,
\[
    s_k
    =
    \sylvesterbias{k}
    =
    S T \bias{k}.
\]
Since the matrices $S$ and $T$ are invertible, we can write equivalently,
\[
    \bias{k}
    =
    T^{-1}S^{-1}s_k .
\]
Therefore, by applying the norm on both sides,
\begin{align*}
    \norm{\bias{k}}
    &\leq
    \norm{T^{-1}S^{-1}s_k}
    \\
    &\leq
    \opnorm{T^{-1}S^{-1}}\norm{s_k}
    \\
    &\leq
    \opnorm{T^{-1}S^{-1}}
    \left(
        \norm{s_k^x}+\norm{s_k^y}
    \right).
\end{align*}
where in the last inequality we have used the triangular inequality since 
\begin{align*}
    \norm{\sylvesterbias{k}}  = \norm{ \vector{s_k^x}{0} +  \vector{0}{s_k^y }} \leq \norm{s_k^x}+ \norm{s_k^y}
\end{align*}
Plugging the bounds obtained above for the fast and slow
transformed components gives the desired bias bound:
\[
    \max\left\{
        \norm{b_k^x},
        \norm{{b_k^y}}
    \right\}
    \leq
    \opnorm{T^{-1}S^{-1}}
    \left(
        \norm{s_k^x}+\norm{s_k^y}
    \right).
\]
which concludes the proof of our theorem.
\end{proof}
\section{Supporting Lemmas}\label{section: appendix lemmas}
The first lemma is merely a collection of various Lipschitzness/boundedness properties induced by the assumptions we make in the paper, introducing several constants that we use in the proofs of the main theorems.
\begin{lemma}\label{lemma: bounds}
Under assumptions A.\ref{assumption: Boundedness}-\ref{assumption: Lip and Mono of g}, there exist finite non-negative constants $\boundF, \boundG, \boundH, \boundXi,\boundXY$ dependent on the problem settings satisfying :
\begin{enumerate}
    \item $\|\barF{k}\| \leq 2\Lf\boundXY+\|\overline f(0,0)\|\leq \boundF$
    \item $\|\barG{k}\| \leq 2\Lg\boundXY+ \|\overline g(0,0)\|\leq \boundG$
    \item $\|\F{k}\| \leq 2\Lf\boundXY+\boundXY \leq \boundF$
    \item $\|\G{k}\| \leq  2\Lg\boundXY+ \boundXY \leq \boundG$
    \item $\|\Dh{k}\| \leq \beta \Lh\boundG \leq \beta \boundH$
    \item $\|\psi_k\| \leq \boundXi$ 
    \item $\|\H{k}\| \leq \Lh \boundXY + \norm{\H{0}}\leq \boundH$
    \item Given assumption A.\ref{assumption: All Kernel Properties}-\ref{assumption: Lip and Mono of g}, for every $x_1, x_2 \in \Rx , y_1, y_2 \in \Ry$, $v_f$ and $v_g$ (as defined in \eqref{eq: poisson solution fast} and \eqref{eq: poisson solution slow}) are $L_{v_f}$ and $L_{v_g}$-Lipschitz continuous:
    \begin{align*}
        \| v_f(x_1,y_1,\xi)- v_f(x_2,y_2,\xi)\|\leq \Lvf  (\|x_1-x_2\|+\|y_1 -y_2\|)\\
        \| v_g(x_1,y_1,\xi)- v_g(x_2,y_2,\xi)\|\leq \Lvg  (\|x_1-x_2\|+\|y_1 -y_2\|)
    \end{align*}
    \item $\|v_f(x_1,y_1,\xi)\|\leq  2\Lf\boundXY + f_0 \leq \boundXi $
    \item $\inner{\x{k}}{v_f(x_k,y_k,\xi_{k+1}) }\leq  \boundXi$
\end{enumerate}
\end{lemma}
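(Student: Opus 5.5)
Most items follow directly from compactness (A.\ref{assumption: Boundedness}), the finiteness of $\Xi$, and the Lipschitz assumptions. The plan is to set $\boundXY \triangleq \sup_{x\in\domX,y\in\domY}(\norm{x}+\norm{y})<\infty$ and derive each bound from a one-line Lipschitz estimate.

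For items 1--4, write $\norm{f(x,y,\xi)}\le \norm{f(x,y,\xi)-f(0,0,\xi)}+\norm{f(0,0,\xi)}\le \Lf\boundXY+\max_{\xi\in\Xi}\norm{f(0,0,\xi)}$. The maximum is finite because $\Xi$ is finite. Since $\overline f$ is a convex combination of the $f(\cdot,\cdot,\xi)$, the same bound holds for $\overline f$, and the factor $2$ in the statement only adds slack. Set $\boundF$ to be the largest of these quantities, and do the same for $g$ to obtain $\boundG$. Item 6 follows from $\norm{\psi_k}\le 2\boundF$.

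For items 5 and 7, use the $\Lh$-Lipschitz property of $h$ established after A.\ref{assumption: Lip and Mono of f}:
\[
\norm{\Dh{k}}\le \Lh\norm{y_{k+1}-y_k}=\Lh\beta\norm{\G{k}}\le\beta\Lh\boundG ,
\qquad
\norm{\H{k}}\le \norm{\H{0}}+\Lh\norm{y_k-y_0}\le \norm{\H{0}}+2\Lh\boundXY .
\]
This gives $\boundH$ after enlarging the constant.

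The real content is in items 8--10, which concern the Poisson solutions $v_f,v_g$. I take these to be the solutions of $v_f(x,y,\xi)-\sum_{\xi'}\Kernel{}(\xi,\xi')v_f(x,y,\xi')=f(x,y,\xi)-\overline f(x,y)$, normalized by $\sum_\xi \Stationary{}(\xi)v_f(x,y,\xi)=0$, and similarly for $v_g$. On a finite unichain, $v_f=(I-K+\mathbf 1\pi^\top)^{-1}(f-\overline f)$, where the fundamental matrix $Z_{(x,y)}=(I-K_{(x,y)}+\mathbf 1\pi_{(x,y)}^\top)^{-1}$ exists.

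The main obstacle is a uniform bound on $\opnorm{Z_{(x,y)}}$ over $\domX\times\domY$. To get it, I would use continuity: $K$ is $\Lk$-Lipschitz, so $\pi$ is Lipschitz (as noted after A.\ref{assumption: All Kernel Properties}). Hence $(x,y)\mapsto Z_{(x,y)}$ is continuous, since inversion is continuous on invertible matrices. The unichain property for every $(x,y)$ gives invertibility at each point, and compactness then gives $\sup\opnorm{Z}\eqqcolon M_Z<\infty$. This yields item 9: $\norm{v_f}\le M_Z\cdot 2\boundF$ up to a dimension factor $|\Xi|$, and we call the result $\boundXi$.

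For item 8 I would apply the resolvent identity $Z_1-Z_2=Z_1(A_2-A_1)Z_2$ with $A=I-K+\mathbf 1\pi^\top$, which gives $\opnorm{Z_1-Z_2}\le M_Z^2(\Lk+L_\pi)(\norm{x_1-x_2}+\norm{y_1-y_2})$, up to $|\Xi|$ factors. Combining this with the Lipschitz continuity of $f-\overline f$ (constant $\Lf+\Lf+\boundF L_\pi|\Xi|$) via the product rule gives $\Lvf$, and likewise $\Lvg$.

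Item 10 follows from Cauchy--Schwarz, $\norm{\x{k}}\le\boundXY+\boundH$, and item 9, after enlarging $\boundXi$ to absorb the factor $\boundXY+\boundH$.
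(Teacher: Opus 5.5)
Your proposal is correct. For items 1--7 and 10 it is essentially the paper's argument: triangle inequality with Lipschitz bounds, $\norm{\Dh{k}}\le\Lh\beta\norm{\G{k}}$, $\norm{\psi_k}\le 2\boundF$, and Cauchy--Schwarz.

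There is one small improvement in those items. You bound $\overline f,\overline g$ as convex combinations $\sum_\xi \pi_{(x,y)}(\xi) f(\cdot,\cdot,\xi)$. The paper instead writes $\norm{\barF{k}-\overline f(0,0)}\le\Lf(\norm{x_k}+\norm{y_k})$, which implicitly treats $\overline f$ as $\Lf$-Lipschitz even though $\pi_{(x,y)}$ varies with $(x,y)$. Your route avoids that issue.

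The genuine difference is in items 8--9. The paper only cites \cite[Lemma 8]{allmeier2024computingbiasconstantstepstochastic} and \cite[Lemma D.4]{haque2025tightfinitetimebounds} for the Lipschitz property, and asserts $\norm{v_f}\le 2\Lf\boundXY+f_0$ without proof. You give a self-contained argument instead:
\begin{itemize}
\item fix the normalization $\pi^\top v_f=0$;
\item write $v_f=Z(f-\overline f)$ with the fundamental matrix $Z=(I-K+\mathbf 1\pi^\top)^{-1}$;
\item obtain $\sup\opnorm{Z}<\infty$ from continuity plus compactness;
\item obtain $\Lvf$ from the resolvent identity and the product rule.
\end{itemize}
The citation route is shorter. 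Yours makes explicit three things the statement leaves implicit. First, the Poisson solution must be normalized for Lipschitzness in $(x,y)$ to be meaningful. Second, $\boundXi$ and $\Lvf$ depend on the chain through $\sup\opnorm{Z}$, and its square for the Lipschitz constant. Third, the estimates hold on $\domX\times\domY$ rather than on all of $\Rx\times\Ry$ as item 8 literally reads. This restriction is harmless, since $v_f,v_g$ are only ever evaluated at iterates.

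A minor remark: your compactness step needs only continuity of $\pi_{(x,y)}$. That follows because $\pi_{(x,y)}$ is the unique solution of a full-rank linear system with continuous coefficients. The identity $\pi_1^\top-\pi_2^\top=\pi_1^\top(K_1-K_2)Z_2$ then gives Lipschitzness of $\pi_{(x,y)}$ with constant of order $\Lk\sup\opnorm{Z}$. So you do not need to import that fact from the paper.
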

\begin{proof}
We provide one line proof for each point:
\begin{enumerate}
    \item $\|\barF{k}\| \leq \|\barF{k} - \overline f(0,0)\| + \|\overline f(0,0)\|\leq \Lf (\|x_k\|+\|y_k\|) + \|\overline f(0,0)\| \leq 2\Lf\boundXY+\|\overline f(0,0)\|\leq \boundF$
    \item $\|\barG{k}\| \leq 2\Lg\boundXY+ \|\overline g(0,0)\|\leq \boundG$
    \item $\|\F{k}\| \leq 2\Lf\boundXY+\boundXY \leq \boundF$
    \item $\|\G{k}\| \leq  2\Lg\boundXY+ \boundXY \leq \boundG$
    \item $\|\Dh{k}\|=\|\H{k+1}-\H{k})\|\leq \Lh \|\beta \G{k}\|\leq \beta  \boundH$
    \item $\|\psi_k\| = \|\F{k} - \barF{k}\| \leq \|\F{k}\| + \|\barF{k}\| \leq 2 \boundF \leq  \boundXi$ 
    \item $\|\H{k}\| \leq \Lh \boundXY + \norm{\H{0}} \leq \boundH$
    \item See \cite[Lemma 8]{allmeier2024computingbiasconstantstepstochastic} and \cite[Lemma D.4]{haque2025tightfinitetimebounds}.
    \item $\|v_f(x_1,y_1,\xi)\|\leq  2\Lf\boundXY + f_0 \leq  \boundXi $
    \item $\inner{\x{k}}{ \vf{k}}\leq \norm{\x{k}} \norm{\vf{k}} \leq \boundXi$
\end{enumerate}
\end{proof}
The next lemma proves bounds on the terms containing the weighted sum of deviations of the dynamics estimators from their stationary averages $\F{k} - \barF{k}$ and $\G{k}-\barG{k}$ in Theorem~\ref{thm : mse}.

\begin{lemma} 
    \label{lemma : poisson mse}
Taking the sequence $(x_k,y_k)_{k\geq 0}$ defined by \eqref{eq: TTSA update} and under assumptions A.\ref{assumption: Boundedness}-\ref{assumption: Lip and Mono of f}, for every $\alpha  \muf <1 $ and $\beta \mug <1$:
    \begin{align*} 
        \boundMSEFastPoisson  \\
        \boundMSESlowPoisson
    \end{align*}
    where $\psi_k = \F{k}  - \barF{k}$ and $\newcontent{\phi_k = \G{k} - \barG{k}}$.
\end{lemma}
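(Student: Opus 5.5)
I will follow the Poisson-equation decomposition used in \cite[Lemma 8]{allmeier2024computingbiasconstantstepstochastic}, adapted to parameter-dependent kernels and geometric weights. For each $(x,y)$, let $v_f(x,y,\cdot)$ solve the Poisson equation
\[
v_f(x,y,\xi)-\mathbb{E}_{\xi'\sim K_{(x,y)}(\xi,\cdot)}[v_f(x,y,\xi')] = f(x,y,\xi)-\overline f(x,y).
\]
A solution exists and is bounded, since the chain is unichain on the finite set $\Xi$. It is Lipschitz in $(x,y)$ by Lemma~\ref{lemma: bounds} (item 8). The function $v_g$ is defined analogously.

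Applying the equation at $\xi=\xi_{k+1}$ gives the three-term split
\[
\psi_k = \underbrace{\bigl(\poissf{k}{k+1}-\expvf{(x_k,y_k)}{k}{k}\bigr)}_{\text{martingale-type}} + \underbrace{\bigl(\expvf{(x_k,y_k)}{k}{k}-\expvf{(x_k,y_k)}{k+1}{k}\bigr)}_{\text{telescoping-type}}.
\]
I then insert this into $\alpha\sum_j(1-\alpha\muf)^{k-j}\inner{\x{j}}{\psi_j}$ and treat each piece separately.

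\textbf{The martingale-type piece.} Its conditional expectation given $\mathcal F_j$ is zero, because $\xi_{j+1}\sim K_{(x_j,y_j)}(\xi_j,\cdot)$ and $\x{j}$ is $\mathcal F_j$-measurable. After taking expectations, this piece vanishes exactly.

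\textbf{The telescoping-type piece.} Write $a_j \triangleq \inner{\x{j}}{\mathbb E_{\xi'\sim K_{(x_j,y_j)}(\xi_j,\cdot)} v_f(x_j,y_j,\xi')}$, and let $a'_{j}$ be the same quantity with $\xi_{j+1}$ in place of $\xi_j$. The goal is to compare $a'_j$ with $a_{j+1}$. Their difference has three sources:
\begin{itemize}[label={}]
\item[(a)] the iterate moves, so $\x{j+1}-\x{j}$ is of size $O(\alpha\boundF+\beta\boundH)$, multiplied by $\boundXi$;
\item[(b)] $v_f$ changes with the parameters, contributing $\Lvf(\|x_{j+1}-x_j\|+\|y_{j+1}-y_j\|)\le \Lvf(\alpha\boundF+\beta\boundG)$, multiplied by a bound on $\|\x{j}\|$ of order $\boundXY+\boundH$;
\item[(c)] the kernel changes from $K_{(x_j,y_j)}$ to $K_{(x_{j+1},y_{j+1})}$, which by A.\ref{assumption: All Kernel Properties} costs at most $|\Xi|\Lk(\alpha\boundF+\beta\boundG)\boundXi\boundXY$.
\end{itemize}
Each of these is $O(\alpha+\beta)$ per step. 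Summed against the weights $(1-\alpha\muf)^{k-j}$ and multiplied by $\alpha$, they give contributions of order $\alpha(\alpha+\beta)/(\alpha\muf)=O(\alpha+\beta)/\muf$. This matches the $\boundF(\ldots)/\muf$ and $\beta(\ldots)/\muf$ terms in the statement.

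\textbf{The weighted Abel summation.} What remains is
\[
\alpha\sum_j(1-\alpha\muf)^{k-j}(a_j-a_{j+1}).
\]
Summation by parts turns this into boundary terms $\alpha(1-\alpha\muf)^{k}a_0$ and $\alpha a_{k+1}$. It also produces the weight differences $\alpha\sum_j\bigl[(1-\alpha\muf)^{k-j}-(1-\alpha\muf)^{k-j+1}\bigr]a_j$. Since $|a_j|\le\boundXi$ (item 10 of Lemma~\ref{lemma: bounds}), the differences sum to at most $1$, and this part is bounded by $\alpha\boundXi\cdot O(1)$. Collecting these with care about whether the index shift carries a factor $(1-\alpha\muf)^{-1}$ should produce the term $\alpha\boundXi\frac{3-\alpha\muf}{1-\alpha\muf}$.

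\textbf{The slow iterate.} The bound for $\beta\sum_j(1-\beta\mug)^{k-j}\inner{\y{j}}{\phi_j}$ follows the identical scheme with $v_g$, weights $1-\beta\mug$, and $\|\y{j}\|\le\boundXY$. The per-step drift is again $O(\alpha\boundF+\beta\boundG)$. Dividing by $\beta\mug$ and multiplying by $\beta$ now leaves an $\alpha/\mug$ contribution from the $x$-motion, which is exactly the $\alpha(\Lvg+\boundXi\Lk|\Xi|)\boundXY\boundF/\mug$ term.

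\textbf{Main obstacle.} I expect the hardest step to be the kernel-drift term (c), because the Poisson solution is tied to $K_{(x_j,y_j)}$ while $\xi_{j+2}$ is drawn from $K_{(x_{j+1},y_{j+1})}$. I would first try bounding
\[
\bigl|\textstyle\sum_{\xi'}\bigl(K_{(x_{j+1},y_{j+1})}-K_{(x_j,y_j)}\bigr)(\xi,\xi')\,v(\xi')\bigr|\le|\Xi|\Lk\boundXi\|\Delta(x,y)\|
\]
directly. The second delicate point is getting the exact geometric-weight boundary constants right.
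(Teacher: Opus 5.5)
Your proposal is correct and follows essentially the paper's argument: a Poisson-equation split into a zero-mean martingale piece, Lipschitz drift of $v_f$ and of the kernel costing $O(\alpha\boundF+\beta\boundG)$ per step, and a weighted Abel summation. The differences are bookkeeping: you place the martingale one step earlier (conditioning on $\mathcal F_j$ via $\xi_{j+1}\sim K_{(x_j,y_j)}(\xi_j,\cdot)$) and telescope the conditional means $a_j$, which reproduces every $1/\muf$ term of the statement and gives $3\alpha\boundXi\le\alpha\boundXi\frac{3-\alpha\muf}{1-\alpha\muf}$ for the boundary part. The only thing to patch is $j=0$: the chain starts at $\xi_1$, so there is no $\xi_0$ to condition on, and you should set $a_0\triangleq\inner{\x{0}}{v_f(x_0,y_0,\xi_1)}$ instead of splitting off a martingale term there (the paper's shift to $\xi_{j+2}$ is what avoids this).
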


\begin{proof} 
To prove this lemma, we use the same technique in \cite[Lemma 8]{allmeier2024computingbiasconstantstepstochastic} and \cite{kaledin2020finite} by invoking the solution of a Poisson equation that allow us to decompose the deviation terms $\F{j}-\barF{j}$ inside the sum into a Martingale Difference Sequence (MDS), a bounded term and a telescoping term. Since our terms in the sum are weighted by $(1-\alpha\muf)^{k-j}$ we don't have exactly the telescoping terms that we want, instead, we have additional residuals that we needed to deal with. \newcontent{Moreover, since our transition kernel $K$ is depending on the iterates, we will have a drift between the Poisson solution time step and the required term to build the Martingale Difference}. In this proof we first start by introducing the core idea of the Poisson equation, then we derive each component individually to get their respective bounds. Moreover, throught this lemma we bound images of the functions $g$ and $\bar g$ by continuity and compactness and not using A.~\ref{assumption: Lip and Mono of g} and we use the same upper bound constant $\boundG$. 
Under assumption A.\ref{assumption: All Kernel Properties}, by \cite[Definition 21.2.1]{MC_2018_moulines} and \cite[Lemma 8]{allmeier2024computingbiasconstantstepstochastic} $v_f$ is a solution of the following Poisson equation:

\begin{align}
    \newcontent{\vf{k} - \sum_{\xi'\in \Xi}\Kernel{k}(\xi_{k+1}, \xi' ) v_f(x_k,y_k,\xi') = \F{k} - \barF{k} \label{eq: poisson solution fast}}
\end{align}
\newcontent{In the following, we use the notation, with $K_k \triangleq  \Kernel{k}$ \[\expvf{k}{k+1}{k} \triangleq \sum_{\xi'\in \Xi}\Kernel{k}(\xi_{k+1}, \xi' )  v_f(x_k,y_k,\xi').\] }
It is shown in \cite[Lemma 8]{allmeier2024computingbiasconstantstepstochastic} that $v_f$ is Lipschitz-continuous. Our goal from introducing this solution is to be able to write this decomposition: 
\begin{align*}
    \sum_{j=0}^{k} (1-\alpha\muf)^{k-j}\inner{\x{j}}{\psi_j} =\sum_{j=0}^{k} \text{MDS + Telescoping Term+ Bounded Term + Kernel Drift}
\end{align*}
First, we start by substituting \eqref{eq: poisson solution fast} in the sum:
\begin{align*}
 \sum_{j=0}^{k} (1-\alpha\muf)^{k-j}\inner{\x{j}}{\psi_j} =\sum_{j=0}^{k} (1-\alpha\muf)^{k-j}\inner{\x{j}}{ \vf{j} - \newcontent{\expvf{j}{j+1}{j}]}}
\end{align*}
\newcontent{We notice that the term $\expvf{j}{j+1}{j}$ does not make a Martingale Difference with the term $\vf{j}$ since $\xi'$ is sampled at the step $j+2$, we add and subtract $\poissf{j}{j+2}$:
\begin{align*}
\poissf{j}{j+1} - \expvf{j}{j+1}{j} =& \poissf{j}{j+2} - \expvf{j}{j+1}{j} \\
& + \poissf{j}{j+1} - \poissf{j}{j+2}
\end{align*}
However, this is not enough since $\xi_{j+2}\sim \Kernel{j+1}(\xi_{j+1}, \cdot)$ and not $\Kernel{j}(\xi_{j+1}, \cdot)$. To obtain a Martingale Difference we add and subtract $\expvf{j+1}{j+1}{j}$. Finally, to use Lipschitzness of $v_f$ and get a telescoping term we add and subtract $\poissf{j+1}{j+2}$. This will give us the following decomposition of the Markovian noise deviation}
\begin{align}
    \inner{\x{j}}{\psi_j} = \Big{\langle} \x{j},&v_f(x_j,y_j,\xi_{j+2}) - \newcontent{\expvf{j+1}{j+1}{j}}\label{eq: poisson fast 1}\\
    & + v_f(x_{j+1},y_{j+1},\xi_{j+2}) - v_f(x_{j},y_{j},\xi_{j+2})\label{eq: poisson fast 2}\\  
    & +\vf{j} - v_f(x_{j+1},y_{j+1},\xi_{j+2}) \label{eq: poisson fast 3}\\
    & + \newcontent{\expvf{j+1}{j+1}{j}-\expvf{j}{j+1}{j}} 
    \Big{\rangle}\label{eq: poisson fast 4}
\end{align}
the first term \eqref{eq: poisson fast 1} is an MDS, so it reduces to zero under expectation: 
\begin{align*}
    \Expect{ \sum_{j=0}^{k} (1-\alpha\muf)^{k-j}\inner{\x{j}}{ v_f(x_j,y_j,\xi_{j+2}) - \newcontent{\expvf{j+1}{j+1}{j}}}} = 0
\end{align*}
The second term \eqref{eq: poisson fast 2} scales linearly with the sum of step-sizes, since $f$ and $K$ are Lipschitz, so $v_f$ will be $\Lvf$-Lipschitz too for some positive constant $\Lvf$ and this is due to Lemma \ref{lemma: bounds}. So we can bound \eqref{eq: poisson fast 2} by:
\begin{align*}
     \abs{\eqref{eq: poisson fast 2}}
     \leq &  \norm{\x{j}} \norm { v_f(x_{j+1},y_{j+1},\xi_{j+2}) - v_f(x_{j},y_{j},\xi_{j+2})}\\
     = & \norm {x_j - h(y_j)} \norm { v_f(x_{j+1},y_{j+1},\xi_{j+2}) - v_f(x_{j},y_{j},\xi_{j+2})}\\
     \leq & \Lvf \norm {x_j - h(y_j)} (\norm{x_{j+1} -x_j} + \norm {y_{j+1} - y_j})\\
     \leq & \Lvf (\norm {x_j} +\norm{h(y_j)} )(\norm{x_{j+1} -x_j} + \norm {y_{j+1} - y_j})\\
     \leq & \Lvf (\boundXY +  \boundH) ( \norm{\alpha \F{j} } + \norm{\beta \G{j} })\\
     \leq & \Lvf (\boundXY +  \boundH) ( \alpha \boundF +  \beta \boundG )
\end{align*}
For brevity we set $z_j = \inner{\x{j}}{ v_f(x_{j+1},y_{j+1},\xi_{j+2}) - v_f(x_{j},y_{j},\xi_{j+2})}$. Now we apply our bound on the sum to have 
\begin{align*}
    \abs{\alpha\sum_{j=0}^{k} (1-\alpha\muf)^{k-j}z_j} \leq & \alpha\sum_{j=0}^{k} (1-\alpha\muf)^{k-j}\abs{z_j} \\  \leq & \alpha\sum_{j=0}^{k}(1-\alpha\muf)^{k-j}\Lvf (\boundXY +  \boundH) ( \alpha \boundF +  \beta \boundG )\\
    = & \alpha \Lvf (\boundXY +  \boundH) ( \alpha \boundF +  \beta \boundG ) \parentheses{1 - (1-\alpha \muf)^{k+1} \over \muf \alpha }\\
    \leq &  {\Lvf (\boundXY +  \boundH)\over \muf} ( \alpha \boundF +  \beta \boundG )
\end{align*}
\newcontent{ To see why the term \eqref{eq: poisson fast 4} is bounded we use our assumptions  A~\ref{assumption: All Kernel Properties} and A~\ref{assumption: Lip and Mono of f} about Lipschitzness of $K$ and $f$ 
\begin{align*}
    \eqref{eq: poisson fast 4} &= \norm {\sum_{\xi' \in \Xi} (K_{j+1}(\xi_{j+1}, \xi') - K_{j}(\xi_{j+1}, \xi'))v_f(x_j,y_j,\xi')}\\
    & \leq \sum_{\xi' \in \Xi} \norm {K_{j+1}(\xi_{j+1},\xi') - K_{j}(\xi_{j+1}, \xi')}\norm { v_f(x_j,y_j,\xi')}\\
    & \leq \sum_{\xi' \in \Xi} \Lk \parentheses{\norm{x_{j+1} - x_j}+ \norm{y_{j+1} - y_j}} \boundXi\\
    & \leq \Lk \boundXi(\alpha \boundF + \beta \boundG) |\Xi|
\end{align*}
which gives the bound 
\begin{align*}
    \alpha \abs{ \sum_{j=0}^k(1-\muf \alpha)^{k-j}\inner{\x{j}}{\expvf{j+1}{j+1}{j}-\expvf{j}{j+1}{j}}} \\ \leq {|\Xi| \boundXY \Lk \boundXi(\alpha \boundF + \beta \boundG)  \over \muf}
\end{align*}
}
Finally, we handle the third term \eqref{eq: poisson fast 3} which is supposed to be a telescoping term inside the sum:
\begin{align}
      \alpha \sum_{j=0}^{k} (1-\alpha\muf)^{k-j} \inner {\x{j}}{ v_f(x_j,y_j,\xi_{j+1}) - v_f(x_{j+1},y_{j+1},\xi_{j+2})} \label{eq: main poisson sum for x}
\end{align}
This is not perfectly telescoping. This hurdle emerges from two quantities: first, in the inner product itself we have $\x{j}$ which depends on $j$ and second, the weights $(1-\alpha \muf)^{k-j}$ also depend on $j$. To get over this, we will proceed in two steps. First, we fix the telescoping term inside the inner product by using the update rule in \eqref{eq: TTSA update}. Due to this step, we will have a remainder term dependent on the dynamics of $x_j$ and controlled by $\alpha$. The second step, is to handle the weights to have our telescoping terms. These two steps, will introduce remainders that we will need to control. Now we detail every step, so we start with the inner product and we set $v_j = \vf{j}$ so we can easily write the computations:
\begin{align*}
    \inner{\x{j}}{ v_{j+1} - v_{j}} 
    & = \inner{x_j- \H{j}}{v_{j+1}} - \inner{\x{j}}{v_{j}}\\
    &= \inner{x_{j+1} + \alpha \F{j} - \H{j}}{v_{j+1}} -\inner{\x{j}}{v_{j}}\\
    & = \inner{\x{j+1} + \alpha \F{j} + \Dh{j}}{v_{j+1}} -\inner{\x{j}}{v_{j}}\\
    & = \inner{\x{j+1}}{v_{j+1}}  - \inner{\x{j}}{v_{j}} + \inner{\alpha \F{j} +\Dh{j}} {v_{j+1}}
\end{align*}
where $\Dh{j}= \H{j+1}-\H{j}$. Now, we have a telescoping term plus a remainder. Substituting the last line into \eqref{eq: main poisson sum for x}: 
\begin{align}
    \eqref{eq: main poisson sum for x} &=  \alpha \sum_{j=0}^{k}  (1-\alpha\muf)^{k-j}( \inner{\x{j+1}}{v_{j+1}}  - \inner{\x{j}}{v_{j}}  ) \label{eq: almost telescoping}\\ 
     &+ \alpha^2 \sum_{j=0}^{k}  (1-\alpha\muf)^{k-j}  \inner{\F{j}}{v_{j+1}} \nonumber\\
     &+ \alpha \sum_{j=0}^{k}  (1-\alpha\muf)^{k-j} \inner{\Dh{j}}{ v_{j+1}} \nonumber
\end{align}
Starting with the first term \eqref{eq: almost telescoping}. Because of the weights in the sum $(1-\alpha \muf)^{k-j}$ we can't directly have a telescoping sum. An extra step has to be done for this term to be able to make it telescope with the cost of introducing new remainders. Hopefully, these remainders can be controlled. We proceed by:
\begin{align*}
    \eqref{eq: almost telescoping}=& \alpha \sum_{j=0}^{k}  (1-\alpha\muf)^{k-j} \inner{\x{j+1}}{v_{j+1}}  - \alpha \sum_{j=0}^{k}  (1-\alpha\muf)^{k-j} \inner{\x{j}}{v_{j}}    \\
     =& \alpha (1-\alpha\muf)\sum_{j=0}^{k}  (1-\alpha\muf)^{k-(j+1)} \inner{\x{j+1}}{v_{j+1}}  -\alpha  \sum_{j=0}^{k}  (1-\alpha\muf)^{k-j} \inner{\x{j}}{v_{j}} \\ 
    =& \alpha \sum_{j=0}^{k}  (1-\alpha\muf)^{k-(j+1)} \inner{\x{j+1}}{v_{j+1}}  - \alpha \sum_{j=0}^{k}  (1-\alpha\muf)^{k-j} \inner{\x{j}}{v_{j}} \\
    & -\alpha^2\muf\sum_{j=0}^{k}  (1-\alpha\muf)^{k-(j+1)} \inner{\x{j+1}}{v_{j+1}}\\ 
    = & \alpha (1-\alpha\muf)^{-1}\inner{\x{k+1}}{v_{k+1}} - \alpha (1-\alpha\muf)^k \inner{\x{0}}{v_{0}}  \\
    &-\alpha^2\muf\sum_{j=0}^{k}  (1-\alpha\muf)^{k-(j+1)} \inner{\x{j+1}}{v_{j+1}}
\end{align*}
where in the second equality we factored out $1-\alpha \muf$ to make the inside weight match the weight in the other sum. In the last line, we removed the telescoping terms to end up with the final three terms. Now, into \eqref{eq: main poisson sum for x} we apply absolute value to get: 
\begin{align}
   \abs{\eqref{eq: main poisson sum for x}} \leq & \abs{\alpha \Big((1-\alpha\muf)^{-1}\inner{\x{k+1}}{v_{k+1}}- (1-\alpha\muf)^k \inner{\x{0}}{v_{0}} \Big) \label{eq: fast poisson telescoping remainder 1}} \\
   &+ \alpha \sum_{j=0}^{k}  (1-\alpha\muf)^{k-j}\abs{ \inner{\Dh{j}}{ v_{j+1}} }\label{eq: fast poisson telescoping remainder 2}\\
    &  +\alpha^2\muf\sum_{j=0}^{k}  (1-\alpha\muf)^{k-(j+1)} \abs{ \inner{\x{j+1}}{v_{j+1}}} \label{eq: fast poisson telescoping remainder 3}\\ 
    &+ \alpha^2 \sum_{j=0}^{k}  (1-\alpha\muf)^{k-j} \abs{ \inner{\F{j}}{v_{j+1}}} \label{eq: fast poisson telescoping remainder 4}
\end{align}
We bound each term independently. Starting with the first term \eqref{eq: fast poisson telescoping remainder 1}: 
\begin{align*}
    \eqref{eq: fast poisson telescoping remainder 1} \leq \alpha{\boundXi \over 1 - \alpha \muf} + \alpha {\boundXi  } = \alpha \boundXi \parentheses{ 2 -\alpha \muf \over 1 - \alpha \muf}
\end{align*}
the second term \eqref{eq: fast poisson telescoping remainder 2}
\begin{align*}
   \eqref{eq: fast poisson telescoping remainder 2} &\leq \alpha\sum_{j=0}^{k}  (1-\alpha\muf)^{k-j} \|\Dh{j}\|\|v_{j+1}\| \leq  \alpha  \beta \boundXi \boundH \sum_{j=0}^{k}  (1-\alpha\muf)^{j} \\
     & = \alpha \beta \boundXi \boundH {1 - (1-\alpha\muf)^{k+1} \over \alpha \muf} \leq \beta {  \boundXi \boundH \over \muf} 
\end{align*}
the third term \eqref{eq: fast poisson telescoping remainder 3}
\begin{align*}
   \eqref{eq: fast poisson telescoping remainder 3} &\leq   \alpha ^2 \muf \boundXi \sum_{j=0}^{k} (1-\alpha\muf)^{k-(j+1)}  =  \alpha ^2 \muf \boundXi (1-\alpha\muf)^{-1}\sum_{j=0}^{k} (1-\alpha\muf)^{k-j} \\
    & \leq \alpha {\boundXi \over 1 - \alpha \muf}
\end{align*}
and the last term \eqref{eq: fast poisson telescoping remainder 4}
\begin{align*}
     \eqref{eq: fast poisson telescoping remainder 4}&\leq \alpha^2 \boundXi \boundF \sum_{j=0}^{k}  (1-\alpha\muf)^{j} \leq  \alpha { \boundXi \boundF \over \muf} 
\end{align*}
After bounding each term individually, using bounds from Lemma~\ref{lemma: bounds}, we apply these bounds on the main sum \eqref{eq: main poisson sum for x}
\begin{align*}
    \abs{\eqref{eq: main poisson sum for x} }\leq  \alpha \boundXi \parentheses{ {3 - \alpha \muf \over 1 - \alpha \muf} + {\boundF\over \muf}}   + \beta {  \boundXi \boundH \over \muf} 
\end{align*}
Since expectation preserves order, we apply it on both sides to get the final result:
\begin{align*}
\boundMSEFastPoisson
\end{align*}
which concludes the proof of the first part.\\
As above, let $v_g$ be the solution of the Poisson equation:
    \begin{align}
     \newcontent{ \G{j} - \barG{j}  =  \vg{j} - \expvg{j}{j+1}{j}  }
    \label{eq: poisson solution slow}
\end{align}
we rewrite:
\begin{align}
    \inner{\y{j}}{\phi_j} = \langle \y{j} &, \newcontent{ \poissg{j}{j+2} - \expvg{j+1}{j+1}{j}} \label{eq: mse slow poisson 1}\\
    &  + \poissg{j+1}{j+2} - \poissg{j}{j+2}\label{eq: mse slow poisson 2}\\
    &  +\poissg{j}{j+1} - \poissg{j+1}{j+2}\label{eq: mse slow poisson 3}\\
    & + \newcontent{\expvg{j+1}{j+1}{j} - \expvg{j}{j+1}{j}}\rangle \label{eq: mse slow poisson 4}
\end{align}
the first term \eqref{eq: mse slow poisson 1} is 0 under expectation because it is a Martingale Difference. \newcontent{The last term \eqref{eq: mse slow poisson 4} is treated in the same way as in \eqref{eq: poisson fast 4} and it has the following bound
\begin{align*}
    \beta \abs{\sum_{j=0}^k (1-\mug \beta)^{k-j}\inner{\y{j}}{\expvg{j+1}{j+1}{j} - \expvg{j}{j+1}{j}}} \\
    \leq {\Lk \boundXi \boundXY |\Xi|(\alpha \boundF + \beta \boundG ) \over \mug}
\end{align*}
}
The second term \eqref{eq: mse slow poisson 2} is of order $O( \alpha + \beta)$:
\begin{align*}
     \abs{\eqref{eq: mse slow poisson 2}}
     \leq &  \norm{\y{j}} \norm {v_g(x_{j+1}, y_{j+1},\xi_{j+2}) - v_g(x_{j}, y_{j},\xi_{j+2})}\\
     \leq & \Lvg \boundXY  (\norm {y_{j+1} - y_j} + \norm {x_{j+1} - x_j} ) \\
     \leq & \Lvg \boundXY (\norm{\alpha \F{j}} + \norm{\beta \G{j}})\\
     \leq & \Lvg \boundXY( \alpha \boundF + \beta \boundG  )
\end{align*}
where in the third line we have used the definition of the updates and in the last line we have used the bounds from Lemma~\ref{lemma: bounds}. So we have: 
\begin{align*}
    \abs{ \beta \sum_{j=0}^k(1-\beta \mug)^{k-j} \inner{\y{j}}{v_g(x_{j+1}, y_{j+1},\xi_{j+2}) - v_g(x_{j}, y_{j},\xi_{j+2})}}\leq& \beta \sum_{j=0}^k(1-\beta \mug)^{k-j}\abs{\eqref{eq: mse slow poisson 2} }
    \\ \leq & {\Lvg \boundXY( \alpha \boundF + \beta \boundG  )\over \mug}
\end{align*}
where in the first inequality we have used the triangular inequality. Moving to the third term \eqref{eq: mse slow poisson 3}. Let $w_j = \vg{j}$, the term in the sum becomes:
\begin{align*}
    -\eqref{eq: mse slow poisson 3}=\inner{\y{j}}{w_{j+1} - w_j} &= \inner{\y{j}}{w_{j+1}} -  \inner{\y{j}}{w_j} \\
    &= \inner{\y{j+1} +\beta \G{j}}{w_{j+1}} -  \inner{\y{j}}{w_j}\\
    &= \inner{\y{j+1}}{w_{j+1}} -  \inner{\y{j}}{w_j} + \inner{\beta \G{j}}{w_{j+1}}
\end{align*}
We substitute in the sum:
\begin{align}
    \beta \sum_{j=0}^{k}  (1-\beta\mug)^{k-j}\inner{\y{j}}{w_{j+1} - w_j} &=  \beta \sum_{j=0}^{k}  (1-\beta\mug)^{k-j}(  \inner{\y{j+1}}{w_{j+1}} -  \inner{\y{j}}{w_j}) \label{eq: poisson mse slow 1 telescoping}\\ 
     &+ \beta^2 \sum_{j=0}^{k}  (1-\beta\mug)^{k-j} \inner{\G{j}}{w_{j+1}} \nonumber
\end{align}
treating the first term \eqref{eq: poisson mse slow 1 telescoping}:
\begin{align*}
     \eqref{eq: poisson mse slow 1 telescoping}=& \beta (1-\beta\mug)^{-1}\inner{\y{k+1}}{w_{k+1}} - \beta (1-\beta\mug)^k \inner{\y{0}}{w_{0}}\\
     &-\beta^2\mug\sum_{j=0}^{k}  (1-\beta\mug)^{k-(j+1)} \inner{\y{j+1}}{w_{j+1}}
\end{align*}
substituting: 
\begin{align}
\abs{\beta \sum_{j=0}^{k}  (1-\beta\mug)^{k-j}\inner{\y{j}}{w_{j+1} - w_j}} =&  \beta \abs{(1-\beta\mug)^{-1}\inner{\y{k+1}}{w_{k+1}} - (1-\beta\mug)^k \inner{\y{0}}{w_{0}}}  \label{eq: slow poisson telescoping remainder 1}\\
     & +\beta^2 \mug \abs{\sum_{j=0}^{k}  (1-\beta\mug)^{k-(j+1)} \inner{\y{j+1}}{w_{j+1}}}\label{eq: slow poisson telescoping remainder 2}\\
     &+ \beta^2 \abs{\sum_{j=0}^{k}  (1-\beta\mug)^{k-j} \inner{\G{j}}{w_{j+1}}}\label{eq: slow poisson telescoping remainder 3}
\end{align}
we bound the term \eqref{eq: slow poisson telescoping remainder 1}:
\begin{align*}
    \eqref{eq: slow poisson telescoping remainder 1} &\leq  \beta \boundXi \parentheses{ 2 - \beta \mug \over 1 - \beta \mug}
\end{align*}
the second term \eqref{eq: slow poisson telescoping remainder 2}:
\begin{align*}
   \eqref{eq: slow poisson telescoping remainder 2} &\leq \beta {\boundXi \over 1 - \beta \mug}
\end{align*}
and the last term \eqref{eq: slow poisson telescoping remainder 3}:
\begin{align*}
     \eqref{eq: slow poisson telescoping remainder 3} \leq  \beta { \boundXi \boundG \over \mug} 
\end{align*}
where we have used the bounds from Lemma~\ref{lemma: bounds} as we have done in detail for the previous section regarding the fast iterate. The last bound can be written as:
\begin{align*}
\boundMSESlowPoisson
\end{align*}
which concludes the proof.
\end{proof}
\newcontent{The following two lemmas are describing a uniform upper bounds of the solutions of the Riccati equation  \eqref{eq:riccati-equation} and Sylvester equation \eqref{eq:sylvester-equation-X} for a range of values of $\beta \over \alpha$.
\begin{lemma}\label{lemma: Uniform Bound Riccati}
Let $\varepsilon > 0$ and let $J_{yx} \in \mathbb{R}^{m\times n}, J_{xx}\in \mathbb{R}^{n\times n}, J_{yy} \in \mathbb{R}^{m\times m}, J_{xy} \in \mathbb{R}^{n\times m}$ with $J_{xx}$ being invertible, for every $\radiusU> 0 $ we define $\epsStar{0}, \epsStar{1}$ as follows : 
\begin{align}
    \epsStar{0} & = {\radiusU \over  \opnorm{  \J{xx}^{-1}}  (\Rbound) ( \opnorm{\J{yy}} +(\Rbound) \opnorm{\J{yx}})} ,\label{eq: Riccati eps for the ball}\\
    \epsStar{1} & = \frac{1}{\opnorm{ \J{xx}^{-1} } \left( \opnorm{ \J{yy} } + 2 \opnorm{ \J{yx} } (\Rbound) \right)}\label{eq: Riccati eps for the contraction}
\end{align}
with $U_0 = J_{xx}^{-1}J_{xy}$. Then for every $\varepsilon < \min\{\epsStar{0}, \epsStar{1}\}$ there exists a matrix $U \in \mathbb{R}^{n\times m}$ that belongs to the closed ball $\ballU = \{U : \opnorm{U - U_0} \leq \radiusU\}$ that is the unique solution of the following Riccati equation: 
\begin{align} 
     \varepsilon U J_{yx} U
     + J_{xx} U
     - \varepsilon U J_{yy}
     =
     J_{xy} \label{eq: general riccati equation}
\end{align}
\end{lemma}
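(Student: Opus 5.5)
Rewrite the Riccati equation \eqref{eq: general riccati equation} as a fixed-point problem and apply the Banach fixed-point theorem on the closed ball $\ballU$, which is complete with respect to $\opnorm{\cdot}$. Since $J_{xx}$ is invertible, \eqref{eq: general riccati equation} is equivalent to
\[
U = \Phi(U) \triangleq U_0 + \varepsilon J_{xx}^{-1}\parentheses{U J_{yy} - U J_{yx} U},
\]
with $U_0 = J_{xx}^{-1}J_{xy}$. It then suffices to show that, for $\varepsilon<\min\{\epsStar{0},\epsStar{1}\}$, $\Phi$ maps $\ballU$ into itself and is a strict contraction there. This gives existence and uniqueness of a solution inside $\ballU$.

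\textbf{Self-map.} For $U\in\ballU$ we have $\opnorm{U}\leq \Rbound$ by the triangle inequality. Hence
\[
\opnorm{\Phi(U)-U_0}\leq \varepsilon\opnorm{J_{xx}^{-1}}\opnorm{U}\parentheses{\opnorm{J_{yy}}+\opnorm{U}\opnorm{J_{yx}}}\leq \varepsilon\opnorm{J_{xx}^{-1}}(\Rbound)\parentheses{\opnorm{J_{yy}}+(\Rbound)\opnorm{J_{yx}}}.
\]
This is at most $\radiusU$ exactly when $\varepsilon\leq\epsStar{0}$, by the definition in \eqref{eq: Riccati eps for the ball}.

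\textbf{Contraction.} For $U,V\in\ballU$, use $UJ_{yx}U-VJ_{yx}V=(U-V)J_{yx}U+VJ_{yx}(U-V)$. This gives
\[
\opnorm{\Phi(U)-\Phi(V)}\leq \varepsilon\opnorm{J_{xx}^{-1}}\parentheses{\opnorm{J_{yy}}+2(\Rbound)\opnorm{J_{yx}}}\opnorm{U-V}.
\]
The Lipschitz constant is $<1$ precisely when $\varepsilon<\epsStar{1}$, by \eqref{eq: Riccati eps for the contraction}. Banach's theorem then yields a unique fixed point $U\in\ballU$, which is exactly a solution of \eqref{eq: general riccati equation} in the ball.

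The argument is short. The only points needing care are, first, the equivalence between the fixed-point form and the original equation, which uses invertibility of $J_{xx}$ (guaranteed in the application by the Hurwitz property of $-J_{xx}$ in B.\ref{assump : derivability}). Second, the uniqueness claim must be stated only within $\ballU$: a quadratic matrix equation may have other solutions far away, so uniqueness is local. I would also remark, for later use, that the bound $\opnorm{U}\leq\Rbound$ follows immediately and that $U-U_0=O(\varepsilon)$ from the self-map estimate. This is consistent with the expansion $U=U_0+\varepsilon\hat U$ used in the proof of Theorem~\ref{thm : bias}.
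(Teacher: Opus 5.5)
Your proof is correct and essentially the paper's argument: you rewrite the equation as the fixed-point map $U \mapsto U_0+\varepsilon J_{xx}^{-1}(UJ_{yy}-UJ_{yx}U)$, show it maps $\ballU$ into itself when $\varepsilon\le\epsStar{0}$ and is a strict contraction when $\varepsilon<\epsStar{1}$, and then apply Banach's fixed-point theorem on the closed ball. Your remark that uniqueness holds only within $\ballU$ is a correct sharpening of the paper's phrasing, since the quadratic equation generally has other solutions far from $U_0$.
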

\begin{proof}
We multiply the Riccati equation \eqref{eq: general riccati equation} by $\J{xx}^{-1}$,since we assumed it is invertible:
\[
\eps \J{xx}^{-1} U \J{yx} U + U - \eps \J{xx}^{-1} U \J{yy} = \underbrace{\J{xx}^{-1} \J{xy}}_{\triangleq U_0}
\]
which gives the fixed point equation:
\begin{equation}
U = U_0 + \eps \left( \J{xx}^{-1} U \J{yy} - \J{xx}^{-1} U \J{yx} U \right) 
\end{equation}
We define $T$ to be the operator of the fixed point equation.
We define the closed ball:
\[
\ballU = \{ U : \opnorm{ U - U_0 } \leq \radiusU \}
\]
First let's find $\epsStar{0}$ such that $T(\ballU) \subseteq \ballU$. Let $M \in \ballU$:
\begin{align*}
     T(M) = U_0 + \eps \left( \J{xx}^{-1} M \J{yy} - \J{xx}^{-1} M \J{yx} M \right) 
\end{align*}
calculating the norm 
\begin{align*}
 \opnorm{T(M) - U_0} &= \eps \opnorm{  \J{xx}^{-1} M \J{yy} - \J{xx}^{-1} M \J{yx} M  }\\
 & \leq \eps (\opnorm{  \J{xx}^{-1} M \J{yy}} +\opnorm{ \J{xx}^{-1} M \J{yx} M  })\\
 & = \eps (\opnorm{  \J{xx}^{-1} (M - U_0 + U_0) \J{yy}} +\opnorm{ \J{xx}^{-1} (M - U_0 + U_0) \J{yx} (M - U_0 + U_0)  })\\
 &\leq \eps (\opnorm{  \J{xx}^{-1}}  (\opnorm{M - U_0  } +\opnorm{ U_0}) \opnorm{\J{yy}} \\
 &\quad +\opnorm{  \J{xx}^{-1}} (\opnorm{M - U_0  } +\opnorm{ U_0})^2 \opnorm{\J{yx}})\\
 &\leq \eps (\opnorm{  \J{xx}^{-1}}  (R +\opnorm{ U_0}) \opnorm{\J{yy}} +\opnorm{  \J{xx}^{-1}} (\radiusU+\opnorm{ U_0})^2 \opnorm{\J{yx}})
\end{align*}
now we condition on $\varepsilon$ to make $T(M) \in \ballU$, we need $\varepsilon \leq \varepsilon_0^* $ such that : 
\begin{align*}
    \varepsilon_0^* = {\radiusU \over  \opnorm{  \J{xx}^{-1}}  (\Rbound) ( \opnorm{\J{yy}} +(\Rbound) \opnorm{\J{yx}})}
\end{align*}
Now we prove $T$ is a contraction. Let $U_1, U_2 \in \ballU$, we calculate the norm of their difference:
\begin{align*}
\opnorm{ T(U_1) - T(U_2) } &= \opnorm{ \eps \left( \J{xx}^{-1} U_1 \J{yy} - \J{xx}^{-1} U_1 \J{yx} U_1 \right) - \eps \left( \J{xx}^{-1} U_2 \J{yy} - \J{xx}^{-1} U_2 \J{yx} U_2 \right) }
\end{align*}
We have:
\[
U_1 A U_1 - U_2 A U_2 = U_1 A U_1 - U_1 A U_2 + U_1 A U_2 - U_2 A U_2 = U_1 A (U_1 - U_2) + (U_1 - U_2) A U_2
\]
We replace:
\begin{align*}
\opnorm{ T(U_1) - T(U_2) } &= \eps \opnorm{ \J{xx}^{-1} \left( (U_1 - U_2) \J{yy} - ( (U_1 - U_2) \J{yx} U_2 + U_1 \J{yx} (U_1 - U_2) ) \right) } \\
\opnorm{ T(U_1) - T(U_2) } &\leq \eps \opnorm{ \J{xx}^{-1} } \opnorm{ U_1 - U_2 } \left( \opnorm{ \J{yy} } + \opnorm{ \J{yx} } ( \opnorm{ U_1 } + \opnorm{ U_2 } ) \right)
\end{align*}
Bounding the norms:
\[
\norm{ U_1 } \leq \norm{ U_1 - U_0 } + \norm{ U_0 }  \leq \Rbound
\]
\[
\norm{ U_2 } \leq \norm{ U_2 - U_0 } + \norm{ U_0 }  \leq \Rbound
\]
\[
\norm{ T(U_1) - T(U_2) } \leq \underbrace{\eps \left( \opnorm{ \J{xx}^{-1} } \left( \opnorm{ \J{yy} } + 2 \opnorm{ \J{yx} } (\Rbound) \right) \right)}_{r(\eps)} \opnorm{ U_1 - U_2 }
\]
To make $r(\eps) < 1$ we need:
\[
\eps < \frac{1}{\opnorm{ \J{xx}^{-1} } \left( \opnorm{ \J{yy} } + 2 \opnorm{ \J{yx} } (\Rbound) \right)} = \epsStar{1}
\]
For $\eps < \min\{\epsStar{0},\epsStar{1}, \}$, the solution $U$ of the Riccati equation is unique and is inside the ball $\ballU$. It can be obtained by successive application of $T$ on $U_0$. This shows that the uniform bound on $\norm{ U } \leq \Rbound$ is proved.
\end{proof}
}
\newcontent{
\begin{lemma}\label{lemma: Uniform Bound Sylvester}
    Let $\eps > 0$ and let $J_{yx} \in \mathbb{R}^{m\times n}, J_{xx}\in \mathbb{R}^{n\times n}, J_{yy} \in \mathbb{R}^{m\times m}, J_{xy} \in \mathbb{R}^{n\times m}$ with $J_{xx}$ being invertible, define $U_0 = \J{xx}^{-1}\J{xy}$, and for all $\radiusU > 0 $ there exists $\eps^*$ such that for all $\eps < \eps^*$ the matrix $U_\eps \in \mathbb{R}^{n\times m}$ is bounded by $\opnorm{U_\eps} \leq \Rbound$, we set $\radiusX = 2 \opnorm{\J{yx}} \opnorm{ \J{xx}^{-1}}$ and we define   $\epsStar{2}, \epsStar{3}$  : 
    \begin{align}
    \epsStar{2} & = {1  \over  \opnorm{\J{yy}}  \opnorm{\J{xx}^{-1}} + 2 (\Rbound) \opnorm{ \J{yx}} \opnorm{ \J{xx}^{-1}}},\label{eq: eps condition sylvester ball}\\
    \epsStar{3} & = \frac{1}{\opnorm{ \J{xx}^{-1} } \left( \opnorm{ \J{yy} } + 2 \opnorm{ \J{yx} } (\Rbound) \right)}\label{eq: eps condition sylvester contraction}
\end{align}
then for all $\eps < \min\{\eps^* , \epsStar{2}, \epsStar{3} \}$ there exists a matrix  $X\in \mathbb{R}^{m\times n}$ that belongs to the closed ball  $\ballX = \{X : \opnorm{X} \leq \eps \radiusX \}$ which is the unique solution of the following Sylvester equation: 
\[
X (\J{xx} + \eps U_\eps \J{yx}) - \eps (\J{yy} - \J{yx} U_\eps) X = - \eps \J{yx}
\]
\end{lemma}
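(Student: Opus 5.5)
We mirror the proof of Lemma~\ref{lemma: Uniform Bound Riccati} and recast the Sylvester equation as a fixed point problem solved by Banach's theorem on the ball $\ballX$.

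\textbf{Fixed point form.} Multiplying the equation on the right by $\J{xx}^{-1}$ (invertible by assumption), we get
\[
X = -\eps \J{yx}\J{xx}^{-1} - \eps X U_\eps \J{yx}\J{xx}^{-1} + \eps(\J{yy}-\J{yx}U_\eps) X \J{xx}^{-1} \triangleq \mathcal T(X).
\]
Any fixed point of $\mathcal T$ solves the Sylvester equation, and conversely. Since $\eps<\eps^*$, we may use $\opnorm{U_\eps}\le \Rbound$ throughout.

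\textbf{Invariance of the ball.} Take $X$ with $\opnorm{X}\le \eps\radiusX$. Submultiplicativity gives
\[
\opnorm{\mathcal T(X)} \le \eps\opnorm{\J{yx}}\opnorm{\J{xx}^{-1}} + \eps\opnorm{X}\,\opnorm{\J{xx}^{-1}}\big(\opnorm{\J{yy}} + 2(\Rbound)\opnorm{\J{yx}}\big).
\]
By the definition of $\epsStar{2}$, the factor $q(\eps)\triangleq\eps\opnorm{\J{xx}^{-1}}(\opnorm{\J{yy}}+2(\Rbound)\opnorm{\J{yx}})$ satisfies $q(\eps)<1$. We would have liked $q(\eps)\le 1/2$, since then $\opnorm{\mathcal T(X)}\le \tfrac12\eps\radiusX + \tfrac12\eps\radiusX = \eps\radiusX$, using $\radiusX=2\opnorm{\J{yx}}\opnorm{\J{xx}^{-1}}$.

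\textbf{Main obstacle.} The hypothesis $\eps<\epsStar{2}$ only gives $q(\eps)<1$, and $q<1$ is not enough for the ball $\opnorm{X}\le\eps\radiusX$ to be invariant. The plan is therefore to run the argument on the radius $\eps R'$ with $R'=\opnorm{\J{yx}}\opnorm{\J{xx}^{-1}}/(1-q(\eps))$. This radius coincides with $\eps\radiusX$ exactly when $q\le 1/2$, which means the constant $\epsStar{2}$ should effectively be halved to match the stated radius. Alternatively, one can note that in the regime used later, $\eps<\omega^*$ small, $q$ is indeed at most $1/2$. We would state this adjustment explicitly.

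\textbf{Contraction.} For $X_1,X_2$, linearity of the $X$-dependent part gives
\[
\opnorm{\mathcal T(X_1)-\mathcal T(X_2)} \le q(\eps)\opnorm{X_1-X_2}.
\]
Since $q(\eps)<1$ under $\eps<\epsStar{3}$ (which equals $\epsStar{2}$ up to arrangement), $\mathcal T$ is a contraction. The closed ball is complete, so Banach's theorem yields a unique fixed point $X$ in the ball, and hence $\opnorm{X}\le\eps\radiusX$.

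Uniqueness also holds globally: the Sylvester operator $X\mapsto X(\J{xx}+\eps U_\eps\J{yx})-\eps(\J{yy}-\J{yx}U_\eps)X$ differs from the invertible map $X\mapsto X\J{xx}$ by a perturbation of relative size $q<1$, so it is invertible.
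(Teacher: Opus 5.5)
Your route is the paper's route: right-multiplying by $\J{xx}^{-1}$ gives exactly the paper's fixed-point map, and you then apply Banach's theorem on $\ballX$. The obstacle you flag is real, and it is precisely where the paper's proof slips. The paper correctly reduces invariance of $\ballX$ to
\[
\eps \le \frac{\radiusX - \opnorm{\J{yx}}\opnorm{\J{xx}^{-1}}}{\radiusX\,\opnorm{\J{xx}^{-1}}\bigl(\opnorm{\J{yy}} + 2(\Rbound)\opnorm{\J{yx}}\bigr)},
\]
but after substituting $\radiusX=2\opnorm{\J{yx}}\opnorm{\J{xx}^{-1}}$ it records the right-hand side as $\epsStar{2}$, when it actually equals $\epsStar{2}/2$.

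You can go further than saying the argument fails: the lemma as stated is false whenever $q(\eps)\in(1/2,1)$. Take scalars $\J{xx}=\J{yx}=1$, $\J{xy}=0$, $\J{yy}=10$ and $\radiusU=1$. Then $U_0=0$, the Riccati solution in $\ballU$ is $U_\eps=0$, and $\epsStar{2}=\epsStar{3}=1/12$. At $\eps=0.08$ the Sylvester equation reduces to $(1-10\eps)X=-\eps$. Its unique solution $X=-0.4$ lies outside the ball $\abs{X}\le\eps\radiusX=0.16$.

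So halving $\epsStar{2}$ is a necessary correction, not a cosmetic one. Your alternative of enlarging the radius to $\eps\opnorm{\J{yx}}\opnorm{\J{xx}^{-1}}/(1-q(\eps))$ is also valid, but it is only useful downstream if $q$ is kept uniformly away from $1$. A minor wording fix: that radius is at most $\eps\radiusX$ if and only if $q\le 1/2$, and it equals $\eps\radiusX$ only at $q=1/2$.

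Drop your alternative justification that $\beta/\alpha<\omega^*$ already forces $q\le 1/2$. The definition of $\omega^*$ in \eqref{eq: omega star condition for bias thm} contains $\epsStar{2}$ itself, and none of its other entries is guaranteed to lie below $\epsStar{2}/2$. Here is a concrete instance:
\begin{itemize}
\item Take scalars $\J{xx}=\J{yx}=M_J=1$, $\J{xy}=-1$, $\J{yy}=5$. This gives $\Delta=6$, $\radiusU=3$ and $\omega^*=\epsStar{2}=1/13$.
\item At $\beta/\alpha=0.075$, the Riccati solution in $\ballU$ is $U\approx-2.16$.
\item The Sylvester solution is then $X\approx-0.249$. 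This violates the bound \eqref{eq:alpha-X-bound}, $\abs{X}\le(\beta/\alpha)\radiusX=0.15$, which the bias proof relies on.
\end{itemize}
The factor $1/2$ therefore has to be propagated into $\omega^*$ as well.

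The rest of your proposal is sound:
\begin{itemize}
\item the Lipschitz estimate with constant $q(\eps)$;
\item Banach's theorem on the complete closed ball once $q\le 1/2$;
\item global uniqueness via Neumann-series invertibility of the Sylvester operator after right multiplication by $\J{xx}^{-1}$. This is slightly stronger than the uniqueness within the ball that the paper states.
\end{itemize}
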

\begin{proof}
    To make the notation lighter we drop $\eps$ from $U_\eps$ such that $U = U _\eps$. We multiply the Sylvester equation by $\J{xx}^{-1}$ on the right:
\[
X + \eps X U \J{yx} \J{xx}^{-1} - \eps \J{yy} X \J{xx}^{-1} + \eps \J{yx} U X \J{xx}^{-1} = - \eps \J{yx} \J{xx}^{-1}
\]
which gives the fixed point equation:
\[
X = \underbrace{\eps \left( \J{yy} X \J{xx}^{-1} - X U \J{yx} \J{xx}^{-1} - \J{yx} U X \J{xx}^{-1} - \J{yx} \J{xx}^{-1} \right)}_{T(X)}
\]
We define the ball $\ballX = \{ X : \opnorm{ X } \leq \varepsilon \radiusX \}$ where $\radiusX = 2 \opnorm{\J{yx}} \opnorm{ \J{xx}^{-1}}$ (This choice is explained in the next lines). Let $M \in \ballX$ and $U \in B^{U_0}_{\radiusU}$:
\begin{align*}
    \opnorm{T(M)} &= \varepsilon \opnorm{\J{yy} M \J{xx}^{-1} - M U \J{yx} \J{xx}^{-1} - \J{yx} U M \J{xx}^{-1} - \J{yx} \J{xx}^{-1}} \\
    & \leq \varepsilon (\opnorm{\J{yy}} \opnorm{ M} \opnorm{\J{xx}^{-1}} + 2\opnorm{M} \opnorm{ U} \opnorm{ \J{yx}} \opnorm{ \J{xx}^{-1}} + \opnorm{\J{yx}} \opnorm{ \J{xx}^{-1}})\\
    & = \varepsilon ( \opnorm{ M}(\opnorm{\J{yy}}  \opnorm{\J{xx}^{-1}} + 2 \opnorm{ U} \opnorm{ \J{yx}} \opnorm{ \J{xx}^{-1}}) + \opnorm{\J{yx}} \opnorm{ \J{xx}^{-1}})\\
    & \leq \varepsilon \radiusX {  \varepsilon \radiusX (\opnorm{\J{yy}}  \opnorm{\J{xx}^{-1}} + 2 (\Rbound) \opnorm{ \J{yx}} \opnorm{ \J{xx}^{-1}}) + \opnorm{\J{yx}} \opnorm{ \J{xx}^{-1}} \over \radiusX}
\end{align*}
a first condition to set is that 
\[
{  \varepsilon \radiusX (\opnorm{\J{yy}}  \opnorm{\J{xx}^{-1}} + 2 (\Rbound) \opnorm{ \J{yx}} \opnorm{ \J{xx}^{-1}}) + \opnorm{\J{yx}} \opnorm{ \J{xx}^{-1}} \over \radiusX} \leq 1 
\]
to make the operator $T$ maps into $\ballX$ which is equivalent to 
\[
{  \varepsilon  } < {\radiusX - \opnorm{\J{yx}} \opnorm{ \J{xx}^{-1}} \over \radiusX (\opnorm{\J{yy}}  \opnorm{\J{xx}^{-1}} + 2 (\Rbound) \opnorm{ \J{yx}} \opnorm{ \J{xx}^{-1}})}
\]
where we extract two conditions , the first is that $\radiusX > \opnorm{\J{yx}} \opnorm{ \J{xx}^{-1}}$ since $\eps$ is strictly positive, here we justify the choice of the radius $\radiusX = 2 \opnorm{\J{yx}} \opnorm{ \J{xx}^{-1}}$, once we have done that we can have a condition on $\varepsilon \leq \epsStar{2}$ to ensure that image of $T$ stays in the ball $\ballX$ such that:
\begin{align*}
    \epsStar{2} ={1  \over  \opnorm{\J{yy}}  \opnorm{\J{xx}^{-1}} + 2 (\Rbound) \opnorm{ \J{yx}} \opnorm{ \J{xx}^{-1}}}
\end{align*}
Now we move to the contraction part of $T$. Let $X_1, X_2 \in \ballX$:
\begin{align*}
\opnorm{ T(X_1) - T(X_2) } &\leq \eps \opnorm{ \J{xx}^{-1} } \left( \opnorm{ \J{yy} } + \opnorm{ U \J{yx} } + \opnorm{ \J{yx} U } \right) \opnorm{ X_1 - X_2 } \\
&\leq \eps \opnorm{ \J{xx}^{-1} } \left( \opnorm{ \J{yy} } + 2 \opnorm{ \J{yx} } (\Rbound) \right) \opnorm{ X_1 - X_2 }
\end{align*}
The condition for the contraction is the same as the Riccati equation
\[
\epsStar{3} = \epsStar{1}
\]
then by Banach fixed point theorem, the map $T$ has a fixed point solution $X$ that is unique inside the ball $\ballX$ for every $\eps < \min\{\epsStar{2}, \epsStar{3}\}$.
\end{proof}}
In the next lemmas, we extract contractions of the bias dynamics using Lyapunov norms suitable for a given range of step sizes. The choice of the Lyapunov equations \eqref{eq: lyapunov equation fast} and \eqref{eq: lyapunov equation slow} appears in \cite[Chapter 7]{kokotovic1999singular} to study the stability of the equivalent singular perturbation system. The bound on the $\LyapF$-norm of $I - \gamma A$ is common and appears in \cite{haque2025tightfinitetimebounds, durmus2021tighthighprobabilitybounds, kwon2024twotimescalelinearstochasticapproximation,kaledin2020finite}. However, in our case we deal with nonlinear systems so we have extra perturbations emerging from the coupling between the two variables, so we extend the classical way to prove the contraction property under the suitable Lyapunov norms to account for the new added perturbations which are dependent on the solutions of the Riccati and the Sylvester equations that we showed previously to exist and to be bounded under certain assumptions on the step sizes. 
\newcontent{
\begin{lemma}\label{lemma: Lyapunov's fast contraction}
Let $-J_{xx}$ be a Hurwitz matrix in $\mathbb R ^{d\times d}$, $J_{yx}\in \mathbb R ^{n \times d}$ such that $\opnorm{\J{yx}} \leq M_J$ for some positive constant $M_J$, $\LyapF \in \mathbb R ^{d\times d}$ a symmetric positive definite matrix a solution to the following Lyapunov equation:
\begin{align}
J_{xx}^T \LyapF + \LyapF J_{xx} = I. \label{eq: lyapunov equation fast}
\end{align}
Let $\radiusU> 0 $, there exists $\alpha^* = \alpha^*(\radiusU)$ where $\alpha^*(\radiusU)$ is defined by:
\begin{align}
    \alpha^*(\radiusU) = {1\over 4 \opnorm{\LyapF}(\LyapFNorm{ \J{xx} }^2+ {\epsStar{4}}^2 \kappa(L)(\Rbound)^2\LyapFNorm{\J{yx} }^2)} \label{eq: alpha condition fast Lyapunov}
\end{align}
such that for every $  0 < \alpha \leq \alpha^*$ and for every $\beta$ such that ${\beta \over \alpha} < \min\{\epsStar{0}, \epsStar{1},\epsStar{5}\}$ where $\epsStar{0}, \epsStar{1},\epsStar{5}$ depend on $\radiusU$ and are fully defined in \eqref{eq: Riccati eps for the ball}, \eqref{eq: Riccati eps for the contraction}, \eqref{eq: eps condition for fast lyapunov contraction},
we define the matrix $U_{\beta \over \alpha} \in \mathbb R ^{d\times n}$ to be bounded by $\|U_{\beta \over \alpha }\| \leq \Rbound$, then we have: 
\begin{align*}
    \LyapFNorm{ I - \alpha \J{xx} - \beta U_{\beta\over\alpha} \J{yx} }  \leq e^{-c\alpha }
\end{align*}
where $c = (8 \opnorm{\LyapF})^{-1}$.
\end{lemma}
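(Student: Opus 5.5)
The plan is to compute the squared $\LyapF$-norm of $A \triangleq I-\alpha\J{xx}-\beta U\J{yx}$ directly, writing $U=U_{\beta/\alpha}$ and $\eps=\beta/\alpha$. For any $z$, expand
\begin{align*}
\LyapFNorm{Az}^2 = z^\top\LyapF z - \alpha\, z^\top\bigl(\J{xx}^\top \LyapF + \LyapF \J{xx}\bigr)z - \beta\, z^\top\bigl((U\J{yx})^\top\LyapF + \LyapF U\J{yx}\bigr)z + \norm{\LyapF^{1/2}(\alpha \J{xx}+\beta U\J{yx})z}^2 .
\end{align*}
The Lyapunov equation \eqref{eq: lyapunov equation fast} turns the first-order term into $-\alpha\norm{z}^2$, which I lower bound by $-\alpha \norm{z}_{\LyapF}^2/\opnorm{\LyapF}$. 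This single negative term has to absorb the other two.

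For the cross term I would use Cauchy--Schwarz and $\opnorm{U}\le \Rbound$, together with $\opnorm{\J{yx}}\le M_J$, to get
\[
\abs{\beta\, z^\top\bigl((U\J{yx})^\top\LyapF+\LyapF U\J{yx}\bigr)z}\le 2\beta\opnorm{\LyapF}(\Rbound)M_J\norm{z}^2 = 2\alpha\eps\opnorm{\LyapF}(\Rbound)M_J\norm{z}^2 .
\]
The condition $\eps<\epsStar{5}$, whose definition mirrors the fifth entry of $\omega^*$, is designed to make this at most $\tfrac{\alpha}{4}\norm{z}^2$. The $\kappa(\LyapF)^{1/2}$ factor there comes from converting between Euclidean and $\LyapF$ norms if one bounds in $\LyapF$-operator norms instead. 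The extra term involving $\tilde\alpha^*\opnorm{\J{xx}}$ in $\epsStar{5}$ suggests that the authors also fold the mixed part of the quadratic term, $2\alpha\beta\, z^\top \J{xx}^\top\LyapF U\J{yx} z$, into the same budget. I would do that as well.

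For the pure quadratic terms I use $(a+b)^2\le 2a^2+2b^2$ in the $\LyapF$-norm:
\[
\norm{\LyapF^{1/2}(\alpha\J{xx}+\beta U\J{yx})z}^2\le 2\alpha^2\bigl(\LyapFNorm{\J{xx}}^2+\eps^2\kappa(\LyapF)(\Rbound)^2\LyapFNorm{\J{yx}}^2\bigr)\norm{z}_{\LyapF}^2 .
\]
Here the $\kappa(\LyapF)$ factor pays for moving $U$ out of the $\LyapF$-norm. Since $\eps<\epsStar{4}$, the bracket is at most the denominator in \eqref{eq: alpha condition fast Lyapunov} divided by $4\opnorm{\LyapF}$. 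With $\alpha\le\alpha^*(\radiusU)$, this term is at most $\tfrac{\alpha}{2\opnorm{\LyapF}}\norm{z}_{\LyapF}^2$. Together with the earlier $\norm{z}^2\le\ldots$ conversions, I expect this accounting to leave about a quarter of the decrease.

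Collecting the pieces should give
\[
\LyapFNorm{Az}^2\le\Bigl(1-\tfrac{\alpha}{4\opnorm{\LyapF}}\Bigr)\norm{z}_{\LyapF}^2\le e^{-\alpha/(4\opnorm{\LyapF})}\norm{z}_{\LyapF}^2 .
\]
Taking square roots then yields $\LyapFNorm{A}\le e^{-\alpha/(8\opnorm{\LyapF})}=e^{-c\alpha}$.

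The main obstacle is the bookkeeping of the constants. The decrease $-\alpha\norm{z}^2$ is Euclidean, while the quadratic terms are naturally measured in the $\LyapF$-norm. Each conversion costs a factor $\opnorm{\LyapF}$ or $\kappa(\LyapF)$, and the thresholds $\alpha^*$ and $\epsStar{4},\epsStar{5}$ must be fixed so that all losses fit into fractions such as $\tfrac14,\tfrac14,\tfrac12$ of $\alpha/\opnorm{\LyapF}$. If the stated constants come out slightly off, I would shrink the share assigned to the cross term, since it is controlled only through $\eps$ and can be made arbitrarily small.
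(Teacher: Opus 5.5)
Your route is the paper's: expand the squared $\LyapF$-norm, use the Lyapunov equation to get $-\alpha\norm{z}^2$, pay for the $\alpha^2$-terms with $\alpha\le\alpha^*$ and for the $\beta$-terms with $\beta/\alpha<\epsStar{5}$. But the thresholds in the statement are explicit, so the bookkeeping \emph{is} the lemma, and as you set it up it does not close.

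The culprit is the step $(a+b)^2\le 2a^2+2b^2$. At $\alpha=\alpha^*$ your bound on the quadratic term equals $\frac{\alpha}{2\opnorm{\LyapF}}\LyapFNorm{z}^2$. That is half of the total decrease $\alpha\norm{z}^2\ge\frac{\alpha}{\opnorm{\LyapF}}\LyapFNorm{z}^2$. To end at $1-\frac{\alpha}{4\opnorm{\LyapF}}$, the first-order cross term would then have to be at most $\frac{\alpha}{4}\norm{z}^2$.

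The stated $\epsStar{5}$ does not give this. Inserted into your Cauchy--Schwarz bound, it only yields $2\beta\opnorm{\LyapF}(\Rbound)M_J\norm{z}^2\le\frac{\alpha}{2\kappa(\LyapF)^{1/2}}\norm{z}^2$, which exceeds $\frac{\alpha}{4}\norm{z}^2$ whenever $\kappa(\LyapF)<4$. For instance, let $\J{xx}$ be a multiple of the identity. Then $\LyapF$ is too, $\kappa(\LyapF)=1$, and the norm conversions are lossless. If moreover the $\alpha^*\kappa(\LyapF)\opnorm{\J{yx}}\opnorm{\J{xx}}$ piece of the denominator of $\epsStar{5}$ is small compared with $\kappa(\LyapF)^{1/2}M_J$, your estimates certify almost no contraction at $\alpha=\alpha^*$.

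Two further problems. The mixed term $2\alpha\beta\,z^\top\J{xx}^\top\LyapF U\J{yx}z$ is already inside your $(a+b)^2$ bound, so also charging it to the $\eps$-budget counts it twice. And your fallback of shrinking the cross-term share amounts to replacing $\epsStar{5}$ by a smaller threshold, i.e.\ proving a different lemma; these thresholds feed into $\omega^*$.

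Your own guess about the $\alpha^*$ piece of $\epsStar{5}$ is the repair. Expand the square exactly, as the paper does:
\[
\LyapFNorm{(\alpha\J{xx}+\beta U\J{yx})z}^2=\alpha^2\LyapFNorm{\J{xx}z}^2+\beta^2\LyapFNorm{U\J{yx}z}^2+2\alpha\beta\,z^\top\J{xx}^\top\LyapF U\J{yx}z ,
\]
and split $-\alpha\norm{z}^2$ into two halves.

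The pure terms are at most $\alpha^2\bigl(\LyapFNorm{\J{xx}}^2+{\epsStar{4}}^2\kappa(\LyapF)(\Rbound)^2\LyapFNorm{\J{yx}}^2\bigr)\LyapFNorm{z}^2\le\frac{\alpha}{4\opnorm{\LyapF}}\LyapFNorm{z}^2$ by $\alpha\le\alpha^*$, with no factor $2$. So the first half, $-\frac{\alpha}{2}\norm{z}^2\le-\frac{\alpha}{2\opnorm{\LyapF}}\LyapFNorm{z}^2$, leaves a net $-\frac{\alpha}{4\opnorm{\LyapF}}\LyapFNorm{z}^2$.

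The second half absorbs the first-order cross term together with the mixed term, which is exactly what $\epsStar{5}$ is calibrated for. Your Euclidean bounds on these two terms sum to $2\beta\opnorm{\LyapF}(\Rbound)\bigl(M_J+\alpha\opnorm{\J{xx}}\opnorm{\J{yx}}\bigr)\norm{z}^2$. This is at most $\frac{\alpha}{2}\norm{z}^2$ under $\beta/\alpha<\epsStar{5}$, $\alpha\le\alpha^*$ and $\kappa(\LyapF)\ge 1$. The paper instead measures these terms on the $\LyapF$-unit sphere, which is where its $\kappa(\LyapF)^{1/2}$ and $\kappa(\LyapF)$ factors come from.

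With this single change you get $\LyapFNorm{Az}^2\le\bigl(1-\frac{\alpha}{4\opnorm{\LyapF}}\bigr)\LyapFNorm{z}^2$, and then $\LyapFNorm{A}\le e^{-c\alpha}$ as intended.
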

\begin{proof}
Under the Lyapunov norm:
\begin{align*}
\LyapFNorm{ I - \alpha \J{xx} - \beta U \J{yx} }^2 &= \sup_{\LyapFNorm{u} = 1} u^T (I - \alpha \J{xx} - \beta U \J{yx})^T\LyapF(I - \alpha \J{xx} - \beta U \J{yx}) u \\
&= \sup_{\LyapFNorm{u} = 1} \{ u^T (L - \alpha \J{xx}^T\LyapF- \beta (U \J{yx})^T\LyapF- \alpha\LyapF\J{xx} + \alpha^2 \J{xx}^T\LyapF\J{xx} \\
&\hspace{1.6cm}+ \alpha \beta \J{yx}^T U^T\LyapF\J{xx} - \beta\LyapF U \J{yx} + \alpha \beta \J{xx}^T\LyapF U \J{yx} + \beta^2 \J{yx}^T U^T\LyapF U \J{yx}) u \} \\
&= \sup_{\LyapFNorm{u}=1} \{ u^T\LyapF u - \alpha u^T(\J{xx}^T\LyapF +\LyapF \J{xx})u + \alpha^2 u^T \J{xx}^T\LyapF \J{xx} u \\
&\hspace{1.6cm} - \beta u^T (\underbrace{\J{yx}^T U^T\LyapF +\LyapF U \J{yx}}_{A_1})u + \alpha \beta u^T (\underbrace{\J{yx}^T U^T\LyapF \J{xx} + \J{xx}^T\LyapF U \J{yx}}_{A_2}) u \\
&\hspace{1.6cm}+ \beta^2 u^T (\J{yx}^T U^T\LyapF U \J{yx}) u \} \\
&= 1 + \sup_{\LyapFNorm{u}=1} \{ - \alpha u^T u + \alpha^2 u^T \J{xx}^T\LyapF \J{xx} u - \beta u^T A_1 u + \alpha \beta u^T A_2 u \\
&\hspace{2.2cm}+ \beta^2 u^T (\J{yx}^T U^T\LyapF U \J{yx}) u \} \\
&\leq 1 + \sup_{\LyapFNorm{u}=1} \{ - \alpha u^T u\} + \alpha^2 \LyapFNorm{ \J{xx} }^2+ \beta^2 \LyapFNorm{ U \J{yx} }^2 + \alpha \beta \opnorm{ \Lsim{A_2} }  \\
&\quad + \beta \sup_{\LyapFNorm{u} = 1} -u^T A_1 u\\
&\leq 1 - \alpha (2\opnorm{\LyapF })^{-1} + \alpha^2 \LyapFNorm{ \J{xx} }^2+ \beta^2 \LyapFNorm{ U \J{yx} }^2\\
&\quad  - \alpha (2\opnorm{\LyapF })^{-1} + \alpha \beta \opnorm{ \Lsim{A_2} }  + \beta \sup_{\LyapFNorm{u} = 1} -u^T A_1 u
\end{align*}
We have used the fact that $\inf_{\LyapFNorm{u}=1} \{ u^T u \} = \opnorm{\LyapF}^{-1} =(2\opnorm{\LyapF})^{-1} + (2\opnorm{\LyapF})^{-1}$.We will use the symmetry of $A_1$ to bound the last term such that 
\begin{align*}
    \beta \sup_{\LyapFNorm{u} = 1} -u^T A_1 u \leq  \beta \sup_{\LyapFNorm{u} = 1} |u^T A_1 u| = \beta \sup_{\norm{z} = 1} |z^T \LsimSym{A_1} z| = \beta \opnorm{ \LsimSym{A_1} }
\end{align*}
we have 
\begin{align*}
\LyapFNorm{ I - \alpha \J{xx} - \beta U \J{yx} }^2 
&\leq 1 - \alpha (2\opnorm{\LyapF })^{-1} + \alpha^2 (\LyapFNorm{ \J{xx} }^2+ {\beta^2\over \alpha ^2} \LyapFNorm{ U \J{yx} }^2)\\
&\quad  - \alpha (2\opnorm{\LyapF })^{-1} + \beta  \alpha \opnorm{ \Lsim{A_2} }  + \beta \opnorm{ \LsimSym{A_1} }
\end{align*}
Since $U$ is a solution of the Riccati equation \eqref{eq:riccati-equation} then by Lemma \ref{lemma: Uniform Bound Riccati} for all $\radiusU>0$ there exists $\epsStar{0}$ and $\epsStar{1}$ defined in \eqref{eq: Riccati eps for the ball} and \eqref{eq: Riccati eps for the contraction}, respectively, such that for ${\beta \over \alpha} < \epsStar{4} \triangleq \min\{\epsStar{0},\epsStar{1}\}  $ we have $\norm{U} \leq \Rbound$. We apply the result of lemma:
\begin{align*}
    \LyapFNorm{ I - \alpha \J{xx} - \beta U \J{yx} }^2 
&\leq 1 - \alpha (2\opnorm{\LyapF })^{-1} + \alpha^2 (\LyapFNorm{ \J{xx} }^2+ {\epsStar{4}}^2 \kappa(L)(\Rbound)^2\LyapFNorm{\J{yx} }^2)\\
&\quad  - \alpha (2\opnorm{\LyapF })^{-1} + \beta  \alpha \opnorm{ \Lsim{A_2} }  + \beta \opnorm{ \LsimSym{A_1} }\\ 
& \leq 1 - \alpha (2\opnorm{\LyapF })^{-1} + \alpha^2 (\LyapFNorm{ \J{xx} }^2+ {\epsStar{4}}^2 \kappa(L)(\Rbound)^2\LyapFNorm{\J{yx} }^2)\\
&\quad  - \alpha (2\opnorm{\LyapF })^{-1} + 2 \beta  \alpha \kappa (L)(\opnorm{\J{yx}}\opnorm{\J{xx}}(\Rbound))  \\
& \quad + 2 \beta \kappa(L)^{1/2}(M_J (\Rbound)) 
\end{align*}
First, we look for values of $\alpha$ such that the first line is less than one to get the contraction property. This is true for the choice of $\alpha \leq \alpha^*(\radiusU)$ defined in the body of the lemma so we have:
\begin{align*}
    \LyapFNorm{ I - \alpha \J{xx} - \beta U \J{yx} }^2
& \leq 1 - c'\alpha  \\
&\quad  - \alpha (2\opnorm{\LyapF })^{-1} + 2 \beta  \alpha^* \kappa (L)(\opnorm{\J{yx}}\opnorm{\J{xx}}(\Rbound))  \\
& \quad + 2 \beta \kappa(L)^{1/2}(M_J(\Rbound)) 
\end{align*}
with $c' = (4 \opnorm{\LyapF})^{-1} $. Second, we need the term in the last two lines to be negative so this will imply a condition of the type: 
\begin{align}
    {\beta \over \alpha } \leq \epsStar{5} \triangleq {1 \over 2 \opnorm{\LyapF} (\Rbound) (2   \alpha^* \kappa (L)\opnorm{\J{yx}}\opnorm{\J{xx}}  + 2 \kappa(L)^{1/2} M_J )} \label{eq: eps condition for fast lyapunov contraction}
\end{align}
Once this is established we have:
\begin{align*}
\LyapFNorm{ I - \alpha \J{xx} - \beta U \J{yx} }^2  \leq 1 -c' \alpha \leq e^{-c\alpha }
\end{align*}
this implies that: 
\begin{align*}
\LyapFNorm{ I - \alpha \J{xx} - \beta U \J{yx} }  \leq e^{-{c'\over 2}\alpha }
\end{align*}
which establishes the proof.
\end{proof}}
\newcontent{
\begin{lemma}\label{lemma: Lyapunov's slow contraction}
Let $-\Delta$ be a Hurwitz matrix in $\mathbb R ^{n\times n}$, $J_{yx}\in \mathbb R ^{n \times m}$ such that $\opnorm{J_{yx}} \leq M_J$ for some positive constant $M_J$, $\LyapG \in \mathbb R ^{n\times n}$ a symmetric positive definite matrix a solution to the following Lyapunov equation:
\begin{align}
\Delta^T \LyapG + \LyapG \Delta = I. \label{eq: lyapunov equation slow}
\end{align}
Let $\radiusU = \radiusUfixed$, we define $\beta^* = \beta^*(\radiusU)$ where $\beta^*(\radiusU)$ is defined by:
\begin{align}
    \beta^* = {1 \over 4 \opnorm{G} ( 2\kappa(G) \opnorm{\Delta} M_J\radiusU+  \LyapGNorm{\Delta }^2 + \kappa(G) M_J^2 \radiusU^2)}\label{eq: condition beta contraction}
\end{align}
for all $\alpha >0 $ and $\beta \leq \beta^*$ with ${\beta \over \alpha} < \min\{\epsStar{0},\epsStar{1}\}$ where $\epsStar{0},\epsStar{1}$ are dependent on $\radiusU$ and defined in \eqref{eq: Riccati eps for the ball} and \eqref{eq: Riccati eps for the contraction}, the matrix $U_{\beta \over \alpha} \in \ballU$ is a solution of the Riccati equation defined in \eqref{eq: general riccati equation} with $\eps = {\beta \over \alpha }$ and $U_0 = \J{xx}^{-1} \J{xy}$, and we have:
\begin{align*}
    \LyapGNorm{I - \beta (\Delta-  \J{yx}(U_{\beta\over \alpha } - U_0))}  \leq e^{-d\beta }
\end{align*}
where $d = (8 \opnorm{\LyapG})^{-1}$.
\end{lemma}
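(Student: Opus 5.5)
We mirror the proof of Lemma~\ref{lemma: Lyapunov's fast contraction}, now with the slow matrix $M \triangleq \Delta - \J{yx}(U - U_0)$, where $U = U_{\beta/\alpha}$. First, Lemma~\ref{lemma: Uniform Bound Riccati} applies because $\beta/\alpha < \min\{\epsStar{0},\epsStar{1}\}$. It gives $U\in\ballU$, so the perturbation $E \triangleq \J{yx}(U-U_0)$ satisfies
\[
\opnorm{E}\le M_J\radiusU .
\]
With the choice $\radiusU = \radiusUfixed$ this reads $\opnorm{E}\le (4\opnorm{\LyapG}\scondG)^{-1}$. This smallness is the only place where the specific radius enters, and it is what lets the perturbation be absorbed into the contraction coming from the Lyapunov equation.

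Next we expand the squared $\LyapG$-norm. For $\LyapGNorm{u}=1$,
\[
u^T(I-\beta M)^T\LyapG(I-\beta M)u = 1 - \beta\, u^T(\Delta^T\LyapG+\LyapG\Delta)u + \beta\, u^T(E^T\LyapG+\LyapG E)u + \beta^2 u^T M^T\LyapG M u .
\]
The Lyapunov equation \eqref{eq: lyapunov equation slow} turns the first-order term into $-\beta\norm{u}^2$, and $\inf_{\LyapGNorm{u}=1}\norm{u}^2 = \opnorm{\LyapG}^{-1}$, so this term is at most $-\beta\opnorm{\LyapG}^{-1}$.

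For the cross term we write $u = \LyapG^{-1/2}z$ with $\norm{z}=1$. Then
\[
|u^T(E^T\LyapG+\LyapG E)u| \le 2\,\norm{\LyapG^{1/2}z}\,\opnorm{E}\,\norm{\LyapG^{-1/2}z} \le 2\scondG\opnorm{E}.
\]
By the choice of $\radiusU$ this is at most $(2\opnorm{\LyapG})^{-1}$, which is half of the main decay. The leftover is then at most $-\beta(2\opnorm{\LyapG})^{-1}$.

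The quadratic term is bounded by
\[
\beta^2\LyapGNorm{M}^2 \le \beta^2\bigl(\LyapGNorm{\Delta}+\scondG M_J\radiusU\bigr)^2 \le \beta^2\bigl(\LyapGNorm{\Delta}^2 + 2\kappa(\LyapG)\opnorm{\Delta}M_J\radiusU + \kappa(\LyapG)M_J^2\radiusU^2\bigr),
\]
using $\LyapGNorm{A}\le\scondG\opnorm{A}$. Requiring $\beta\le\beta^*$ from \eqref{eq: condition beta contraction} makes this at most $\beta(4\opnorm{\LyapG})^{-1}$. Altogether the squared norm is at most $1-\beta(4\opnorm{\LyapG})^{-1}\le e^{-2d\beta}$ with $d=(8\opnorm{\LyapG})^{-1}$, and taking square roots gives the claim.

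The main obstacle is the cross term. It is first order in $\beta$, so it cannot be dominated by taking $\beta$ small; it must be beaten by the Lyapunov decay itself. That is exactly why $\radiusU$ is tuned to $\opnorm{\LyapG}$ and $\scondG$, and the constants ($\scondG$ versus $\kappa(\LyapG)$, and the factor $2$) need careful tracking for the stated rate $d$ to come out.
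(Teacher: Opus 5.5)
Your proof is correct and follows essentially the same route as the paper: expand the squared $\LyapG$-norm, use the Lyapunov equation to extract $-\beta\opnorm{\LyapG}^{-1}$, absorb the first-order cross term through the choice of $\radiusU$ (which costs exactly half of that decay), and dominate the $\beta^2$ terms by requiring $\beta\le\beta^*$. The only cosmetic difference is that you bound the whole quadratic term at once via $\LyapGNorm{M}^2$ and the triangle inequality, whereas the paper bounds the mixed $\beta^2$ term separately; both give the same bracket in $\beta^*$, and your sign on the cross term is the correct one, which is immaterial since both arguments bound it in absolute value.
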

\begin{proof}
    Taking the Lyapunov norm 
\begin{align*}
    \LyapGNorm{I - \beta (\Delta-  \J{yx}(U - U_0))}^2 &= \sup_{\LyapGNorm{u}=1}\{u^T (I - \beta \Delta- \beta \J{yx}(U - U_0))^TG (I - \beta \Delta- \beta \J{yx}(U - U_0))u 
    \} \\
    & =\sup_{\LyapGNorm{u}=1} \{ u^T Gu 
    - \beta u^T G  \Delta u 
    - \beta u^T G  \J{yx}(U - U_0)u 
    -\beta  u^T \Delta^TG u 
    \\
    & \hspace{1.5cm}+ \beta^2 u^T  \Delta^TG  \Delta u  + \beta^2 u^T  \Delta^TG \J{yx}(U - U_0)u 
    \\
    & \hspace{1.5cm}- \beta  u^T (U - U_0)^T\J{yx}^TG u  + \beta^2 u^T  (U - U_0)^T\J{yx}^TG \Delta u\\
    &\hspace{1.5cm} + \beta^2 u^T (U - U_0)^T\J{yx}^TG  \J{yx}(U - U_0)u \}\\
    & =1 + \sup_{\LyapGNorm{u}=1} \{  
    - \beta u^T u 
    - 2\beta u^T G  \J{yx}(U - U_0) u  + 2\beta^2 u^T  \Delta^TG \J{yx}(U - U_0)u 
     \\
     & \hspace{2.1cm}+ \beta^2 u^T  \Delta^TG  \Delta u+ \beta^2 u^T (U - U_0)^T\J{yx}^TG  \J{yx}(U - U_0)u \}\\
     & \leq 1  - \beta \opnorm{G}^{-1}
    + 2\beta\sup_{\LyapGNorm{u}=1} \{-  u^T G  \J{yx}(U - U_0) u\}
     \\
    & \quad+ 2\beta^2 \sup_{\LyapGNorm{u}=1} \{u^T  \Delta^TG \J{yx}(U - U_0)u \}
     + \beta^2 \LyapGNorm{\Delta }^2 + \beta^2 \LyapGNorm{\J{yx}(U - U_0)}^2 \\
     & \leq 1  - \beta \opnorm{G}^{-1}
    + 2\beta \opnorm{ G^{1/2}  \J{yx}(U - U_0) G^{-1/2}}
     \\
    & \quad+ 2\beta^2  \opnorm{ G^{-1/2} \Delta^TG \J{yx}(U - U_0)G^{-1/2} }
     + \beta^2 \LyapGNorm{\Delta }^2 + \beta^2 \LyapGNorm{\J{yx}(U - U_0)}^2 \\
    & \leq 1  - \beta \opnorm{G}^{-1}
    + 2\beta  \kappa(G)^{1/2} M_J \opnorm{ U - U_0 }
     \\
    & \quad+ 2\beta^2  \opnorm{ G^{-1/2} \Delta^TG \J{yx}(U - U_0)G^{-1/2} }
     + \beta^2 \LyapGNorm{\Delta }^2 + \beta^2 \LyapGNorm{\J{yx}(U - U_0)}^2 
\end{align*}
Here we set the radius of convergence of the Riccati equation to calibrate for the negative term in the first line of the last inequality above. We want:
\begin{align*}
    2  \kappa(G)^{1/2} M_J \opnorm{ U - U_0 } \leq  (2\opnorm{G})^{-1}
\end{align*}
which is equivalent to:
\begin{align*}
 \opnorm{ U - U_0 } \leq  {1\over 4 \opnorm{G} \kappa(G)^{1/2} M_J } = \radiusU
\end{align*}
So under this condition we have 
\begin{align*}
    \LyapGNorm{I - \beta (\Delta-  \J{yx}(U - U_0))}^2 
    & \leq 1  - \beta (2\opnorm{G})^{-1} + \beta^2 ( 2\kappa(G) \opnorm{\Delta} M_J \radiusU +  \LyapGNorm{\Delta }^2 + \kappa(G) M_J^2 \radiusU^2)
\end{align*}
So the contraction is established for $\beta \leq \beta^*$ where $\beta^*$ is defined in the body of the lemma and for $d' = (4 \opnorm{G})^{-1}$ we have 
\begin{align*}
    \LyapGNorm{I - \beta (\Delta-  \J{yx}(U - U_0))}^2 
    & \leq 1  - d\beta  \leq e^{-d'\beta }
\end{align*}
this implies that: 
\begin{align*}
    \LyapGNorm{I - \beta (\Delta-  \J{yx}(U - U_0))}  \leq e^{-{d'\over 2}\beta }
\end{align*}
\end{proof}}
In the next lemma we handle the norms of the sum of deviations appearing in the proof of Theorem~\ref{thm : bias}.
\begin{lemma}\label{lemma:bias-poisson} Taking the sequence $\{x_k,y_k\}_{k\geq 0}$ defined by \eqref{eq: TTSA update} and under assumptions A.\ref{assumption: Boundedness}-\ref{assumption: All Kernel Properties} and B.\ref{assump : derivability}. Assume also that the step sizes $\alpha$ and $\beta$ satisfy $\alpha<\alpha^*$, $\beta < \beta^*$ and ${\beta \over \alpha }< \omega^*$ such that $\alpha^*,\beta^*, \omega^*$ are defined in \eqref{eq: alpha star condition for bias thm},\eqref{eq: beta star condition for bias thm},\eqref{eq: omega star condition for bias thm} then, the term \eqref{eq:fast-bias-term-3} is bounded by:
\begin{align*}
  \eqref{eq:fast-bias-term-3} \leq & \alpha A_1 +\beta (A_2 + B_1)+ \alpha ^2 A_3+ \beta^2 B_2 + \alpha \beta (A_4 + B_3) +{\beta ^2 \over \alpha} B_4
\end{align*}
in addition the term \eqref{eq:slow-bias-term-3} is bounded by: 
\begin{align*}
    \eqref{eq:slow-bias-term-3} \leq & \alpha (C_2 + D_4) + \beta (C_1 + D_1 )+ \alpha \beta (C_4 + D_3) + \beta^2 (C_3 + D_2)
\end{align*}
where the constants are independent of $\alpha$ and $\beta$ and are detailed in the proof.
\end{lemma}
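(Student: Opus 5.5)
The plan is to repeat, at the level of expectations of vectors, the Poisson-equation decomposition of Lemma~\ref{lemma : poisson mse}. The scalar geometric weights $(1-\alpha\muf)^{k-j}$ are now replaced by the matrix powers $A_x^{k-j}$ and $A_y^{k-j}$. Their summability comes from Lemmas~\ref{lemma: Lyapunov's fast contraction} and~\ref{lemma: Lyapunov's slow contraction}, which give $\opnorm{A_x^\ell}\leq\scondL e^{-c\alpha\ell}$ and $\opnorm{A_y^\ell}\leq\scondG e^{-d\beta\ell}$. Consequently
\[
\sum_{\ell\ge0}\opnorm{A_x^\ell}\leq \frac{\scondL}{1-e^{-c\alpha}}=O\big(\tfrac1\alpha\big)
\qquad\text{and}\qquad
\sum_{\ell\ge0}\opnorm{A_y^\ell}=O\big(\tfrac1\beta\big),
\]
with constants depending only on $c,d,\alpha^*,\beta^*$.

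\textbf{Step 1: decomposing the noise.} Using the Poisson solutions $v_f$ in \eqref{eq: poisson solution fast} and $v_g$ in \eqref{eq: poisson solution slow}, I will write $\Expect{\psi_j}$ and $\Expect{\varphi_j}$ through the same four-term split as \eqref{eq: poisson fast 1}--\eqref{eq: poisson fast 4}, but without the inner product with $\x{j}$.
\begin{itemize}
\item The martingale-difference term has zero expectation, because the weights $A_x^{k-j}$ are deterministic.
\item The Lipschitz-drift term $v_f(x_{j+1},y_{j+1},\xi_{j+2})-v_f(x_j,y_j,\xi_{j+2})$ has norm at most $\Lvf(\alpha\boundF+\beta\boundG)$.
\item The kernel-drift term has norm at most $|\Xi|\Lk\boundXi(\alpha\boundF+\beta\boundG)$.
\item What remains is the telescoping term $w_j-w_{j+1}$, with $w_j\triangleq\Expect{v_f(x_j,y_j,\xi_{j+1})}$ and $\norm{w_j}\leq\boundXi$.
\end{itemize}

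\textbf{Step 2: summation by parts with matrix weights.} For the telescoping term I will use
\[
\sum_{j=0}^k A^{k-j}(w_j-w_{j+1})=A^k w_0-w_{k+1}+\sum_{j=0}^{k-1}A^{k-j-1}(A-I)w_{j+1}.
\]
Since $A_x-I=-\alpha J_{xx}^{\alpha,\beta}$, and $\opnorm{J_{xx}^{\alpha,\beta}}\leq\opnorm{J_{xx}}+\omega^*(\Rbound)\opnorm{J_{yx}}$ by Lemma~\ref{lemma: Uniform Bound Riccati}, the last sum is bounded by $\alpha\cdot O(1)\cdot O(1/\alpha)=O(1)$. Similarly $A_y-I=-\beta J_{yy}^{\alpha,\beta}$ is $O(\beta)$.

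\textbf{Step 3: collecting the four pieces.}
\begin{itemize}
\item $\alpha\sum_j A_x^{k-j}\Expect{\psi_j}$: the telescoping part gives $\alpha\cdot O(1)$, which feeds $A_1$. The two drift parts give $\alpha\cdot O(1/\alpha)\cdot O(\alpha+\beta)$, which feeds $A_2$; the $\alpha^2$ and $\alpha\beta$ pieces feed $A_3$ and $A_4$.
\item $\beta\sum_j A_x^{k-j}U\Expect{\varphi_j}$, using $\opnorm{U}\leq\Rbound$: this gives $\beta\cdot O(1)$ plus $\beta\cdot O(1/\alpha)\cdot O(\alpha+\beta)$. This is where the $\beta^2/\alpha$ term and the constants $B_1,\dots,B_4$ come from.
\item $\alpha\sum_j A_y^{k-j}X\Expect{\psi_j}$: here the potentially bad factor $1/\beta$ from the slow sum is cancelled by \eqref{eq:alpha-X-bound}, since $\alpha\opnorm{X}\leq\beta\radiusX$. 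This gives $\beta\radiusX\cdot O(1/\beta)\cdot O(\alpha+\beta)=O(\alpha+\beta)$, and the telescoping part is $O(\beta)$. These produce $C_1,\dots,C_4$.
\item $\beta\sum_j A_y^{k-j}I'\Expect{\varphi_j}$, using $\opnorm{I'}\leq M_{I'}$: this gives $\beta\cdot O(1)+\beta\cdot O(1/\beta)\cdot O(\alpha+\beta)$, which produces $D_1,\dots,D_4$.
\end{itemize}

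\textbf{Main obstacle.} I expect the delicate part to be the matrix summation by parts. It must be arranged so that the factor $I-A$ always lands next to a summable power $A^{\ell}$. The constants must also stay uniform in $\alpha,\beta$ and depend only on $\alpha^*,\beta^*,\omega^*$. This requires using the Riccati and Sylvester bounds (Lemmas~\ref{lemma: Uniform Bound Riccati} and~\ref{lemma: Uniform Bound Sylvester}) under exactly the step-size conditions of the statement.
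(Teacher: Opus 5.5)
Your proposal is correct and follows the paper's proof. It uses the same four-term Poisson split of $\Expect{\psi_j}$ and $\Expect{\varphi_j}$, the same Lyapunov-norm bounds $\opnorm{A_x^\ell}\le\scondL e^{-c\alpha\ell}$ and $\opnorm{A_y^\ell}\le\scondG e^{-d\beta\ell}$, and the same cancellation $\alpha\opnorm{X}\le\beta\radiusX$ in the cross term.

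The only difference is in the pseudo-telescoping sum. The paper routes it through $A_{\alpha,\beta}^{-1}$, which is why it needs the Neumann-series invertibility conditions and the bounds $C_A$, $C_B$. Your Abel summation is the same identity with the inverse cancelled, so it needs none of that. The last factor in your identity should be $I-A$ rather than $A-I$, which is harmless under norms.
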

\begin{proof} We start with bounding the first term: 
\begin{align}
    \alpha \norm{\sum_{j=0}^k (I-\alpha J_{xx}^{\alpha, \beta})^{k-j}\Expect{\psi_j}} \label{eq: sum of fast contracted markovian sum 1}
\end{align}
In Lemma \ref{lemma : poisson mse} we have seen that there exists a function $v_f$ the solution of the Poisson equation \eqref{eq: poisson solution fast}, which allowed us to write:
\begin{align*}
   \Expect{\psi_s} = \Expect{\vf{s} - \expvf{s}{s+1}{s}}
\end{align*}
we could write the term inside the expectation as:
\begin{align}
    \psi_s =& v_f(x_s, y_s, \xi_{s+2}) - \newcontent{\expvf{s+1}{s+1}{s}} \label{eq: poisson fast 1 (bias)}\\
    &+  v_f(x_{s+1}, y_{s+1}, \xi_{s+2}) -  v_f(x_s, y_s, \xi_{s+2}) \label{eq: poisson fast 2 (bias)}\\
    &  + \vf{s} - v_f(x_{s+1}, y_{s+1}, \xi_{s+2})\label{eq: poisson fast 3 (bias)}\\
    & + \newcontent{\expvf{s+1}{s+1}{s} - \expvf{s}{s+1}{s}} \label{eq: poisson fast 4 (bias)}
\end{align}
where the first term \eqref{eq: poisson fast 1 (bias)} is a Martingale Difference Sequence and \newcontent{we denote $K_s \triangleq \Kernel{s}{}$}. By taking the expectation, we get:
\begin{align*}
    \Exp[v_f(x_s, y_s, \xi_{s+2}) - \newcontent{\expvf{s+1}{s+1}{s}} ] =&  \Exp \big [ \newcontent{\expvf{s+1}{s+1}{s}} \\  & - \newcontent{\expvf{s+1}{s+1}{s}} \big] \\
    =& 0
\end{align*}
moving to the second term \eqref{eq: poisson fast 2 (bias)} and for legibility, we set $v_f(x_s,y_s, \xi_{s+2}) = v_ s$ and $v_f(x_{s+1},y_{s+1}, \xi_{s+2}) = v_{s+1}$ we have:
\begin{align*}
    \alpha \norm{\sum_{s=0}^k (I- \alpha \jxxab)^{k-s} (v_s - v_{s+1})}\leq & \alpha  \sum_{s=0}^k \norm{I- \alpha \jxxab}^{k-s} \norm{v_s - v_{s+1}}\\
    \leq & \alpha\scondL \sum_{s=0}^k e^{-c\alpha (k-s)}\norm{v_s - v_{s+1}}\\
    \leq & \alpha\scondL \Lvf \sum_{s=0}^k e^{-c\alpha (k-s)}(\norm{x_s - x_{s+1}}+ \norm{y_s - y_{s+1}})\\
    \leq &  \alpha\scondL\Lvf  (\alpha \boundF + \beta \boundG) \sum_{s=0}^k e^{-c\alpha s} \\
    \leq & \alpha\scondL \Lvf (\alpha \boundF + \beta \boundG)  ( 1 + \int _0^\infty  e^{-c\alpha t}dt)\\
    = &  \alpha c^{-1}\scondL \Lvf  \boundF   +  \beta c^{-1}\scondL \Lvf  \boundG \\
    & + \alpha^2 \scondL \Lvf \boundF +  \alpha \beta \scondL \Lvf  \boundG 
\end{align*}
Next, we handle the pseudo-telescoping term \eqref{eq: poisson fast 3 (bias)}, but before that we set $\vf{s}= v_s$, $v_f(x_{s+1}, y_{s+1}, \xi_{s+2}) = v_{s+1}$, $\jxxab = J_{xx}+ {\beta \over \alpha }U J_{yx}$ and $A_{\alpha,\beta} = I - \alpha \jxxab$. 
\newcontent{From the Neumann series convergence, $A_{\alpha,\beta}$ is invertible if we have a condition of the type \[\alpha \opnorm{\jxxab} < 1 \]
since we are assuming 
\begin{align}
    \alpha < {1\over \opnorm{\J{xx}} + (\Rbound)\opnorm{\J{yx}}} \label{eq: alpha for invertible fast bias}
\end{align}
then the condition is satisfied and $A_{\alpha,\beta}$ is invertible and the inverse can be given by the Neumann series $A_{\alpha,\beta}^{-1} = \sum_{k=0}^{\infty} \parentheses{\alpha \jxxab}^k $. However, we will not need the explicit form of the inverse, so we use it directly:}
\begin{align*}
    \sum_{s=0}^k A_{\alpha,\beta}^{k-s}(v_s- v_{s+1}) =& \sum_{s=0}^k A_{\alpha,\beta}^{k-s}v_s- A_{\alpha,\beta} A_{\alpha,\beta}^{k-(s+1)}v_{s+1}\\
    =&\sum_{s=0}^k A_{\alpha,\beta}^{k-s}v_s- A_{\alpha,\beta}^{k-(s+1)}v_{s+1}+ \alpha \jxxab A_{\alpha,\beta}^{-1} \sum_{s=0}^{k}  A_{\alpha,\beta}^{k-s}v_{s+1}\\
    = & A_{\alpha,\beta}^kv_0 - A_{\alpha,\beta}^{-1}v_{k+1} + \alpha \jxxab A_{\alpha,\beta}^{-1} \sum_{s=0}^{k}  A_{\alpha,\beta}^{k-s}v_{s+1}
\end{align*}
taking the norm:
\begin{align*}
    \alpha \norm{\sum_{s=0}^k A_{\alpha,\beta}^{k-s}(v_s- v_{s+1})} \leq  &\alpha^2 \opnorm{\jxxab} \opnorm{A_{\alpha,\beta}^{-1}} \sum_{s=0}^{k}  \opnorm{A_{\alpha,\beta}^{k-s}} \norm{v_{s+1}} + \alpha \opnorm{A_{\alpha,\beta}^k}\norm{v_0} \\&+ \alpha \opnorm{A_{\alpha,\beta}^{-1}}\norm{v_{k+1}} \\
    \leq &\alpha^2 \scondL \opnorm{\jxxab} \opnorm{A_{\alpha,\beta}^{-1}}\boundXi\sum_{s=0}^{k}e^{-c\alpha s}+ \alpha e^{-c\alpha k} \scondL\boundXi\\& + \alpha \opnorm{A_{\alpha,\beta}^{-1}} \boundXi \\
    \leq & \scondL\opnorm{\jxxab} \opnorm{A_{\alpha,\beta}^{-1}}\boundXi\parentheses{\alpha^2 + {\alpha \over c }}+ \alpha e^{-c\alpha k} \scondL \boundXi \\
    &+ \alpha \opnorm{A_{\alpha,\beta}^{-1}} \boundXi \\
    = & \alpha^2 \scondL\opnorm{\jxxab} \opnorm{A_{\alpha,\beta}^{-1}}\boundXi + \alpha e^{-c\alpha k} \scondL \boundXi\\
    &+ \alpha c^{-1}\scondL\opnorm{\jxxab} \opnorm{A_{\alpha,\beta}^{-1}}\boundXi + \alpha \opnorm{A_{\alpha,\beta}^{-1}} \boundXi 
\end{align*}
where in the second inequality we used $\opnorm{D} \leq \scondL \LyapFNorm{D}$. 
\newcontent{
The last term \eqref{eq: poisson fast 4 (bias)} is capturing the drift in the kernel and is bounded in the same way as \eqref{eq: poisson fast 4} by 
\begin{align*}
    \norm{\expvf{s+1}{s+1}{s} - \expvf{s}{s+1}{s}} \leq \boundXi \boundXY \Lk |\Xi| (\alpha \boundF + \beta \boundG)
\end{align*}
which yields the bound 
\begin{align*}
    \alpha \norm{ \sum_{s=0}^k (I- \alpha \jxxab)^{k-s}\eqref{eq: poisson fast 4 (bias)}} \leq \scondL \boundXY \boundXi \Lk |\Xi|c^{-1}(\alpha \boundF + \beta \boundG)
\end{align*}
}
We pack all the bounds together, under our assumptions on the step sizes there is $C_A, C_J$ that satisfy $\opnorm{A_{\alpha,\beta}^{-1}} \leq C_A$ and  $\opnorm{\jxxab} \leq C_J$ so we have: 
\begin{align*}
    \eqref{eq: sum of fast contracted markovian sum 1} \leq & \alpha \scondL \underbrace{\parentheses{c^{-1} \boundF (\Lvf+  \newcontent{\boundXY \boundXi \Lk |\Xi|}) +\boundXi\parentheses{1+C_A \parentheses{c^{-1}C_J+\scondL^{-1} }}}}_{\triangleq A_1}
    \\& + \beta \underbrace{\parentheses{c^{-1}\scondL \boundG(\Lvf+  \newcontent{\boundXY \boundXi \Lk |\Xi|})}}_{\triangleq A_2}  + \alpha ^2 \underbrace{\scondL \parentheses{ \Lvf \boundF  + C_J C_A\boundXi}}_{\triangleq A_3}\\
    & + \alpha \beta \underbrace{\parentheses{ \scondL \Lvf  \boundG} }_{\triangleq A_4}
\end{align*}
The second part of \eqref{eq:fast-bias-term-3} concerns bounding the term: 
\begin{align}
    \beta \norm{\sum_{j=0}^k (I - \alpha \jxxab)^{k-j}  U\Expect{\varphi_j}} \label{eq: sum of fast contracted markovian sum 2}
\end{align}
that is similar to the first one, except that now we have a multiplication by $\beta$ and the deviations concern the dynamics of the slow variable. There exists a function $v_g$ solution of the Poisson equation: 
\begin{align*}
    \vg{j} - \newcontent{\expvg{j}{j+1}{j}}= \varphi_j
\end{align*}
so we can write: 
\begin{align}
    \varphi_j =& v_g(x_j, y_j, \xi_{j+2}) - \newcontent{\expvg{j+1}{j+1}{j}} \label{eq: poisson slow 1 (bias)}\\
    &+  v_g(x_{j+1}, y_{j+1}, \xi_{j+2}) -  v_g(x_j, y_j, \xi_{j+2}) \label{eq: poisson slow 2 (bias)}\\
    &  + \vg{j} - v_g(x_{j+1}, y_{j+1}, \xi_{j+2}) \label{eq: poisson slow 3 (bias)}\\
    & +\newcontent{\expvg{j+1}{j+1}{j}}  - \newcontent{\expvg{j}{j+1}{j}} \label{eq: poisson slow 4 (bias)}
\end{align}
where term \eqref{eq: poisson slow 1 (bias)} is an MDS, under expectation it will equal zero. The second term \eqref{eq: poisson slow 2 (bias)}, similarly to the previous part, we set $v_g(x_j,y_j, \xi_{j+2}) = v_j$ and $v_g(x_{j+1},y_{j+1}, \xi_{j+2}) = v_{j+1}$ we have 

\begin{align*}
    \beta \norm{\sum_{j=0}^k (I - \alpha \jxxab)^{k-j}  U(v_j - v_{j+1})}\leq & \beta^2  \scondL \opnorm{U}\Lvg  \boundG + \alpha \beta  \scondL \opnorm{U}\Lvg  \boundF  \\
    &+ {\beta^2  \over c\alpha } \scondL \opnorm{U}\Lvg  \boundG+ \beta  c^{-1} \scondL \opnorm{U}\Lvg  \boundF
\end{align*}
for the third term \eqref{eq: poisson slow 3 (bias)} it will follow the same bound as the one used in \eqref{eq: poisson fast 3 (bias)}, we just need to multiply it by $\beta$, we set $\vg{j}= v_j$, $v_g(x_{j+1}, y_{j+1}, \xi_{j+2}) = v_{j+1}$:
\begin{align*}
    \beta \norm{\sum_{j=0}^k A_{\alpha,\beta}^{k-j}U(v_j- v_{j+1})} \leq  & \alpha \beta \scondL\opnorm{\jxxab}\opnorm{U} \opnorm{A_{\alpha,\beta}^{-1}}\boundXi + \beta e^{-c\alpha k} \scondL \opnorm{U}\boundXi\\
    &+ \beta c^{-1}\scondL\opnorm{\jxxab}\opnorm{U} \opnorm{A_{\alpha,\beta}^{-1}}\boundXi + \beta \opnorm{A_{\alpha,\beta}^{-1}} \opnorm{U}\boundXi 
\end{align*}
\newcontent{Finally, we bound the term  
\begin{align*}
    \beta \norm{ \sum_{s=0}^k (I- \alpha \jxxab)^{k-s} U \eqref{eq: poisson slow 4 (bias)}} \leq {\beta \over \alpha } \opnorm{U} \scondL \boundXY \boundXi \Lk |\Xi|c^{-1}(\alpha \boundF + \beta \boundG)
\end{align*}}
Putting all the bounds together, we get: 
\begin{align*}
    \eqref{eq: sum of fast contracted markovian sum 2}\leq & \beta \underbrace{ \opnorm{U} \scondL  \parentheses{ c^{-1} (\Lvg + \newcontent{\boundXY \boundXi \Lk |\Xi|})  \boundF + \boundXi\parentheses{ C_A\parentheses{c^{-1}C_J + \scondL ^{-1}} + 1  } }}_{\triangleq B_1}\\
    & + \beta^2  \underbrace{\scondL \opnorm{U}\Lvg  \boundG }_{\triangleq B_2}+ \alpha \beta \underbrace{\scondL \parentheses{C_J\opnorm{U} C_A\boundXi +  \opnorm{U}\Lvg  \boundF }}_{\triangleq B_3}  \\
    & + {\beta^2  \over \alpha } \underbrace{\scondL \opnorm{U} \boundG c^{-1}(\Lvg + \newcontent{\boundXY \boundXi \Lk |\Xi|})}_{\triangleq B_4}
\end{align*}
We merge the major two bounds together to get: 
\begin{align*}
\eqref{eq: sum of fast contracted markovian sum 1} +\eqref{eq: sum of fast contracted markovian sum 2} \leq &  \alpha A_1 +\beta (A_2 + B_1)+ \alpha ^2 A_3+ \beta^2 B_2 + \alpha \beta (A_4 + B_3) +{\beta ^2 \over \alpha} B_4
\end{align*}
ending the first part of the proof. Now we start with the second part to bound \eqref{eq:slow-bias-term-3}. We start by the first term: 
\begin{align}
    \alpha \norm{\sum_{j=0}^k  (I - \beta \jyyab)^{k-j} X\Expect{\psi_j}}
    \label{eq: poisson sum contraction cross}
\end{align}
as we have discussed above about rewriting the inside terms with the solutions of the Poisson equation, we set $v_f(x_s,y_s, \xi_{s+2}) = v_ s$ and $v_f(x_{s+1},y_{s+1}, \xi_{s+2}) = v_{s+1}$ we have:

\begin{align*}
    \alpha \norm{\sum_{s=0}^k (I- \beta  \jyyab)^{k-s} X (v_s - v_{s+1})}\leq & \alpha \sum_{s=0}^k \norm{I- \beta \jyyab}^{k-s} \norm{X} \norm{v_s - v_{s+1}}\\
    \leq & \alpha\scondG\norm{X} \sum_{s=0}^k e^{-d\beta (k-s)}\norm{v_s - v_{s+1}}\\
    \leq & \alpha\scondG\Lvf\norm{X} (\alpha \boundF + \beta \boundG)  ( 1 + {1 \over d\beta })\\
    \leq & \beta \radiusX   \scondG \Lvf(\alpha \boundF + \beta \boundG)  ( 1 + {1 \over d\beta })\\
    = & \alpha \beta \radiusX   \scondG \Lvf  \boundF  + \beta^2 \radiusX   \scondG \Lvf  \boundG  \\
    & + \alpha  d^{-1} \radiusX   \scondG \Lvf  \boundF + \beta d^{-1} \radiusX   \scondG \Lvf  \boundG
\end{align*}
where in the fourth inequality we have used the bound \eqref{eq:alpha-X-bound}, now we target the almost telescoping term. We set $\vf{s}= v_s$, $v_f(x_{s+1}, y_{s+1}, \xi_{s+2}) = v_{s+1}$, $\jyyab = \Delta -  J_{yx}(U-U_0) $ and $B_{\alpha,\beta} = I - \beta \jyyab$. \newcontent{Since we are assuming 
\begin{align}
\beta < {1 \over \opnorm{\Delta} +\radiusU \opnorm{\J{yx} }} \label{eq: beta invertibility condition for slow bias}
\end{align}
then $B_{\alpha,\beta}$ is invertible using the Neumann series, we use the inverse to have:}
\begin{align*}
    \sum_{s=0}^k B_{\alpha,\beta}^{k-s}X(v_s- v_{s+1}) =& \sum_{s=0}^k B_{\alpha,\beta}^{k-s}Xv_s- B_{\alpha,\beta} B_{\alpha,\beta}^{k-(s+1)}Xv_{s+1}\\
    =&\sum_{s=0}^k B_{\alpha,\beta}^{k-s}Xv_s- B_{\alpha,\beta}^{k-(s+1)}Xv_{s+1}+ \beta \jyyab B_{\alpha,\beta}^{-1} \sum_{s=0}^{k}  B_{\alpha,\beta}^{k-s}Xv_{s+1}\\
    = & B_{\alpha,\beta}^k X v_0 - B_{\alpha,\beta}^{-1}Xv_{k+1} + \beta \jyyab B_{\alpha,\beta}^{-1} \sum_{s=0}^{k}  B_{\alpha,\beta}^{k-s}Xv_{s+1}
\end{align*}
applying the norm: 
\begin{align*}
    \alpha \norm{\sum_{s=0}^k B_{\alpha,\beta}^{k-s}X(v_s- v_{s+1})} \leq & \alpha \opnorm{B_{\alpha,\beta}^k} \opnorm{ X} \norm{ v_0}  + \alpha \opnorm{B_{\alpha,\beta}^{-1}} \opnorm{X}\norm{v_{k+1}} \\
    & + \alpha \beta \opnorm{\jyyab} \opnorm{ B_{\alpha,\beta}^{-1}} \opnorm{X}\sum_{s=0}^{k}  \opnorm{B_{\alpha,\beta}}^{k-s}\norm{v_{s+1}}\\
    \leq &  \beta \scondG \radiusX e^{-d\beta k} \boundXi + \beta \scondG \radiusX  \norm{B_{\alpha,\beta}^{-1}}  \boundXi \\
    & +  \beta^2 \scondG \radiusX  \norm{\jyyab} \norm{ B_{\alpha,\beta}^{-1}} \boundXi \sum_{s=0}^{k}  e^{-d\beta (k-s)}\\
    \leq &  \beta \radiusX  \scondG \boundXi \parentheses{ e^{-d\beta k} +  \norm{B_{\alpha,\beta}^{-1}} + \norm{\jyyab} \norm{ B_{\alpha,\beta}^{-1}} d^{-1}} \\
    & \beta^2 \radiusX  \scondG \boundXi \norm{\jyyab} \norm{ B_{\alpha,\beta}^{-1}} 
\end{align*}
and we get our bound for the first part, under our assumptions on the step sizes there is $C_A, C_J$ that satisfy $\norm{ B_{\alpha,\beta}^{-1}} \leq C_B$ and  $\opnorm{\jyyab} \leq C_J$ so we have: 
\begin{align*}
    \eqref{eq: poisson sum contraction cross}\leq & \beta \underbrace{\radiusX  \scondG \parentheses{d^{-1}(\Lvf + \newcontent{\boundXY \boundXi \Lk |\Xi|}) \boundG  + \boundXi \parentheses{ 1 +  C_B + \norm{\jyyab} C_B d^{-1}}}}_{\triangleq C_1} \\
    & + \alpha  \underbrace{\radiusX   \scondG (\Lvf + \newcontent{\boundXY \boundXi \Lk |\Xi|})  \boundF d^{-1} }_{\triangleq C_2}+ \beta^2 \underbrace{ \radiusX  \scondG \parentheses{\boundXi C_J C_B +  \Lvf  \boundG }}_{\triangleq C_3}  \\
    & + \alpha \beta  \underbrace{ \radiusX   \scondG \Lvf  \boundF}_{\triangleq C_4}
\end{align*}
our last bound is for the term : 
\begin{align}
    \beta \norm{\sum_{j=0}^k  (I - \beta \jyyab)^{k-j} I'\Expect{\varphi_j}} \label{eq: poisson sum contraction slow}
\end{align}
from the Poisson decomposition the first term will equal zero under expectation. For the second term \eqref{eq: poisson slow 2 (bias)}, similarly to the previous part, we set $v_g(x_j,y_j, \xi_{j+2}) = v_j$ and $v_g(x_{j+1},y_{j+1}, \xi_{j+2}) = v_{j+1}$ we have:
\begin{align*}
    \beta \norm{\sum_{j=0}^k (I - \beta \jyyab)^{k-j}  I'(v_j - v_{j+1})}
    \leq & \alpha \beta \opnorm{I'} \scondG \Lvg  \boundF + \beta d^{-1} \opnorm{I'} \scondG \Lvg  \boundG \\
    & + \alpha  d^{-1}\opnorm{I'} \scondG \Lvg  \boundF + \beta^2 \opnorm{I'}\scondG \Lvg  \boundG 
\end{align*}
and the last term, we set $v_j = \vg{j}, v_{j+1} = v_g(x_{j+1}, y_{j+1},\xi_{j+2})$:
\begin{align*}
    \beta \norm{\sum_{s=0}^k B_{\alpha,\beta}^{k-s}I'(v_s- v_{s+1})} \leq &  \beta \opnorm{I'} \boundXi \parentheses{ \scondG e^{-d\beta k} +  \opnorm{B_{\alpha,\beta}^{-1}} + \scondG\opnorm{\jyyab} \opnorm{ B_{\alpha,\beta}^{-1}} d^{-1}  }\\
    & +  \beta^2 \opnorm{I'} \boundXi \scondG\opnorm{\jyyab} \opnorm{ B_{\alpha,\beta}^{-1}}
\end{align*}
which implies the bound on the term: 
\begin{align*}
    \eqref{eq: poisson sum contraction slow} \leq &  \beta \underbrace{ \opnorm{I'} \boundXi \parentheses{ C_B + \scondG \parentheses{ 1 + d^{-1} \parentheses{C_J C_B +\opnorm{I'} (\Lvg + \newcontent{\boundXY \boundXi \Lk |\Xi|}) \boundG }}}}_{\triangleq D_1}\\
    & +  \beta^2 \underbrace{\parentheses{\opnorm{I'} \boundXi \scondG C_J C_B+\opnorm{I'}\scondG \Lvg  \boundG }}_{\triangleq D_2}\\
    & + \alpha \beta  \underbrace{\opnorm{I'} \scondG \Lvg  \boundF}_{\triangleq D_3}  + \alpha \underbrace{ d^{-1}\opnorm{I'} \scondG (\Lvg + \newcontent{\boundXY \boundXi \Lk |\Xi|})  \boundF }_{\triangleq D_4}
\end{align*}
which implies the following bound: 
\begin{align*}
     \eqref{eq: poisson sum contraction cross} + \eqref{eq: poisson sum contraction slow} \leq \alpha (C_2 + D_4) + \beta (C_1 + D_1 )+ \alpha \beta (C_4 + D_3) + \beta^2 (C_3 + D_2) 
\end{align*}
which concludes the proof.
\end{proof}
Lastly, this lemma is needed to bound the weighted sum of the MSEs of the fast and slow iterates. We need this result in the proof of Theorem~\ref{thm : bias}.
\begin{lemma}\label{lemma : Exp Sum Of the MSEs} Taking the sequence $\{x_k,y_k\}_{k\geq 0}$ defined by \eqref{eq: TTSA update} and under assumptions A.\ref{assumption: Boundedness}-\ref{assumption: Lip and Mono of g}, let $q,\gamma$ be positive real numbers, we have: 
\begin{align*}
    \sum_{s=0}^k e^{-q\gamma(k-s)} \Expect{ \norm{\y{s}}^2+\norm{\x{s}}^2}  \leq\parentheses{1 + {1\over q\gamma}} \parentheses{\MSEx + \MSEy}+ u_k^{q,\gamma}+w_k^{q,\gamma}
\end{align*}
\end{lemma}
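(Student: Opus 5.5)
The plan is to plug the MSE bounds of Theorem~\ref{thm : mse} into the weighted sum term by term. The sum then splits into a stationary part, which produces the factor $(1+1/(q\gamma))$, and a transient part, which we name $u_k^{q,\gamma}$ (fast initial error) and $w_k^{q,\gamma}$ (slow initial error) and show vanishes as $k\to\infty$. Note that the statement implicitly requires $\alpha\muf<1$, $\beta\le\alpha$ and $\beta\mug<1$, so that Theorem~\ref{thm : mse} applies.

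Theorem~\ref{thm : mse} gives, for every $s\ge 0$ (the case $s=0$ is trivial):
\begin{align*}
\Expect{\norm{\x{s}}^2}&\le (1-\alpha\muf)^s\norm{\x{0}}^2+\MSEx,\\
\Expect{\norm{\y{s}}^2}&\le (1-\beta\mug)^s\norm{\y{0}}^2+D_1\beta\, s\,(1-(\alpha\muf\wedge\beta\mug))^s\norm{\x{0}}^2+\MSEy.
\end{align*}
Summing the constant terms against the weights uses
\[
\sum_{s=0}^k e^{-q\gamma(k-s)}\le \sum_{\ell\ge0}e^{-q\gamma \ell}=\frac{1}{1-e^{-q\gamma}}\le 1+\frac{1}{q\gamma}.
\]
The last inequality follows from $1-e^{-t}\ge t/(1+t)$. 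This yields exactly $\bigl(1+\frac{1}{q\gamma}\bigr)(\MSEx+\MSEy)$.

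The transient terms define the remainder sequences:
\begin{align*}
u_k^{q,\gamma}&\triangleq \norm{\x{0}}^2\sum_{s=0}^k e^{-q\gamma(k-s)}\Bigl((1-\alpha\muf)^s+D_1\beta s(1-(\alpha\muf\wedge\beta\mug))^s\Bigr),\\
w_k^{q,\gamma}&\triangleq \norm{\y{0}}^2\sum_{s=0}^k e^{-q\gamma(k-s)}(1-\beta\mug)^s.
\end{align*}
With these definitions the inequality of the lemma holds by construction.

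The only real work, needed for the later claims $\lim_k \mathrm{\hat S^{MSE}}_k=0$ and \eqref{eq:sequence-sum-mse}, is to show that $u_k,w_k$ decay exponentially. I would bound $(1-r)^s\le e^{-rs}$ and use the convolution estimate
\[
\sum_{s=0}^k e^{-a(k-s)}e^{-bs}\le (k+1)e^{-(a\wedge b)k}.
\]
For the term carrying the factor $s$ there is one more power of $k$. Absorbing the polynomial factor by halving the rate gives
\[
u_k\le a' e^{-\hat c k}\norm{\x{0}}^2,\qquad w_k\le b' e^{-\hat d k}\norm{\y{0}}^2,
\]
with, e.g., $\hat c=\tfrac12(q\gamma\wedge\alpha\muf\wedge\beta\mug)$ and constants $a',b'$ depending on $q,\gamma$ and the step sizes. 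This polynomial-times-exponential bookkeeping is the main obstacle. Keeping $a',b'$ uniform in $\alpha,\beta$ is delicate, since $\sup_k k e^{-\hat c k}\sim 1/\hat c$. This is harmless for the limit statement, and for the bias theorem it is enough that these terms vanish as $k\to\infty$.
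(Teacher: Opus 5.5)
Your proposal is correct and follows the paper's proof. Both substitute the bounds of Theorem~\ref{thm : mse}, bound the weighted sum of the constant terms by $1+1/(q\gamma)$ (the paper via the integral comparison $1+\int_0^\infty e^{-q\gamma t}\,dt$, you via $1/(1-e^{-q\gamma})$), and collect the transients into $u_k^{q,\gamma},w_k^{q,\gamma}$. The differences are cosmetic: you place the cross term $D_1\beta s(1-(\alpha\muf\wedge\beta\mug))^s\norm{\x{0}}^2$ in $u_k$ rather than $w_k$, which is harmless since only $u_k+w_k$ is used and it makes explicit that this $\norm{\x{0}}^2$ transient decays only at the slow rate. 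You also prove explicit exponential decay via $(1-r)^s\le e^{-rs}$ and a convolution bound, whereas the paper only shows $u_k,w_k\to 0$ through closed-form geometric sums.
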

where $\MSEx$ and $\MSEy$ are defined in Theorem \ref{thm : mse} and 
\[
u_k^{q,\gamma}= \norm{\x{0}}^2 e^{-kq\gamma}\sum_{s=0}^{k}(e^{q\gamma}(1-\alpha \mu_f))^{s}
\]
\[
 w_k^{q,\gamma}=\sum_{s=0 }^k e^{-q \gamma (k-s)} \parentheses{1 - \beta \mug}^s \norm{\y{0}}^2 + \sum_{s=0 }^k e^{-q \gamma (k-s)} D_1 \beta s (1 - \alpha \muf \wedge \beta \mug )^s\norm{\x{0}}^2
\]
are vanishing sequences as $k \to \infty$.
\begin{proof}
First, we start by analyzing the fast iterate, we invoke the result of Theorem \ref{thm : mse} to bound its MSE, we start by the sum of MSEs of the fast iterate:
\begin{align}
\sum_{s=0}^k e^{-q\gamma(k-s)}  \Expect{\norm{\x{s}}^2}  \leq   & \sum_{s=0}^{k}e^{-q\gamma(k-s)}  \left ( (1-\alpha \muf)^{s}\norm{\x{0}}^2 + \MSEx  \right ) \nonumber \\
  \leq & \sum_{s=0}^{k}e^{-q\gamma(k-s)}  \left ( (1-\alpha \mu_f)^{s} \norm{\x{0}}^2\right ) +  \MSEx   \parentheses{1 +\int_0^\infty e^{-q\gamma t}dt} \nonumber \\
  = & \sum_{s=0}^{k}e^{-q\gamma(k-s)} \left ( (1-\alpha \mu_f)^{s} \norm{\x{0}}^2\right ) +  \MSEx  \parentheses{1+{1 \over q\gamma }} \nonumber \\
 \leq &  \norm{\x{0}}^2 e^{-kq\gamma}\sum_{s=0}^{k}\parentheses{e^{q\gamma}(1-\alpha \mu_f)}^{s} \label{eq:sum-mse-fast-1}\\
& +  \parentheses{1+{1 \over q\gamma }}\MSEx \nonumber
\end{align}
Now from our assumptions we have $ 1- \alpha\muf \in (0,1] $. If $e^{q\gamma}(1-\alpha \mu_f) = 1 $ then the sum equals $e^{-kq\gamma }\sum_{s=0}^{k}(e^{q\gamma}(1-\alpha \mu_f))^{s} =e^{-kq\gamma }(k+1)$ which will go to 0 as $k\to\infty$.  For the other case $e^{q\gamma}(1-\alpha \mu_f) \neq 1$ so the sum : 
\[
e^{-kq\gamma }\sum_{s=0}^{k}(e^{q\gamma}(1-\alpha \mu_f))^{s} = {e^{-kq\gamma} - e^{q\gamma} (1-\alpha \muf)^{k+1} \over 1 - e^{q\gamma} (1-\alpha \muf)}
\]
which will also disappear as k goes to infinity. So we set the sequence $u_k^{q,\gamma}$ to be defined by:
\[
u_k^{q,\gamma} = \norm{\x{0}}^2 e^{-kq\gamma }\sum_{s=0}^{k}\parentheses{e^{q\gamma}(1-\alpha \mu_f)}^{s}
\]
such that 
\[
\lim_{k\rightarrow \infty} u_k^{q,\gamma} = 0
\]
Same reasoning goes with the MSE of the slow iterate, from Theorem \ref{thm : mse} following the same steps in the fast MSE above we get:
\begin{align}
\sum_{s=0}^k e^{-q\gamma(k-s)} \Expect{\norm{\y{s}}^2}  & \leq  \sum_{s=0 }^k e^{-q \gamma (k-s)} \parentheses{1 - \beta \mug}^s \norm{\y{0}}^2 \nonumber\\
& + \sum_{s=0 }^k e^{-q \gamma (k-s)} D_1 \beta s (1 - \alpha \muf \wedge \beta \mug )^s\norm{\x{0}}^2\label{eq:sum-mse-slow-2}\\
& + \sum_{s=0 }^k e^{-q \gamma (k-s)} \MSEy \label{eq:sum-mse-slow-3}
\end{align}
The first term follows the same reasoning as the term \eqref{eq:sum-mse-fast-1}, we focus on the second term \eqref{eq:sum-mse-slow-2}: 
\begin{align*}
    \eqref{eq:sum-mse-slow-2} \leq & \norm{\x{0}}^2 D_1 \beta ke^{-q \gamma k}  \sum_{s=0 }^k \parentheses{ e^{q \gamma}  (1 - \alpha \muf \wedge \beta \mug )}^s
\end{align*}
if $e^{q \gamma}  (1 - \alpha \muf \wedge \beta \mug ) =1$:
\begin{align*}
    ke^{-q \gamma k}  \sum_{s=0 }^k \parentheses{ e^{q \gamma}  (1 - \alpha \muf \wedge \beta \mug )}^s = e^{-q\gamma k} k(k+1)
\end{align*}
which vanishes as $k\to \infty$.

otherwise, if $e^{q \gamma}  (1 - \alpha \muf \wedge \beta \mug ) \neq1$:
\begin{align*}
    ke^{-q \gamma k}  \sum_{s=0 }^k \parentheses{ e^{q \gamma}  (1 - \alpha \muf \wedge \beta \mug )}^s = {k e^{-q \gamma k} - k e^{q\gamma} (1 - \beta \mug \wedge \alpha \muf)^{k+1}\over 1 -e^{q\gamma} (1 - \beta \mug \wedge \alpha \muf) }
\end{align*}
which also vanishes as $k\to \infty$. We define the new sequence  $w_k^{q,\gamma}$ as:
\[
 w_k^{q,\gamma}=\sum_{s=0 }^k e^{-q \gamma (k-s)} \parentheses{1 - \beta \mug}^s \norm{\y{0}}^2 + \sum_{s=0 }^k e^{-q \gamma (k-s)} D_1 \beta s (1 - \alpha \muf \wedge \beta \mug )^s\norm{\x{0}}^2
\]
such that 
\[
\lim_{k\rightarrow \infty} w_k^{q,\gamma}= 0
\]
and the last term \eqref{eq:sum-mse-slow-3} is bounded by 
\begin{align*}
    \eqref{eq:sum-mse-slow-3} \leq \parentheses{1 + {1\over q\gamma}} \MSEy
\end{align*}
now we can write the final bound:
\begin{align*}
    \sum_{s=0}^k e^{-q\gamma(k-s)} \Expect{ \norm{\y{s}}^2+\norm{\x{s}}^2}  \leq \parentheses{1 + {1\over q\gamma}} \parentheses{\MSEx + \MSEy}+ u_k^{q,\gamma}+w_k^{q,\gamma}
\end{align*}
\end{proof}

\section{Illustrations}\label{apx : Additional illustrations}

\subsection{First example}

\begin{figure}[b]
    \centering
    \begin{tabular}{@{}c@{}c@{}c@{}}
        \includegraphics[width=0.32\linewidth]{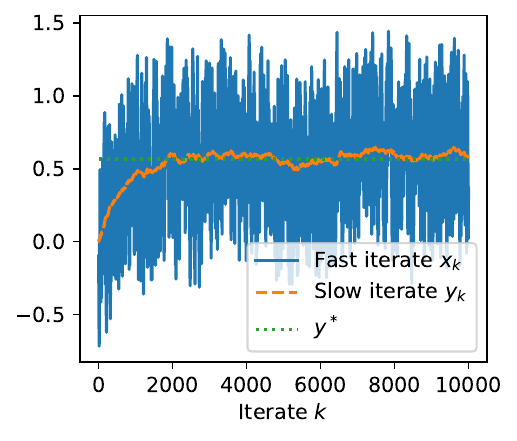}
        &\includegraphics[width=0.32\linewidth]{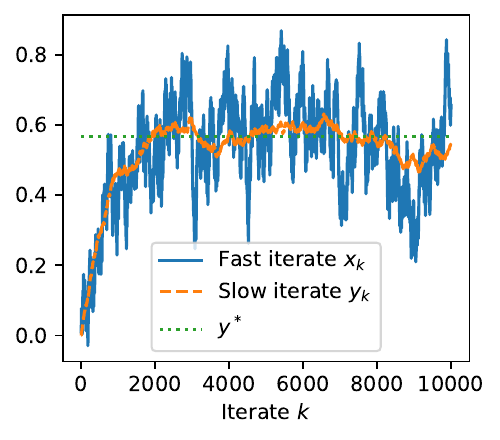}
        &\includegraphics[width=0.32\linewidth]{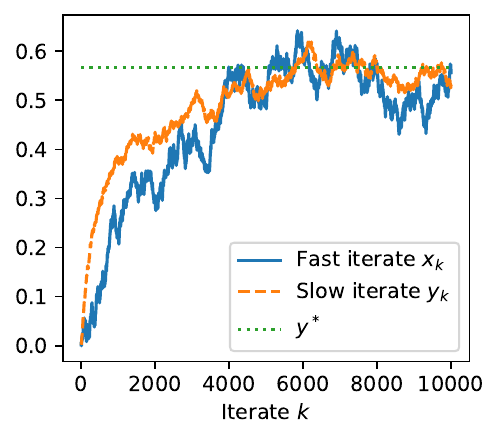}\\
        (a) $\alpha=0.1$, $\beta=0.001$
        &(b)  $\alpha=0.01$, $\beta=0.001$
        &(c) $\alpha=0.002$, $\beta=0.001$
    \end{tabular}
    \caption{Sample trajectory of the TTSA defined in \eqref{eq:example_smooth_apx}.}
    \label{fig:example1-traj}
\end{figure}

We investigate the empirical performance of a TTSA where both variables are $1$-dimensional: $x_k\in\Reals$ and $y_k\in\Reals$ and satisfy the following recursion
\begin{align}
    \label{eq:example_smooth_apx}
    \begin{aligned}
        x_{k+1} &= x_k -\alpha (x_k - y_k + \xi^x_{k+1} )\\
        y_{k+1} &= y_k -\beta (2 y_k - e^{-y_k} - x_k + \xi^y_{k+1} ),
    \end{aligned}
\end{align}
where the noise terms $\xi^x_{k}, \xi^y_k \in \{-1,1\}$ form the Markovian noise with transition probability kernel $\mathbb{P}[\xi_{k+1} = s' \mid \xi_k = s] = 0.3$ for $s \neq s'$. Initially, we set $x_0=y_0=0$.  For this example, $h(y)=y$ and $y^*$ is the solution of $y=\exp(-y)$ which is equal to $W_0(1)\approx0.567$, where $W_0$ is the standard Lambert function.

In Figure~\ref{fig:example1-traj}, we display a trajectory of the TTSA for three values of $\alpha\in\{0.1, 0.01, 0.002\}$ and a fixed $\beta=0.001$. We observe two phenomena that are shown in Theorem~\ref{thm : mse}:
\begin{enumerate}
    \item First, the vanishing term vanishes at a slower rate as $\alpha$ gets smaller. For instance, when $\alpha=0.002$ (Figure~\ref{fig:example1-traj}(c)), the fast variable is lagging behind $h(y_k)=y_k$ where when $\alpha=0.01$ (Figure~\ref{fig:example1-traj}(b)), the variable $x_k$ has no lag $y_k$.
    \item Second, the larger is $\alpha$, the larger are the oscillations of $x_k$ and $y_k$: in Figure~\ref{fig:example1-traj}(a), the variable $x_k$ is not lagging behind $h(y_k)=y_k$ but it is oscillating a lot. 
\end{enumerate} 

In Figure~\ref{fig:example1-MSE-bias}, we focus on the asymptotic behavior of the MSE and of the bias as a function of the step size $\alpha$ and for the case where there is a time scale separation $\beta=\alpha^{1.5}$.  For each value, we simulate a trajectory of length $k=1.2\cdot 10^5$ steps and compute the average values of $x_k$ for the last half of the trajectory. We perform $5000$ independent trajectories to obtain the confidence intervals (they are shown on both panels but are too small to be visible on the MSE plot in Figure~\ref{fig:example1-MSE-bias}(a)). This figure shows that for this example,  $\mathrm{MSE^x}$ seems to be of the order of $\alpha$ whereas $\mathrm{MSE^y}$ and the bias terms are of the order of $\beta=\alpha^{1.5}$. 

\begin{figure}[ht]
    \centering
    \begin{tabular}{cc}
        \includegraphics[width=0.45\linewidth]{Figures/MSE_simu.pdf}
        &\includegraphics[width=0.45\linewidth]{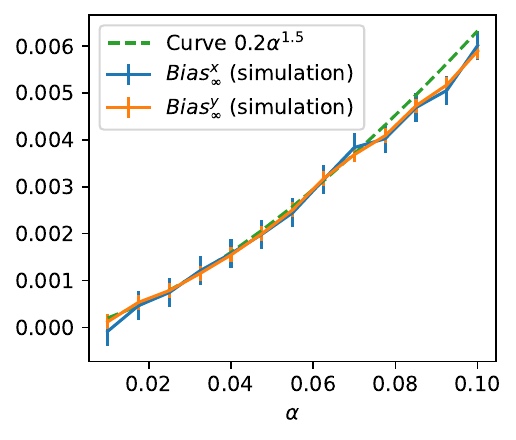}\\
        (a) MSE as a function of $\alpha$ (for $\beta=\alpha^{3/2}$)
        & (b) Bias as a function of $\alpha$ (for $\beta=\alpha^{3/2}$)\\
    \end{tabular}
    \caption{MSE and Bias of the TTSA defined in \eqref{eq:example_smooth_apx} for $\beta=\alpha^{3/2}$}
    \label{fig:example1-MSE-bias}
\end{figure}

\subsection{Example showing a lower bound in $\Omega(\beta^2/ \alpha^2)$}

The example used in the paper (Figure~\ref{fig:various illustrations}(c)) to illustrate the $\Omega(\beta^2/\alpha^2)$ term in the MSE is a TTSA whose recurrence equation is given by
\begin{align}
    \label{eq: beta^2/alpha^2}
    \begin{aligned}
        x_{k+1} &= x_k - \alpha (x_k-y_k)\\
        y_{k+1} &= y_k - \beta \frac{(x_k - y_k)}{|x_k-y_k|^\gamma}, 
    \end{aligned}
\end{align}
with $x_0=0$ and $y_0=1$.

In this example, $x_k$ chases the variable $y_k$ ($h(y_k)=y_k$), while $y_k$ tries to escape $x_k$. This example can be easily analyzed:  Rewriting $\delta_k\triangleq y_k-x_k$, one obtains 
\newcontent{
\begin{align*}
    \delta_{k+1} = (1-\alpha)\delta_k + \beta \frac{(\delta_k)}{|\delta_k|^\gamma}
\end{align*}}
As $k$ goes to infinity, this shows that $\lim_{k\to\infty}\delta_k =(\beta/\alpha)^{1/\gamma}$. This shows that:
\begin{align*}
    \lim_{k\to\infty}\mathrm{MSE^x} = \left(\frac{\beta}{\alpha}\right)^{2/\gamma}.
\end{align*}
When $\gamma$ is close to $1$, this is essentially equal to $(\beta/\alpha)^2$. This is what is illustrated in Figure~\ref{fig:1D_lower bound}: for this example, the quantity $\mathrm{MSE}^x$ does not depend on $\alpha$ and is equal to $(\beta/\alpha)^{2/\gamma}$ for all $\alpha$, $\beta$ and $\gamma$. 

\begin{figure}[ht]
    \centering
    \begin{tabular}{ccc}
        \includegraphics[width=0.3\linewidth]{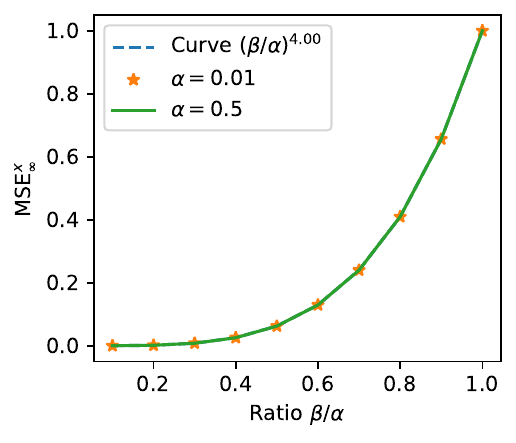}
        &\includegraphics[width=0.3\linewidth]{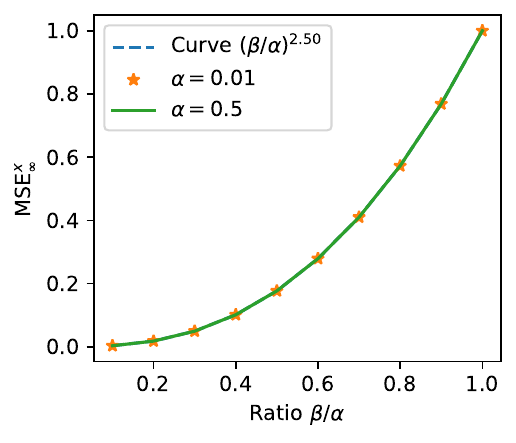}
        &\includegraphics[width=0.3\linewidth]{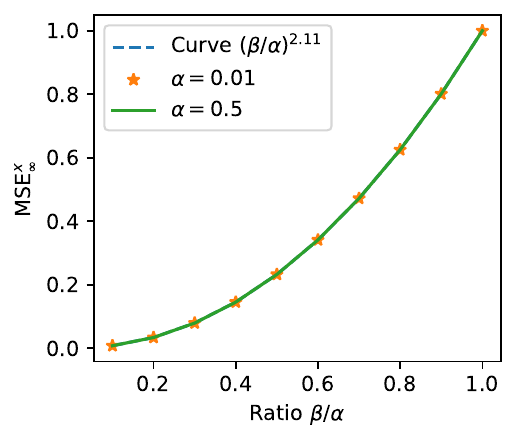}\\
        (a) $\gamma=0.5$
        &(b) $\gamma=0.8$
        &(c) $\gamma=0.95$
    \end{tabular}
    \caption{$\mathrm{MSE^x}$ of the TTSA given by \eqref{eq: beta^2/alpha^2} as a function of the ratio $\beta/\alpha$. We observe that $\mathrm{MSE}^x$ does not depend on $\alpha$ and is equal to $(\beta/\alpha)^{2/\gamma}$ for all $\alpha$, $\beta$ and $\gamma$.  }
    \label{fig:1D_lower bound}
\end{figure}

\paragraph{Discussion on the assumptions} The TTSA \eqref{eq: beta^2/alpha^2} \newcontent{satisfies the assumptions about the kernel and the fast iterate, since here we care about the tracking error of the fast iterate that is of order $\beta^2 \over \alpha^2$ and we got it from the MSE analysis of x without using information about Lipschitzness nor the strong monotonicity of $g$ so A.\ref{assumption: Lip and Mono of g} was not needed in that analysis. In addition to that we relax the assumption on the boundedness of the iterates as we could not find an example that guarantees boundedness.}

The fact that $x_k$ and $y_k$ do not live in a compact set is more problematic but can be solved by slightly modifying the above example. Note that this new example cannot be easily solved in closed form which is why we decided to keep the TTSA \eqref{eq: beta^2/alpha^2} as the main example. To construct this new example, we consider $2$-dimensional variables $x_k, y_k\in\Reals^2$ with $x_0=[1,0]$, $y_0=[1,.1]$ and
\begin{align}
    \label{eq:example circle}
    \begin{aligned}
        x_{k+1} &= x_k - \alpha (x_k - y_k + 0.1 \xi_{k+1})\\
        y_{k+1} &= y_k - \beta \left(\frac{(x_k - y_k)}{|x_k-y_k|^\gamma} + y_k \tanh(\norm{y_k}^2-1) \right),
    \end{aligned}
\end{align}
where $\xi_{k+1}$ is a sequence of Rademacher random variables. 

This example is essentially the same as the TTSA in \eqref{eq: beta^2/alpha^2} with two differences. First, we add a small white noise in the dynamics of $x_k$ (this does not change anything, but shows that the counter-example does not depend on the dynamics being deterministic). Second, and more importantly, the evolution of $y_k$ contains the term $y_k \tanh(\norm{y_k}^2-1)$. The role of this term is to push $y_k$ towards the unit circle. As an example, we display sample trajectories of $(x_k,y_k)$ in Figure~\ref{fig:dynamics on the circle} for various values of $\alpha$ and $\beta=0.1\alpha$. We observe that both iterates $x_k$ and $y_k$ oscillate on the circle (with more noise when $\alpha$ is larger).

\begin{figure}[ht]
    \centering
    \begin{tabular}{ccc}
        \includegraphics[width=0.3\linewidth]{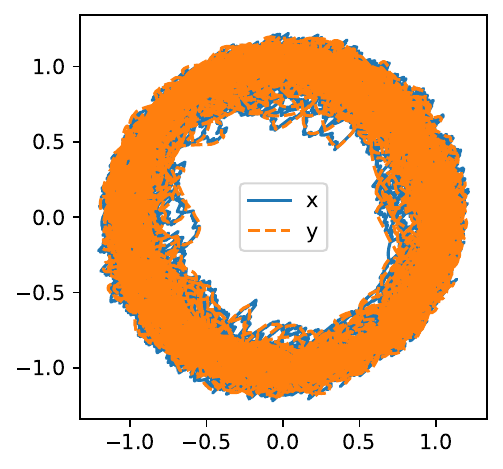}
        &\includegraphics[width=0.3\linewidth]{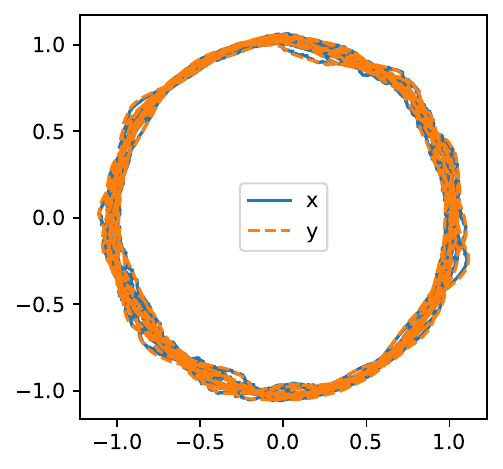}
        &\includegraphics[width=0.3\linewidth]{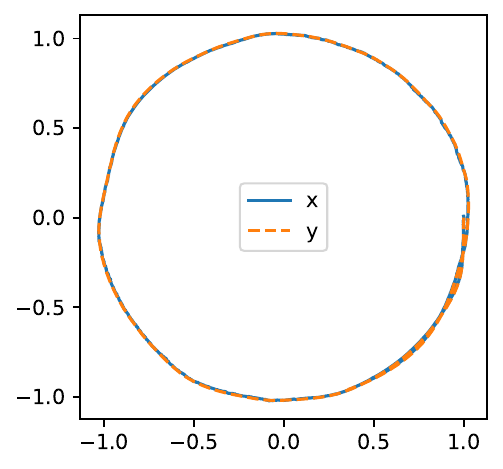}\\
        (a) $\alpha=0.1$
        &(b) $\alpha=0.01$, 
        &(c) $\alpha=0.001$, 
    \end{tabular}
    \caption{Sample trajectories of $x_k$ and $y_k$ for the TTSA \eqref{eq:example circle} for various values of $\alpha$ and $\beta=\alpha/10$}
    \label{fig:dynamics on the circle}
\end{figure}

For this dynamics, we again have $h(y_k)=y_k$. In Figure~\ref{fig:error on the circle}(a), we plot a sample trajectory of the tracking error $x_k-h(y_k)=x_k-y_k$ for various values of $\alpha$ while keeping the ratio $\beta/\alpha=0.1$ fixed.  We observe that, as $\alpha$ goes to $0$, the error does not vanish but seems to converge to a circular dynamics. Similarly to what was shown in Figure~\ref{fig:1D_lower bound} for the TTSA~\eqref{eq: beta^2/alpha^2}, we observe in Figure~\ref{fig:error on the circle}(b) that the MSE of $x$ grows like $(\beta/\alpha)^{2/\gamma}$, independently of $\alpha$. 
\begin{figure}
    \centering
    \begin{tabular}{cc}
        \includegraphics[width=0.4\linewidth]{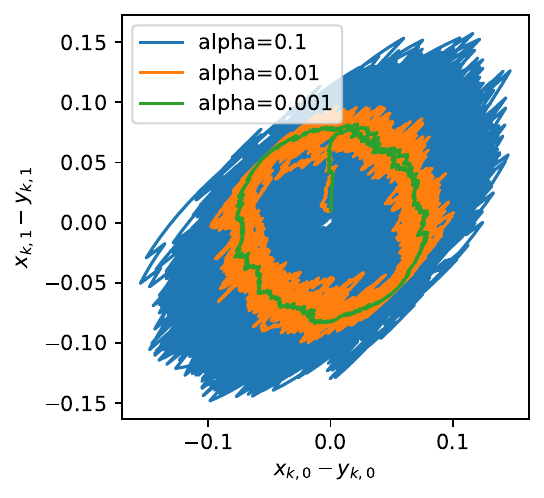}
        & \includegraphics[width=0.4\linewidth]{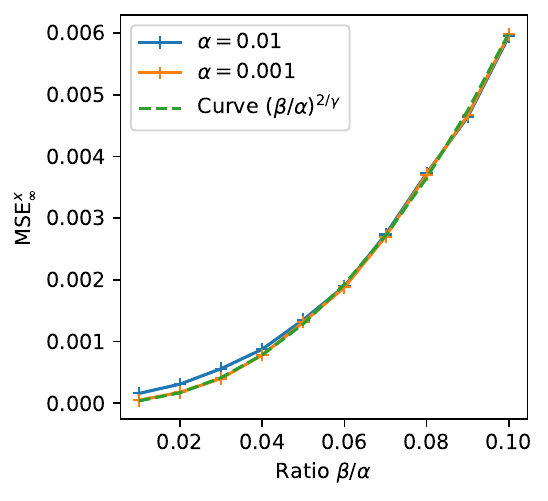} \\
        (a) Error $x_k-h(y_k)$ for $\beta=\alpha/10$
        & (b) MSE as a function of $\beta/\alpha$
    \end{tabular}
    \caption{Error and MSE of the TTSA  \eqref{eq:example circle}. }
    \label{fig:error on the circle}
\end{figure}

Note that this model does not satisfy the strong monotonicity assumption for $\bar{g}$ (A.\ref{assumption: Lip and Mono of g}). It is an open question if one can construct an example that satisfies A.\ref{assumption: Lip and Mono of g}, and whose asymptotic MSE for $x$ has a dependence on $(\beta/\alpha)$.

\end{document}